\newtheorem{theorem}{Theorem}
\newtheorem{lemma}{Lemma}
\newtheorem{proposition}{Proposition}
\newtheorem{definition}{Definition}
\newenvironment{proof}[1][Proof]{\textbf{#1. }}{\ \rule{0.5em}{0.5em}}%
\begin{document}

\title{Component-based  Sketching for Deep ReLU Nets}

\author{Di Wang, Shao-Bo Lin, Deyu Meng, Feilong Cao
% <-this % stops a space
\IEEEcompsocitemizethanks{\IEEEcompsocthanksitem  D. Wang and S. B. Lin are with the Center for Intelligent Decision-Making and Machine Learning, School of Management, Xi'an Jiaotong University. Deyu Meng is with the school of Mathematics and Statistics, Xi'an Jiaotong University. Feilong Cao is with the School of Science, China Jiliang University. (Corresponding author: Shao-Bo Lin, Email: sblin1983@gmail.com)}}

\IEEEcompsoctitleabstractindextext{%
\begin{abstract}
Deep learning has made profound impacts in the domains of data mining and AI, distinguished by the groundbreaking achievements in numerous real-world applications and the innovative algorithm design philosophy. However, it suffers from the inconsistency issue between optimization and generalization, as achieving good generalization, guided by the bias-variance trade-off principle, favors under-parameterized networks, whereas ensuring effective convergence of gradient-based algorithms demands over-parameterized networks. To address this issue, we develop a novel sketching scheme based on deep net components for various tasks. Specifically, we use deep net components with specific efficacy to build a sketching basis that embodies the advantages of deep networks. Subsequently, we transform deep net training into a linear empirical risk minimization problem based on the constructed basis, successfully avoiding the complicated convergence analysis of iterative algorithms.
The efficacy of the proposed component-based sketching is validated through both theoretical analysis and numerical experiments.
Theoretically, we show that the proposed component-based sketching provides almost optimal rates in approximating saturated functions for shallow nets and also achieves almost optimal generalization error bounds.
Numerically, we demonstrate that, compared with the existing gradient-based training methods,  component-based sketching possesses superior generalization performance with reduced training costs.
\end{abstract}

% Note that keywords are not normally used for peerreview papers.

\begin{IEEEkeywords}
Deep learning, component-based sketching, deep ReLU nets, learning theory
\end{IEEEkeywords}}

\maketitle

\IEEEdisplaynotcompsoctitleabstractindextext

\IEEEpeerreviewmaketitle
\section{Introduction}

Deep learning \cite{lecun2015deep} has made a profound impact in the fields of data mining and artificial intelligence, including not only the breakthrough in numerous applications such as image processing \cite{krizhevsky2017imagenet}, go games \cite{silver2016mastering} and speech recognition \cite{kamath2019deep}, but also the philosophy behind algorithm designs in the sense that labor-intensive and professional featured engineering techniques can be replaced by tuning parameters for certain deep neural networks (deep nets). Until now, we have witnessed the explosive developments of deep learning in the pursuit of new network structures \cite{zhou2020universality}, novel optimization algorithms \cite{kingma2014adam}, and further applications \cite{choudhary2022recent}.

Stimulated by the avid research activities in practical fields, understanding deep learning has become a recent theoretical focus. This includes demonstrating the power of depth in approximation theory, deriving optimal generalization error bounds in learning theory, and analyzing the convergence of iterative algorithms in optimization theory. From the function approximation (or representation) perspective, the advantages of deep nets over shallow nets have been demonstrated in providing localized approximation \cite{chui2020realization}, sparse approximation in the spatial domain \cite{lin2018generalization}, sparse approximation in the frequency domain \cite{schwab2019deep}, rotation-invariant approximation \cite{chui2019deep}, and manifold structure representation \cite{shaham2018provable}. From the learning theory perspective, optimal generalization errors of deep nets have been verified in learning hierarchical target functions \cite{kohler2016nonparametric},  non-smooth functions \cite{imaizumi2019deep}, composite functions \cite{schmidt2020nonparametric}, feature-group functions \cite{han2020depth}, and translation-equivalent functions \cite{han2023learning}, which are beyond the capability of shallow learning approaches such as shallow nets \cite{anthony1999neural} and kernel methods \cite{shawe2004kernel}. From the non-convex optimization perspective, the global convergence of stochastic gradient descent (SGD) \cite{allen2019convergence}, block coordinate descent (BCD) \cite{zeng2019global}, and alternating direction method of multipliers (ADMM) \cite{zeng2021admm} for deep nets has been verified. Additionally, the landscapes of deep nets with different activation functions have been investigated in \cite{sun2020global} to show the absence of local minima under certain restrictions on the width. The magic behind deep learning seems to be unlocked by unifying these interesting works. However, the problem is that the requirements for the structures, widths, and depths of deep nets vary significantly across different perspectives, implying that the existing results cannot be directly combined to explain the great success of deep learning.

%Since the nonconvexity of deep learning models \cite{Goodfellow} generally involves  the existence
%of  local minima, saddle points, plateau and even some flat regions,
%training deep nets to realize their theoretically optimal performance is a bit difficult.
\subsection{Tug of war between optimization and generalization}
\begin{figure}[!t]
\centering
\centering
\includegraphics*[width=8.5cm,height=4cm]{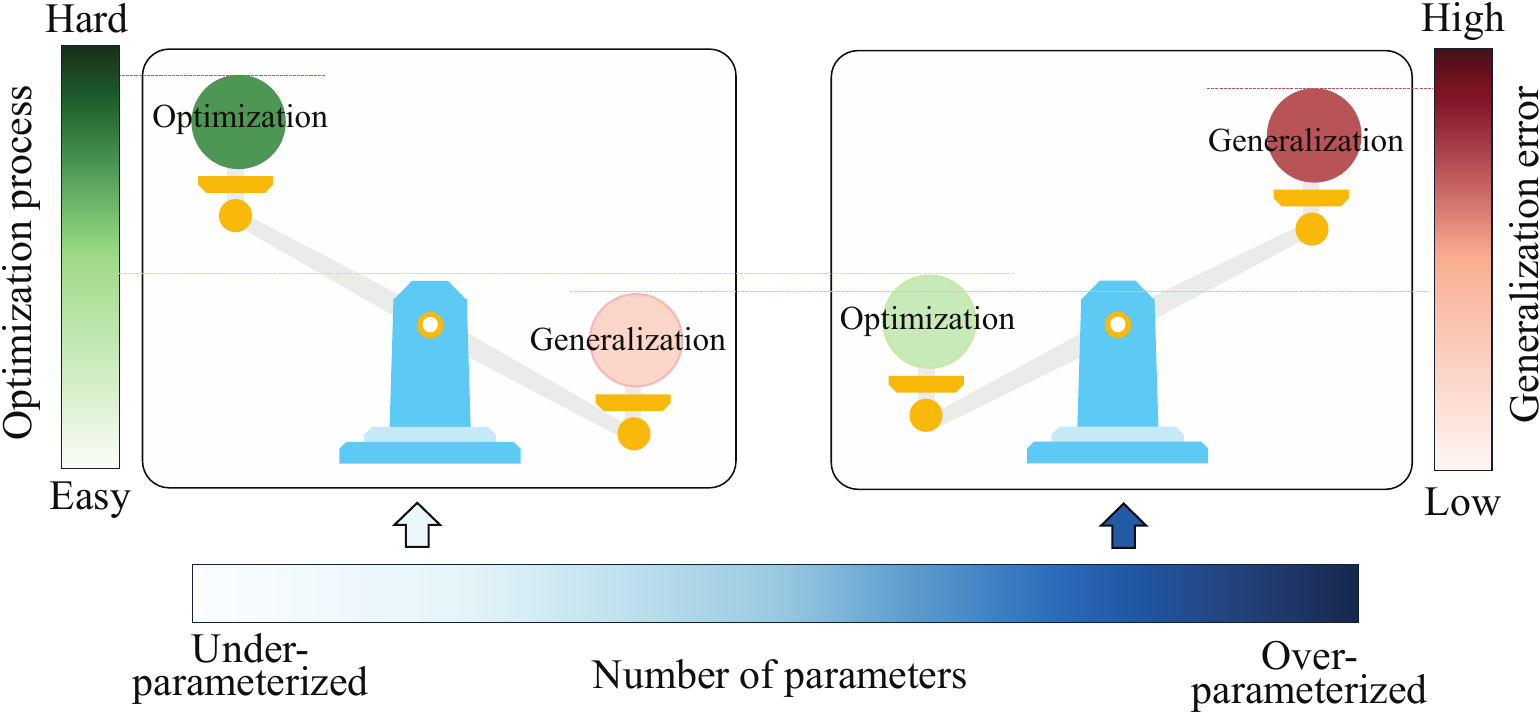}
\caption{Tug of war between optimization and generalization}
\label{Figure:tug-of-war}
\end{figure}

The tug of war between optimization and generalization, as shown in Figure \ref{Figure:tug-of-war}, yields a critical inconsistency in deep learning. This inconsistency arises because good generalization based on the bias-variance trade-off principle \cite{gyorfi2002distribution} requires the networks to be under-parameterized, while good convergence of gradient-type algorithms requires deep nets to be over-parameterized. Here and hereafter, an over-(or under-) parameterized deep net means that the number of free parameters in the network is more (or less) than the size of data. As discussed in \cite{li2022benefit}, the landscape of under-parameterized deep nets involves numerous global minima, saddle points, and even flat regions, and it remains an open question to design provable iterative algorithms that can equip the networks in \cite{kohler2016nonparametric,imaizumi2019deep,schmidt2020nonparametric,han2020depth,han2023learning} to fully realize their theoretical gains. The remedy is either to use over-parameterized deep nets to guarantee the existence of convergent algorithms or to avoid iterative algorithms directly.

Motivated by the nice numerical results in \cite{zhang2021understanding}, some theoretical efforts have been made to demonstrate the benign over-fitting phenomenon of over-parameterized deep nets. For example, \cite{cao2020generalization} succeeded in deriving an estimator with provable generalization error by applying gradient-based algorithms to over-parameterized deep nets under some slightly strict conditions on the data;  \cite{chen2020much} quantitatively characterized the degree of over-parameterization in training a deep net with provable generalization error under similar conditions as \cite{cao2020generalization};
\cite{lin2021generalization} proved that there exist perfect global minima possessing optimal generalization error bounds for implementing empirical risk minimization (ERM) on over-parameterized deep nets; \cite{zhou2024learning} derived a tight generalization bound for some interpolating deep convolutional neural networks. However, the conditions presented in \cite{cao2020generalization,chen2020much} are difficult to verify in practice, while the estimators derived in \cite{lin2021generalization,zhou2024learning} are challenging to obtain, indicating that understanding the generalization performance of over-parameterized deep nets is still in its infancy and the existing results fail to provide a perfect solution to the tug of war between optimization and generalization.

% The power of depth, as shown in \cite{han2020depth},
% demonstrated the necessity of depth in generalization and the power of width, as illustrated in \cite{li2022benefit}, implied the necessity of width in optimization. Under this circumstance, over-parameterized deep nets come into researchers'  sights \cite{zhang2021understanding}.

Due to the excellent approximation performances of constructive neural networks \cite{herrmann2022constructive},
constructing and sketching of (possibly shallow) neural networks are two popular schemes to replace iterative algorithms for under-parameterized deep nets. For instance, \cite{lin2018constructive} constructed a two-hidden-layer neural network to achieve the optimal generalization error of deep nets in learning smooth functions; \cite{liu2022construction} constructed a two-hidden-layer neural network to embody the advantages of deep nets in capturing spatial sparseness; \cite{fang2020learning} proposed a deterministic sketching strategy for training shallow nets; \cite{wang2020random} developed a random sketching scheme for shallow ReLU nets to realize the advantages of shallow nets in learning smooth functions. Despite these solid theoretical analysis, both constructing and sketching in the literature suffer from the well-known saturation phenomenon \cite{yarotsky2017error,petersen2018optimal} in the sense that the derived estimators cannot realize high-order smoothness, even when the smoothness of the target functions is known in practice. Taking \cite{wang2020random} as an example, the proposed random sketching scheme only succeeds in learning one-order smooth functions well, and its learning performance cannot be improved when higher-order smoothness is presented.
The main reason is the lack of constructing or sketching strategies for deep nets with sufficiently many layers to circumvent the aforementioned saturation phenomenon. In a nutshell, learning schemes based on both over-parameterized and under-parameterized deep nets are insufficient to settle the tug of war between optimization and generalization, making the full underlying machinery of deep learning still a mystery.

\subsection{Motivations and road-map}
% In this paper, we focus on designing a novel sketching scheme on deep nets to settle the conflict between optimization and generalization.
Designing a feasible  sketching scheme for deep nets is much more difficult than
that for shallow nets. The difficulty lies in not only  embodying the provable advantages of deep nets over shallow nets, but also reducing the computational complexities of the classical optimization-based algorithms \cite{kingma2014adam,zeng2021admm}. Two crucial stepping stones for sketching, presented in this paper, are: 1) a component basis constructed based on locality-component \cite{chui2020realization} and product-component \cite{yarotsky2017error} for deep nets, and 2) a dimension leverage approach based on spherical minimum energy points \cite{petrushev1998approximation,wang2020random}. The work is motivated by the following three interesting observations.

$\bullet$ {\bf Neuron-based training versus component-based training:}
Neuron-based training that devotes to tuning parameters on each neuron is the most popular training scheme in deep learning.
This scheme dominates in providing different combinations of neurons to settle different learning tasks, but is not adept at explaining the running mechanisms of the training process. More importantly, neuron-based training frequently requires huge computations to determine all parameters of an over-parameterized model to guarantee the convergence \cite{li2022benefit}. In contrast, component-based training is frequently employed in theory to show the power of depth. As shown in Figure \ref{Figure:component-based-training}, component-based training starts with specific deep nets to represent some simple but important feature units such as the square function \cite{yarotsky2017error}, product function \cite{yarotsky2017error}, indicator function \cite{chui2020realization} and logarithmic function \cite{zhang2024classification},
and then finds suitable combinations or compositions of these deep net components to demonstrate the power of depth over shallow nets. Although such a component-based training scheme is rarely used in practice, almost all theoretical verifications in the literature \cite{schwab2019deep,han2020depth,liu2022construction,yarotsky2017error,petersen2018optimal,lin2022universal} are done according to it. These solid theoretical analysis tools inevitably imply a novel and promising training scheme based on deep net components  to realize  theoretical advantages of deep learning and provide interpretability, simultaneously.

\begin{figure}[!t]
\centering
\centering
\includegraphics*[width=8cm,height=3cm]{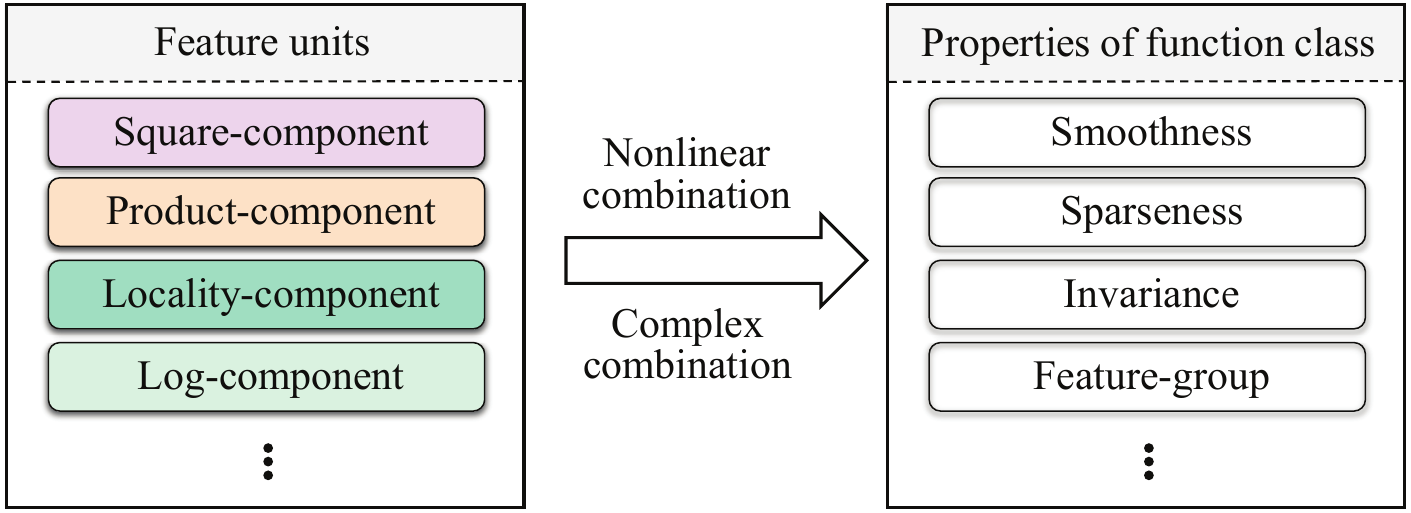}
\caption{Flow of component-based training schemes}
\label{Figure:component-based-training}
\end{figure}

\begin{figure}[!t]
\centering
\centering
\includegraphics*[width=8cm,height=3cm]{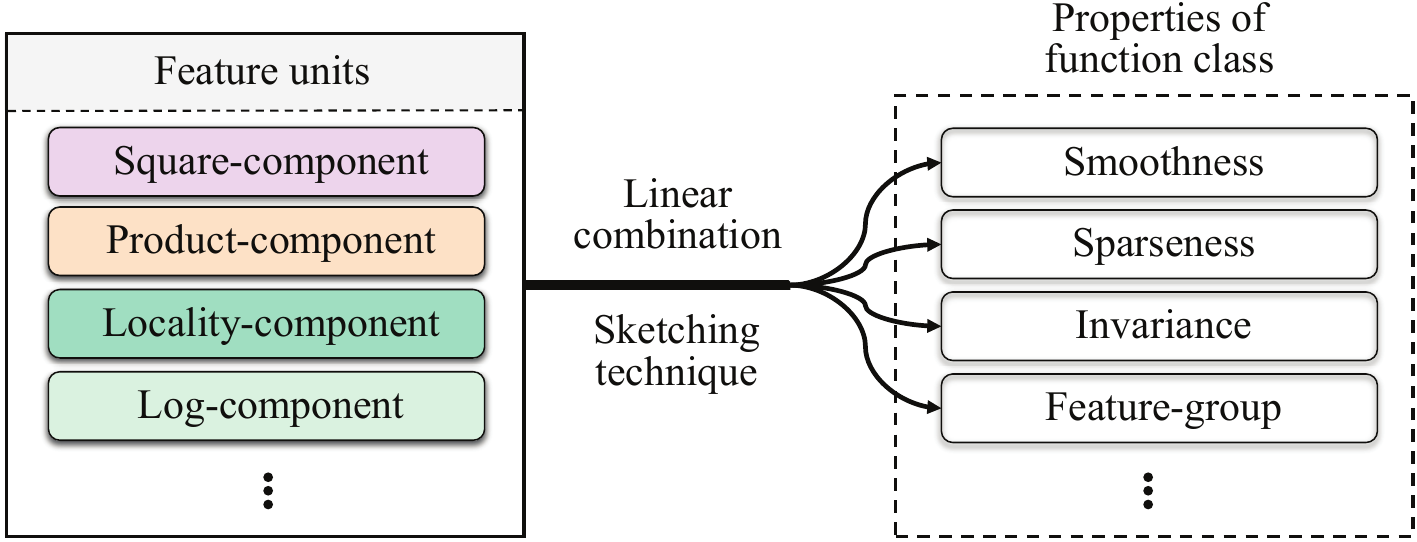}
\caption{Flow of component-based sketching schemes}
\label{Figure:component-based-sketching}
\end{figure}

$\bullet$ {\bf Component-based training versus component-based sketching:}
It was shown in \cite{schwab2019deep,han2020depth,liu2022construction,yarotsky2017error,petersen2018optimal,lin2022universal} that  component-based training is sufficient to embody the excellent performance of deep nets, but the training process still requires solving nonlinear non-convex optimization problems and therefore suffers from the optimization and generalization conflict depicted in Figure \ref{Figure:tug-of-war}. Recalling that the versatility and theoretical advantages of deep nets \cite{guo2019realizing} are built upon the a-priori information  exhibited in the specific learning tasks, we are able to formulate the  data features at first and then sketch suitable deep net components to find an appropriate combination to extract these features.
To be detailed,   figures are frequently associated with local similarity commonly referring to piece-wise smooth features, which can be well represented by combining the locality-component, square-component, and product-component \cite{petersen2018optimal}; gene data, which generally involves some manifold structures, possess  local manifold features that have been successfully derived by combining locality-component, product-component, and manifold-component \cite{shaham2018provable}; applications in computer vision concerning sparseness are related to spatially sparse features that can be tackled by combining locality-component and square-component \cite{chui2020realization}.   All these show that, once a learning task associated with some data features is given, it is not difficult to select suitable deep net components and find a combination of these components directly to settle the task. Such an approach, called
 component-based sketching, is presented in Figure \ref{Figure:component-based-sketching}. Comparing to component-based training, the pros of component-based sketching are low computational costs and high interpretability, while the cons are its functional singularity in the sense that different learning tasks require different sketching schemes.

$\bullet$ {\bf Construction of components in high dimension versus dimension-leverage:}
Training and construction are two feasible approaches to obtain deep net components. The construction approach presented in the literature \cite{yarotsky2017error,chui2020realization,zhang2024classification} is efficient in the sense that it succeeds in doing the same task with shallower and narrower networks. The main flaw of the construction, as shown in \cite{chui2020realization,yarotsky2017error,zhang2024classification}, is the curse of dimensionality. Taking the locality-component for example, the approach developed in \cite{chui2020realization} requires a cubic partition that is extremely difficult to obtain for high dimensional data. The training method, however, as shown in our Simulation 1 in Section 5.1 below, suffers from the optimization-generalization conflict,  resulting in the derived deep nets performing not so well, though it can overcome the curse of dimensionality.
The dimension leverage approach, based on ridge function representation and Kolmogorov-Arnold's extension theorem \cite{petrushev1998approximation,fang2020learning},  comes to our attention. The key idea of dimension leverage is to construct deep net components for univariate input at first, and then generate a set of minimal energy points as the weights of ridge functions. Taking the  locality-component for example again, the dimension leverage approach transforms the cubic partition for the construction approach into the cone-type partition, just as Figure \ref{Figure:cone-partition} purports to show.

\begin{figure}[t]
  \centering
  \begin{tabular}{m{3.5cm}<{\centering} m{0.5cm}<{\centering} m{3.9cm}<{\centering}}
  \includegraphics[width=3.5cm,height=3.4cm]{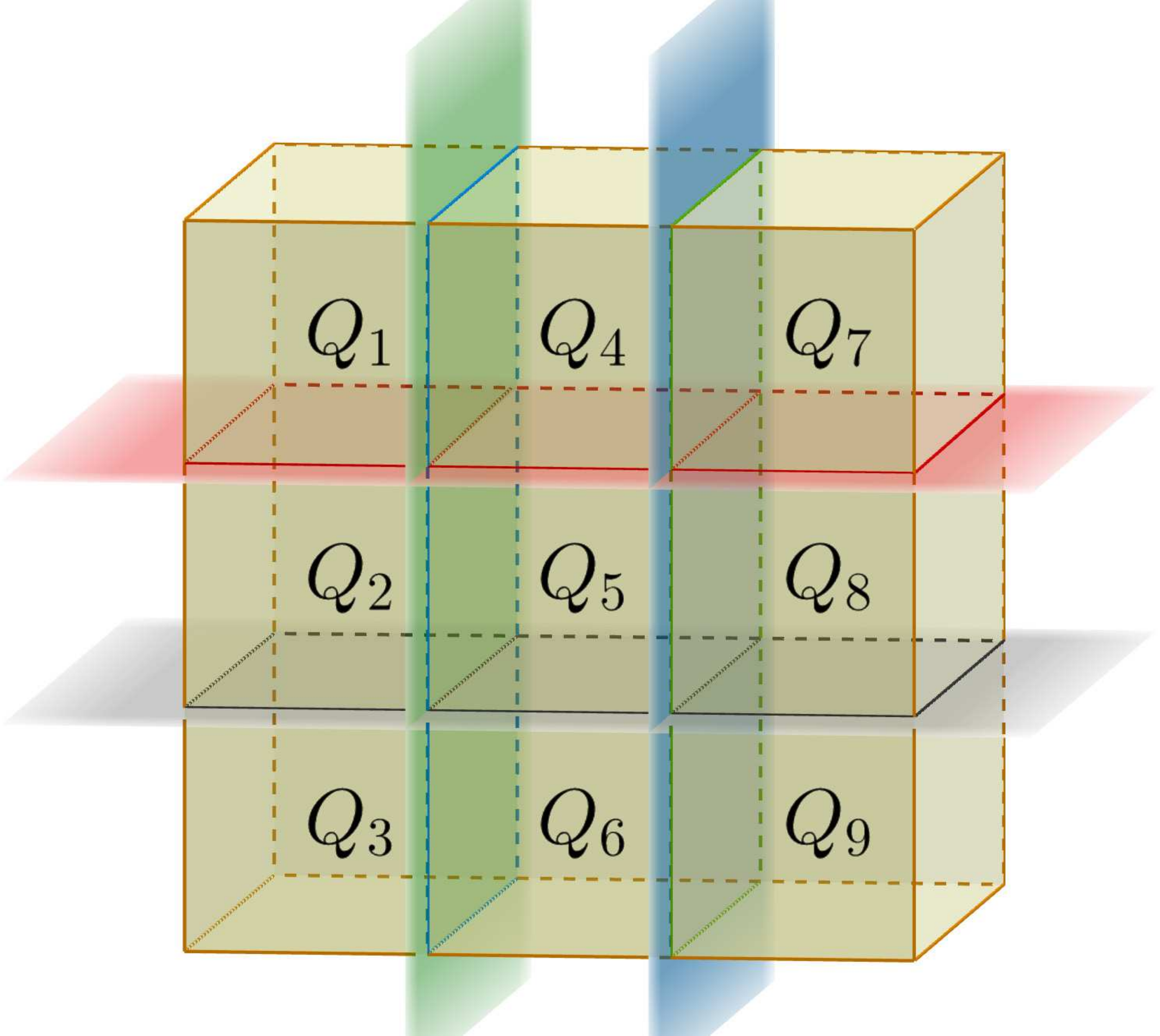}
  &\Large{$\Rightarrow$}
  &\includegraphics[width=3.9cm,height=3.8cm]{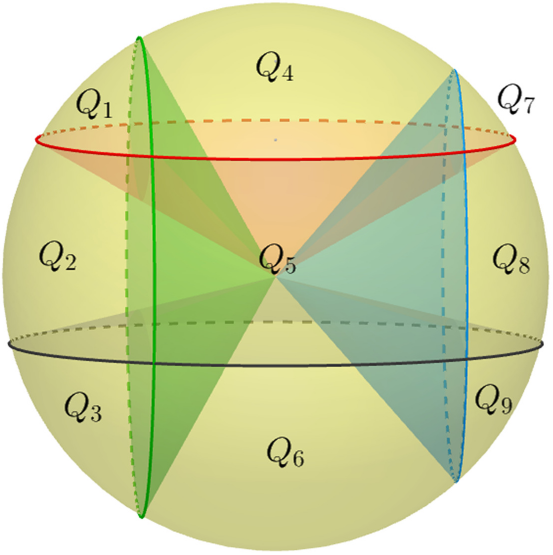}
  \end{tabular}
  \caption{Cubic locality versus cone-type locality}\label{Figure:cone-partition}
\end{figure}

As shown in Figures \ref{Figure:component-based-training} and \ref{Figure:component-based-sketching}, it is difficult to design a sketching scheme that is universal for different learning tasks. As the smoothness of high order is the key limitation of shallow nets \cite{yarotsky2017error,petersen2018optimal}, we aim to develop a feasible and efficient component-based sketching scheme to embody the high order smoothness and therefore circumvent the saturation phenomenon presented in the classical sketching or construction approaches in \cite{fang2020learning,wang2020random,liu2022construction}. Our road-map shown in Figure \ref{Figure:roadmap} contains mainly the following three steps.

\textbf{Step 1. Constructing univariate deep nets}:
Based on the delicate construction of the locality-component \cite{chui2020realization}, square-component \cite{yarotsky2017error}, and product-component \cite{han2020depth}, we construct a univariate deep net to extract smoothness features of different orders by introducing several parameters to control the locality, accuracy of product computation, and smoothness.

\textbf{Step 2. Dimension leverage}:
%Sketch
Motivated by the dimension leverage approach in \cite{petrushev1998approximation}, we
generate a set of  minimum energy points $\{\xi_\ell\}_{\ell=1}^N$ for some $N\in\mathbb N$ on the unit sphere and use $\xi_\ell \cdot x$  as the input of the constructed univariate deep nets and succeed in building the sketching basis.

\textbf{Step 3. ERM on sketching basis}:
Fed with the sketching basis, the nonlinear neural network training is transformed into linear problems, and
the classical empirical risk minimization (ERM) is  efficient for deriving the weights.
We then   determine the weights by solving a linear ERM.

\begin{figure}[t]
\centering
\centering
\includegraphics*[width=6cm,height=5.5cm]{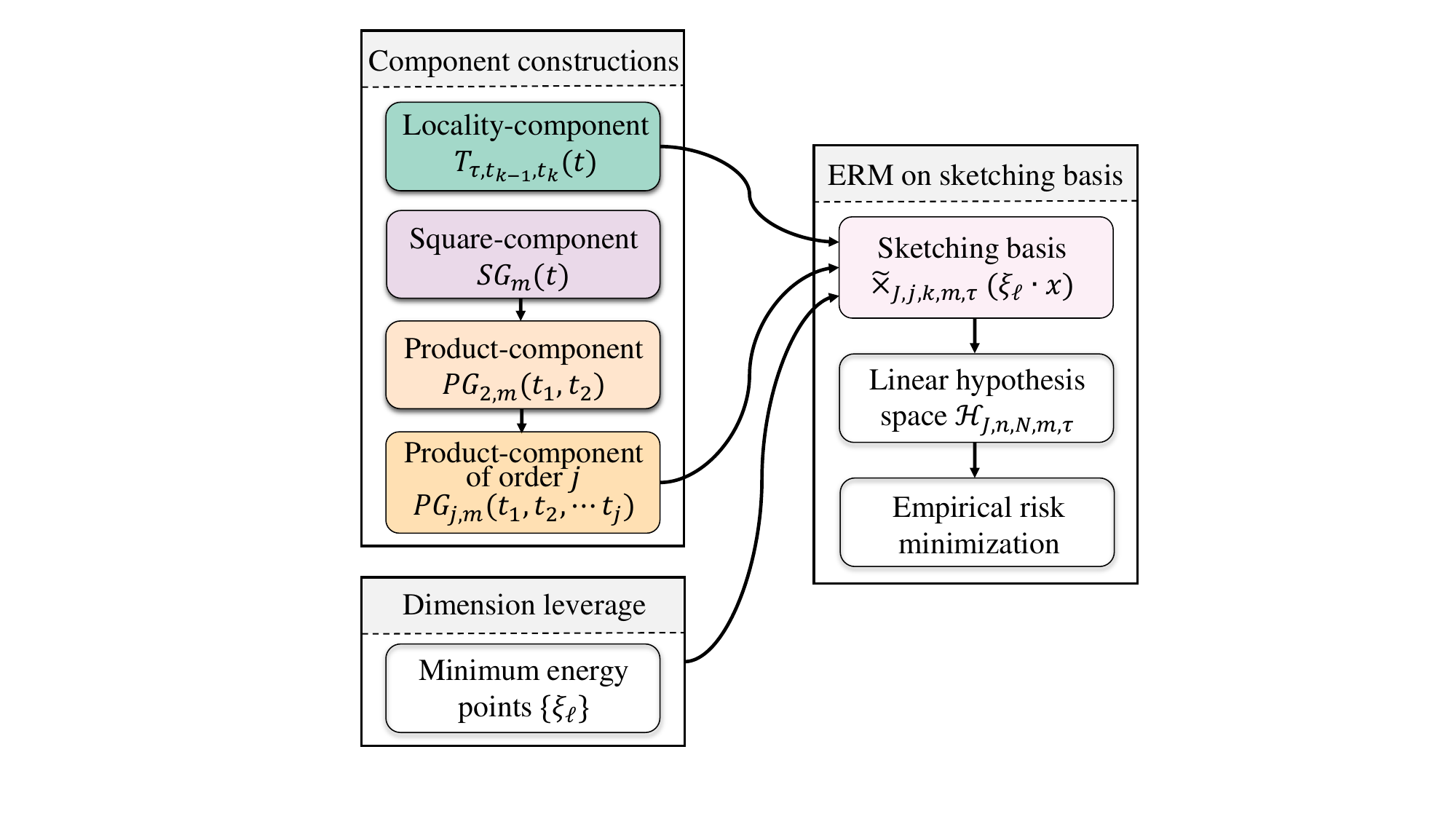}
\caption{Road-map of the component-based sketching schemes for deep nets}
\label{Figure:roadmap}
\end{figure}

 \subsection{Main contributions}

% The existing construction approach \cite{liu2022construction} and sketching scheme \cite{wang2020random} transform the nonlinear non-convex neural networks training problems into linear optimization problems, and thus partly settle the conflict between optimization and generalization for deep learning. In fact, it can be found in \cite{liu2022construction,wang2020random} that the derived generalization error bounds are even better than that for over-parameterized deep nets under stricter restrictions on data \cite{cao2020generalization,frei2019algorithm}. However, the construction and sketching schemes in the literature \cite{wang2020random,liu2022construction} are only available to shallow nets, implying strong saturation phenomenon in the learning process.

This paper proposes a novel component-based sketching scheme to equip deep nets based on different deep net components for specific tasks, and successfully circumvents the saturation phenomenon  exhibited in existing neuron-based construction approaches \cite{liu2022construction} and neuron-based sketching approaches \cite{wang2020random}. Additionally, it also avoids strict restrictions on data distributions for the training of neuron-based over-parameterized deep nets \cite{cao2020generalization}. The main contributions can be summarized as follows.

$\bullet$  Methodology novelty: We develop a novel sketching scheme based on deep net components for different tasks to obtain a deep net estimate. The basic idea is novel in two folds. On one hand, rather than operating on a single neuron as in classical neural networks training, our sketching scheme starts with deep net components that have specific efficacy and finds a delicate combination of them to build the sketching basis, thereby embodying the advantages of deep nets. On the other hand,  based on the constructed sketching basis, a nonlinear and non-convex deep net training problem is transformed into a linear ERM problem, which not only significantly reduces the computational burden but also avoids the complicated convergence analysis for existing iterative algorithms.

$\bullet$ Theoretical novelty: We conduct expressivity analysis and optimal generation error verifications for the proposed sketching scheme. For expressivity analysis, we rigorously prove that the span of the sketching basis is sufficient to avoid the saturation phenomenon. This implies that complicated nonlinear and non-convex training of deep nets is not necessary in the corresponding learning tasks.
% For stability analysis, we utilize a matrix concentration inequality \cite{vershynin2018high} to show that, under suitable orthogonalization, the matrix generated by the sketching basis and given data is stable in the sense that its singular values are near to 1.
For generalization error analysis, we show that the proposed component-based sketching scheme for deep nets achieves almost optimal generalization errors, provided the associated hyper-parameters are appropriately determined. Furthermore, we show that the well-known cross-validation approaches \cite{gyorfi2002distribution} succeed in determining these hyper-parameters, while providing the same almost optimal generalization error rates.

$\bullet$ Numerical novelty: We evaluate the proposed component-based sketching scheme against classical neuron-based training schemes for both shallow and deep nets,  and the random sketching scheme for shallow nets \cite{wang2020random},  in the context of toy simulations and real-world data experiments. The experimental results reveal that, in most cases, the proposed method achieves superior generalization performance, while its training time is comparable to, or even less than, that of shallow networks. All the numerical results validate our theoretical assertions and confirm that the component-based sketching technique is both feasible and efficient.
% We compare the proposed random sketching scheme with 12 popular back-propagation (BP)
% type algorithms. The experimental results show that as far as the
% %testing
% { test}
% accuracy is concerned, random sketching
% is
% %comparable with BP type algorithms
% {  comparable with these BP type algorithms}. However, the training time of random sketching  is much less than
% %that of BP algorithms,
% { these algorithms},
% showing the advantage of implementing random sketching scheme on
% %neural networks.
% {  the neural networks training}.
% All the numerical
% results verify our theoretical assertions and show that combining neural networks training with random sketching
% is feasible and efficient.

The rest of the paper is organized as follows. In the next section, we present the detailed construction of deep net components. In Section \ref{Sec.algorithm}, we introduce the component-based sketching scheme, as well as the construction of the sketching basis. In Section \ref{Sec.theory}, we study the theoretical behaviors of the component-based sketching scheme, including the expressivity of the sketching basis and almost optimal generalization error bounds. Section \ref{Sec.experiments} provides numerical experiments to show the
efficiency of the proposed method. In Section \ref{Sec.Proofs}, we present proofs of our theoretical assertions.
% {We conclude our study in the last section}.

\section{Construction of Deep Net Components}
In this section, we aim to  construct  several deep net components to embody the advantage of deep nets.

\subsection{Power of depth of deep nets}
Let $L\in\mathbb N$ be the depth of a deep net, and $d_\ell \in \mathbb{N}$ be the width of the $\ell$-th hidden layer for $\ell=1,\dots,L$, with $d_0=d$. Denote the affine operator $\mathcal J_\ell:\mathbb R^{d_{\ell-1}}\rightarrow\mathbb R^{d_\ell}$ by $\mathcal J_\ell(x):=W_\ell  x+{\bf b}_\ell$, where $W_\ell$ is a $d_\ell\times d_{\ell-1}$ weight matrix and ${\bf b}_\ell\in\mathbb R^{d_\ell}$ is a bias vector. For the ReLU function $\sigma(t):=\max\{t,0\}$, write  $\sigma(x)=(\sigma(x^{(1)}),\dots,\sigma(x^{(d)}))^T$ for $x=(x^{(1)},\dots,x^{(d)})^T$.
Define  an $L$-layer deep net  by
\begin{equation}\label{deep-net}
     \mathcal N_{d_1,\dots,d_L}(x)
     = {\bf a}\cdot \sigma\circ \mathcal J_L \circ \sigma\circ \mathcal J_{L-1} \circ \dots \circ \sigma\circ\mathcal J_1(x),
\end{equation}
where ${\bf a} \in\mathbb R^{d_L}$. The structure of $ \mathcal N_{d_1,\dots,d_L}$ is determined by weight matrices $W_\ell$ and bias vectors ${\bf b}_\ell$, $\ell=1,\dots,L$. Denote by $\mathcal H_{L,n}$ the set of $\mathcal N_{d_1,\dots,d_L}$ with $L$ layers and $n$ free parameters.

% By the help of the pseudo-dimension, we introduce the definition of  almost rate-optimal approximation.
% Without loss of generality, we quantify the approximation on $L^2(\mathbb B_{1/2}^d)$, in which $\mathbb B_{1/2}^d$ denotes the ball centered on the origin with radius $1/2$ and can be generalized to any compact sets easily. The reason to set $\mathcal X=\mathbb B_{1/2}^d$ is for the sake of brevity.

% \begin{definition}\label{Def:Optimal-App}
% $\mathcal B$ be a Banach space and $\mathcal V,\mathcal U$ be two  sets
% in $\mathcal B$. If
% $$
%      \mbox{dist}(U,V,\mathcal B)=\inf_{V}
% $$

% \end{definition}

The power of depth has been widely studied  \cite{schmidt2020nonparametric,han2020depth}, showing that the generalization error of deep learning is much smaller than that of the classical shallow learning in the framework of learning theory \cite{gyorfi2002distribution,cucker2007learning}, in which the samples in $D=\{(x_i,y_i)\}_{i=1}^{|D|}$ with $x_i\in\mathcal X$ and $y_i\in\mathcal Y$, are assumed to be drawn independently and identically according to an unknown but definite distribution $\rho=\rho_X\times\rho(y|x)$ with $\rho_X$   the marginal distribution and $\rho(y|x)$  the conditional distribution. The distribution $\rho_X$ corresponds to a  $\rho_X$ square integrable space $L_{\rho_X}^2$ endowed with the norm $\|\cdot\|_\rho$, while the distribution $\rho(y|x)$ is associated with the well-known regression function $f_{\rho }(x):=\int_{\mathcal Y}yd\rho (y|x)$ \cite{cucker2007learning} that minimizes the generalization error
$
             \mathcal{E}(f):=\int_{\mathcal Z}(f(x)-y)^{2}d\rho,
$
where $\mathcal Z:=\mathcal X\times\mathcal Y$. Since $\rho$ is unknown, $f_\rho$ cannot be achieved in practice, and the quality of any estimate  $f$ is measured by \cite{cucker2007learning}
\begin{equation}\label{equality}
          \mathcal{E}(f)-\mathcal{E}(f_{\rho })=\Vert f-f_{\rho }\Vert _{\rho
           }^{2}.
\end{equation}
Without loss of generality, we assume that the input space is $\mathcal X=\mathcal B^d_{1/2}$, the ball centered at the origin of radius $1/2$, and the output space $\mathcal Y\in [-M,M]$ for some $M>0$.

Denote by $\mathcal M(\Theta,\Lambda)$ the set of all distributions satisfying $\rho_X\in\Lambda$ and $f_\rho\in \Theta\subseteq L_{\rho_X}^2$, where $\Lambda$ is some set of marginal distributions and $\Theta$ is the set of regression functions. We enter into a competition over $\mathcal U_D$,  which denotes the class of all functions derived from the data set $D$, and define
\begin{eqnarray*}
          e(\Theta,\Lambda)
           := \sup_{\rho\in \mathcal M(\Theta,\Lambda)}\inf_{f_D\in \mathcal U_D}\mathbf E(\|f_\rho-f_{D}\|^2_\rho).
\end{eqnarray*}
Due to the definition, $e(\Theta,\Lambda)$ represents the optimal generalization performance that a learning scheme based on $D$ can achieve for $\mathcal M(\Theta,\Lambda)$,  and thus it provides a baseline to analyze the generalization performance of a learning scheme. For a  given estimate $f_D$, denote
$$
    \mathcal V_{\Theta,\Lambda}(f_D):=\sup_{\rho\in \mathcal M(\Theta,\Lambda)}\mathbf E(\|f_\rho-f_{D}\|^2_\rho)
$$
as its generalization error for $\mathcal M(\Theta,\Lambda)$. We then present the following definition of (almost) optimal estimators.
\begin{definition}\label{Def:optimal-algorithm}
Let  $\mathcal M(\Theta,\Lambda)$ be  the set of all distributions
satisfying  $\rho_X\in\Lambda$ and $f_\rho\in \Theta$, and $f_D$ be an estimate derived based on $D=\{(x_i,y_i)\}_{i=1}^{|D|}$ that are i.i.d. drawn according to $\rho=\rho_X\times\rho(y|x)$.
% If for an arbitrary $\varepsilon>0$, there holds
% \begin{equation}\label{optimal-algorithm}
%     e(\Theta,\Lambda)\leq  \mathcal V_{\Theta,\Lambda}(f_D)\leq (1+\varepsilon) e(\Theta,\Lambda),
% \end{equation}
% $f_D$ is then said to be an optimal estimator for $\mathcal M(\Theta,\Lambda)$. Denote   the set of all optimal estimators by $\mathcal Opt(\Theta,\Lambda)$.
If
\begin{equation}\label{rate-optimal-algorithm}
     \mathcal V_{\Theta,\Lambda}(f_D)\sim e(\Theta,\Lambda),
\end{equation}
then $f_D$ is said to be a rate-optimal estimator (ROE) for $\mathcal M(\Theta,\Lambda)$.
Denote by $\mathcal Ropt(\Theta,\Lambda)$ the set of all ROE.
If
\begin{equation}\label{almost-optimal-algorithm}
    e(\Theta,\Lambda)\leq  \mathcal V_{\Theta,\Lambda}(f_D)\leq \tilde{c}e(\Theta,\Lambda)\log^v(|D|)
\end{equation}
for $\tilde{c},v>0$, then $f_D$ is said to be an almost rate-optimal estimator (AROE) for $\mathcal M(\Theta,\Lambda)$,
and we denote by $\mathcal Opt^{\tilde{c},v}(\Theta,\Lambda)$ the set of all AROEs with coefficient $\tilde{c}$ and exponent $v$.
\end{definition}

% \begin{definition}\label{Def:Optimal-Learn}
%     aa
% \end{definition}

It should be highlighted that ROE or AROE is defined for specific learning tasks, i.e., $\mathcal M(\Theta,\Lambda)$, without which the optimality is meaningless. For instance, the random sketching scheme developed in \cite{wang2020random} serves as an AROE for one-order smooth regression functions but falls short of optimality for high-order smooth regression functions.
The excellent learning performance of deep learning has been verified in vast literature by
demonstrating that the derived estimators of deep learning are either ROE or AROE for numerous learning tasks.  Table \ref{Tab:existing} presents some typical results and shows that
deep learning succeeds in deriving AROE for different learning tasks, provided that the structure, depth, and width are appropriately determined. However,
since diverse learning tasks often necessitate distinct network structures,
the architecture of the network has recently become a key focus for neuron-based training of
deep learning \cite{zhou2020universality}, making network design labor-intensive and the training process time-consuming.
 \begin{table}[t]
    \begin{center}
	\begin{tabular}{|c|c|c|c|c|}
		\hline
		Ref. & Targets & Depth  & Structure &Quality \\
		\hline
		\cite{schmidt2020nonparametric} &Composite functions& $\log |D|$ &  fully connected & AROE\\
		\hline
		\cite{chui2019deep} &radial function &  $\mathcal O(d)$ & Tree structure &  AROE\\
		\hline
		\cite{chui2020realization} & sparseness & $\mathcal O(d)$ &    sparsely connected & AROE\\
		\hline
		\cite{han2020depth} & feature-group & $\mathcal O(d)$ &   sparsely connected & AROE \\
		\hline
		\cite{han2023learning} & translation-invariance & $Poly(|D|)$ & convolution &AROE   \\\hline
  \cite{lin2023optimal} & smoothness & $Poly(|D|)$ & convolution &AROE   \\
		\hline		
    \end{tabular}
    \end{center}
      \caption{Theoretical verifications of the power of depth in learning theory}\label{Tab:existing}
\end{table}

\subsection{Construction of  deep net components}

% According to \cite{schmidt2020nonparametric,han2020depth,chui2020realization}, the main tools in theoretical demonstration of the power of depth are the fact that deep nets are capable of extracting some feature units, such as the square function \cite{yarotsky2017error}, product of functions \cite{yarotsky2017error}, indicator functions \cite{chui2020realization}, logistic function \cite{zhang2024classification}, and so on.
% For example, to approximate smooth functions, it is easy to show that deep nets are able to approximate polynomials well. Since $t=\sigma(t)-\sigma(-t)$, it is easy to construct an approximation of $t^3=t^2\times t$ by first using the capabilities of deep nets to approximate the square function and then approximating the product of the resulting square with $t$.  The same way yields that each monomial, and hence polynomials can be well approximated by deep nets.
% Therefore, the excellent performance of deep nets in theory is attributed to some deep net components. In particular,
We call the deep net that is used to approximate the square function, product function, and indicator functions as the square-component, product-component, and locality-component, respectively. As the aim of this paper is to show the power of depth in learning smooth functions to circumvent the saturation phenomenon  \cite{liu2022construction,wang2020random}, we provide detailed constructions for the square-component, product-component, and locality-component. With the help of the  dimension leveraging scheme, we focus solely on deep net components for univariate inputs, although obtaining components for high-dimensional inputs is also straightforward \cite{chui2020realization,yarotsky2017error,zhang2024classification}.

% With the help of the power of depth in extracting feature unit exhibited in Table \ref{Tab:existing}, deep nets have been proved to perform much better than shallow nets in both approximation and learning. The main reason is that some complicated features, such as smoothness \cite{yarotsky2017error}, piece-wise smooth \cite{petersen2018optimal}, local manifold structure \cite{shaham2018provable} and spatially sparseness \cite{chui2020realization}, can be easily extracted by appropriately combining the basic features in Table \ref{Tab:existing}. For example, since $t=\sigma(t)-\sigma(-t)$, it

% \subsection{Construction of deep net components}

We first construct a univariate locality recognizer using neural networks.
% Let  $\Lambda_{n}:=\{t_0,t_1,\dots,t_{n}\}\in\mathbb J:=[-1/2,1/2]$ with     $t_k=-1/2+k/n$.
Let $\Lambda_{n}:=\{t_0,t_1,\dots,t_{n}\}$ with $t_k=-1/2+k/n$ for $k=0,1,\dots,n$.
Define a trapezoid-shaped function \cite{chui2020realization,wang2020random}
$T_{\tau,t_{k-1},t_k}$ with a parameter $0<\tau\leq 1$ by
\begin{eqnarray}\label{Localized-identifier}
    T_{\tau,t_{k-1},t_k}(t)
     &:= & \frac1\tau \{\sigma(t-t_{k-1}+\tau)-\sigma(t-t_{k-1})-\sigma(t-t_k) \nonumber\\
    &+ &\sigma(t-t_k-\tau) \}.
\end{eqnarray}
It is easy to check that the constructed network is of depth 1 and width 4. The following proposition directly follows from the definition of $\sigma$.
  % Figure \ref{Figure:func_T} illustrates the network architecture of $   T_{\tau,t_{k-1},t_k}$ for a given $k$.
% \begin{figure}[!t]
% \centering
% \centering
% \includegraphics*[width=7.8cm,height=4.5cm]{}
% \caption{Network architecture of the function $T_{\tau,t_{k-1},t_{k}}(t)$.}
% \label{Figure:func_T}
% \end{figure}

\begin{proposition}\label{Prop:localized}
Let $\sigma$ be the ReLU function, $t_k=-1/2+k/n$, and $T_{\tau,t_{k-1},t_k}$ be defined in \eqref{Localized-identifier}. Then,
\begin{equation}\label{Deteailed trapezoid}
    T_{\tau,t_{k-1},t_{k}}(t)=\left\{\begin{array}{cc}
   1,&\mbox{if}\ t_{k-1 }\leq t\leq t_{k},\\
   0,& \mbox{if}\ t\geq t_{k}+\tau,\ \mbox{or}\ t\leq t_{k-1}-\tau,\\
   \frac{t_{k}+\tau-t}{\tau}, &\mbox{if}\ t_{k}<t<t_{k}+\tau,\\
   \frac{t-t_{k-1}+\tau}{\tau}, & \mbox{if}\ t_{k-1}-\tau<t<t_{k-1}.
   \end{array}
   \right.
\end{equation}
\end{proposition}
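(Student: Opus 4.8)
The plan is a direct case analysis driven by the breakpoint structure of the four ReLU summands in \eqref{Localized-identifier}. Each term $\sigma(t-c)$ is piecewise linear with its only kink at $t=c$, so $T_{\tau,t_{k-1},t_k}$ is piecewise linear with kinks contained in $\{t_{k-1}-\tau,\ t_{k-1},\ t_k,\ t_k+\tau\}$. The first thing I would record is that, since $t_{k-1}<t_k$ (indeed $t_k-t_{k-1}=1/n$) and $\tau>0$, these four abscissae are strictly increasing:
$$ t_{k-1}-\tau \;<\; t_{k-1} \;<\; t_k \;<\; t_k+\tau. $$
Note that only $\tau>0$ is used here; the bound $\tau\le 1$ plays no role in this proposition. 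Hence $\mathbb R$ is partitioned into the five intervals $(-\infty,t_{k-1}-\tau]$, $[t_{k-1}-\tau,t_{k-1}]$, $[t_{k-1},t_k]$, $[t_k,t_k+\tau]$, $[t_k+\tau,\infty)$, which are exactly the regimes appearing in \eqref{Deteailed trapezoid}, and on each of them every summand $\sigma(t-c)$ equals either $0$ or $t-c$ according to the sign of $t-c$.

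Next I would evaluate the sum on each of these intervals using the ordering above. For $t\le t_{k-1}-\tau$ all four arguments are $\le 0$, so $T_{\tau,t_{k-1},t_k}\equiv 0$. For $t_{k-1}-\tau\le t\le t_{k-1}$ only the first argument is $\ge 0$, giving $\tau^{-1}(t-t_{k-1}+\tau)$. For $t_{k-1}\le t\le t_k$ the first two arguments are $\ge 0$ and the last two are $\le 0$, so the linear parts cancel and $\tau^{-1}\big[(t-t_{k-1}+\tau)-(t-t_{k-1})\big]=1$. For $t_k\le t\le t_k+\tau$ the first three arguments are $\ge 0$, whence $\tau^{-1}\big[(t-t_{k-1}+\tau)-(t-t_{k-1})-(t-t_k)\big]=\tau^{-1}(t_k+\tau-t)$. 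Finally, for $t\ge t_k+\tau$ all four arguments are $\ge 0$ and $\tau^{-1}\big[\tau-(t-t_k)+(t-t_k-\tau)\big]=0$. Matching these pieces at the common endpoints (value $0$ at $t_{k-1}-\tau$, value $1$ at $t_{k-1}$ and at $t_k$, value $0$ at $t_k+\tau$) shows $T_{\tau,t_{k-1},t_k}$ is continuous, so the open/closed conventions in \eqref{Deteailed trapezoid} are immaterial and the five computed expressions are precisely those asserted.

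There is essentially no substantive obstacle here, as the text indicates: the conclusion follows immediately from the definition of $\sigma$. The only point requiring care is the bookkeeping of the signs of the four shifted arguments on each interval — in particular invoking the displayed ordering so that, for instance, $t-t_k\le t-t_{k-1}$ on $[t_{k-1},t_k]$ — together with the endpoint check that reconciles the partially open and partially closed cases in the statement.
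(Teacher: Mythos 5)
Your proof is correct and is exactly the case analysis that the paper has in mind when it states that the proposition ``directly follows from the definition of $\sigma$'' (the paper omits the details entirely). The evaluation of the four ReLU terms on each of the five intervals determined by the ordered breakpoints $t_{k-1}-\tau<t_{k-1}<t_k<t_k+\tau$ checks out, as does the observation that only $\tau>0$ is needed.
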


Proposition \ref{Prop:localized} shows that, with sufficiently small $\tau$, the constructed network $T_{\tau,t_{k-1},t_k}$ approximates the indicator function $\mathcal I_{[t_{k-1},t_k]}$ on the interval $[t_{k-1},t_k]$, implying that $T_{\tau,t_{k-1},t_k}$ is capable of realizing the position information of the input.
As shown in \eqref{Deteailed trapezoid} and Figure \ref{Figure:localized-1}, the locality of $T_{\tau,t_{k-1},t_{k}}$ depends heavily on $\tau$ in the sense that smaller $\tau$ implies better locality.
 \begin{figure}[!t]
\centering
\centering
\includegraphics*[width=7cm,height=5cm]{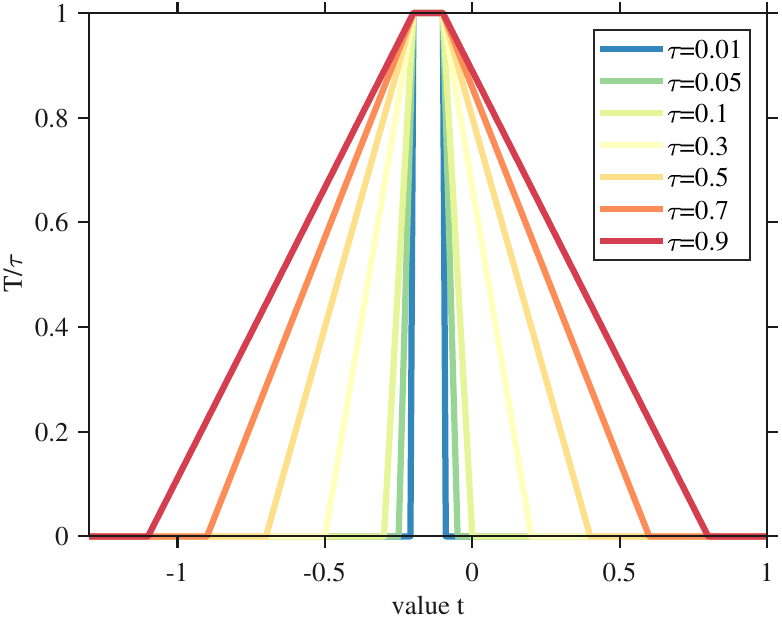}
\caption{Locality for $T_{\tau,t_k,t_{k+1}}$ with different $\tau$}
\label{Figure:localized-1}
\end{figure}

We then construct a deep net to act as the square-component. For $s\in\mathbb N$, define $g(t)=2\sigma(t)-4\sigma(t-1/2)$ and
$$
    g_s(t)=g\circ g\circ\dots\circ g(t).
$$
For $m\in\mathbb N$, define ${SG}_0(t)=t$ and
\begin{equation}\label{Def.square-component}
    {SG}_m(t)=t-\sum_{s=1}^m\frac{g_s(t)}{2^{2s}}.
\end{equation}
Recalling $t=\sigma(t)-\sigma(-t)$, it follows from the definition of $g_s$ that ${SG}_m$ is a deep net with $m$ layers and $\mathcal O(m^2)$ free parameters. The following proposition, whose proof is a slight modification of \cite[Lemma 1]{yarotsky2017error}, shows that ${SG}_m(t)$ can approximate $t^2$ well. Figure \ref{Figure:func_fm} exhibits the network architecture of the square-component.

\begin{proposition}\label{Prop:square-component}
For any $m\in\mathbb N$, $a,b\in\mathbb R$, let $SG_m$ be defined by \eqref{Def.square-component}, then
\begin{equation}\label{square-component-bound}
     |{SG}_m(t)-t^2|\leq c_12^{-2m},\qquad t\in[a,b],
\end{equation}
where $c_1$ is a constant depending only on $a,b$.
\end{proposition}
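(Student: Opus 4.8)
The plan is to identify $SG_m$ with Yarotsky's piecewise-linear interpolant of $t\mapsto t^2$ on a dyadic grid and then apply the elementary error bound for linear interpolation of a convex quadratic. For a general interval one composes $SG_m$ with the affine map $t\mapsto (t-a)/(b-a)$ and adds the affine correction coming from $t^2=(b-a)^2\big(\tfrac{t-a}{b-a}\big)^2+2a(t-a)+a^2$; this is the ``slight modification'' of \cite[Lemma 1]{yarotsky2017error}, it preserves the depth and the $\mathcal O(m^2)$ parameter count, and it only inflates the constant by a factor $(b-a)^2$, which is why $c_1$ depends on $a,b$. So I would reduce to $[a,b]=[0,1]$ and prove there the sharper bound $|SG_m(t)-t^2|\le 2^{-2m-2}$. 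On $[0,1]$ the map $g(t)=2\sigma(t)-4\sigma(t-1/2)$ agrees with the tent map $t\mapsto\min\{2t,\,2-2t\}$, which maps $[0,1]$ onto itself; hence the iterate $g_s$ restricted to $[0,1]$ is the standard sawtooth with $2^{s-1}$ teeth, vanishing at the points $k2^{-(s-1)}$ and equal to $1$ at the midpoints $(2k+1)2^{-s}$.

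The heart of the argument is a refinement identity. Let $f_j$ be the continuous piecewise-linear function interpolating $t^2$ at the nodes $\{k2^{-j}\}_{k=0}^{2^j}$, so that $f_0(t)=t$. I would show
\[
 f_{s-1}(t)-f_s(t)=2^{-2s}\,g_s(t),\qquad t\in[0,1],
\]
by noting that both sides are piecewise linear with breakpoints in $\{k2^{-s}\}$ and comparing their values there: at a coarse node $k2^{-(s-1)}$ both sides vanish, while at a new midpoint $(2k+1)2^{-s}$ the left-hand side equals the gap between the chord and the curve of the convex quadratic over a subinterval of length $2^{-(s-1)}$, namely $\big(2^{-(s-1)}\big)^2/4=2^{-2s}$, matching the right-hand side. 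Telescoping then gives
\[
 SG_m(t)=t-\sum_{s=1}^m 2^{-2s}g_s(t)=f_0(t)-\sum_{s=1}^m\big(f_{s-1}(t)-f_s(t)\big)=f_m(t).
\]

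It remains to bound $|f_m(t)-t^2|$, which is immediate: $t^2$ being convex, $f_m\ge t^2$, and on each grid cell of length $2^{-m}$ the chord of $t\mapsto t^2$ overshoots by at most $(2^{-m})^2/4=2^{-2m-2}$, so $\|f_m-t^2\|_{C[0,1]}\le 2^{-2m-2}$. Undoing the rescaling yields $|SG_m(t)-t^2|\le c_1 2^{-2m}$ on $[a,b]$ with $c_1$ depending only on $a,b$. I would not re-derive the depth/size claim, since it was already recorded before the statement via $t=\sigma(t)-\sigma(-t)$ and the observation that the partial sums of $g_s/2^{2s}$ assemble into an $m$-layer net with $\mathcal O(m^2)$ parameters. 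The points I expect to need the most care are (i) checking that the explicit ReLU form of $g$ coincides with the tent map on the whole region visited by the iterates, so that $g_s|_{[0,1]}$ really is the claimed sawtooth, and (ii) the bookkeeping in the refinement identity, i.e. verifying that $f_{s-1}-f_s$ carries no breakpoint at the coarse nodes and is therefore pinned down by its values at the new midpoints.
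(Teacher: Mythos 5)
Your proof is correct and follows exactly the argument the paper defers to: Yarotsky's identification of $SG_m$ with the piecewise-linear interpolant of $t^2$ on the dyadic grid $\{k2^{-m}\}$ via the refinement identity $f_{s-1}-f_s=2^{-2s}g_s$, combined with the elementary $h^2/4$ chord-overshoot bound for a convex quadratic. The affine rescaling to $[a,b]$ is precisely the ``slight modification'' of \cite[Lemma 1]{yarotsky2017error} that the paper alludes to (and is in fact needed, since the literal $g$ is only the tent map on $[0,1]$), and your resulting constant $c_1=(b-a)^2/4$ is the right one.
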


% Proposition \ref{Prop:square-component}, which can be easily deduced from \cite{yarotsky2017error}, shows that we can construct a deep net $SG_{\log_2 \varepsilon^{-1})}$ as defined in \eqref{Def.square-component} to approximate the square function with accuracy $c\varepsilon$ for some absolute constant $c$.

\begin{figure}[!t]
\centering
\centering
\includegraphics*[width=8cm,height=4.8cm]{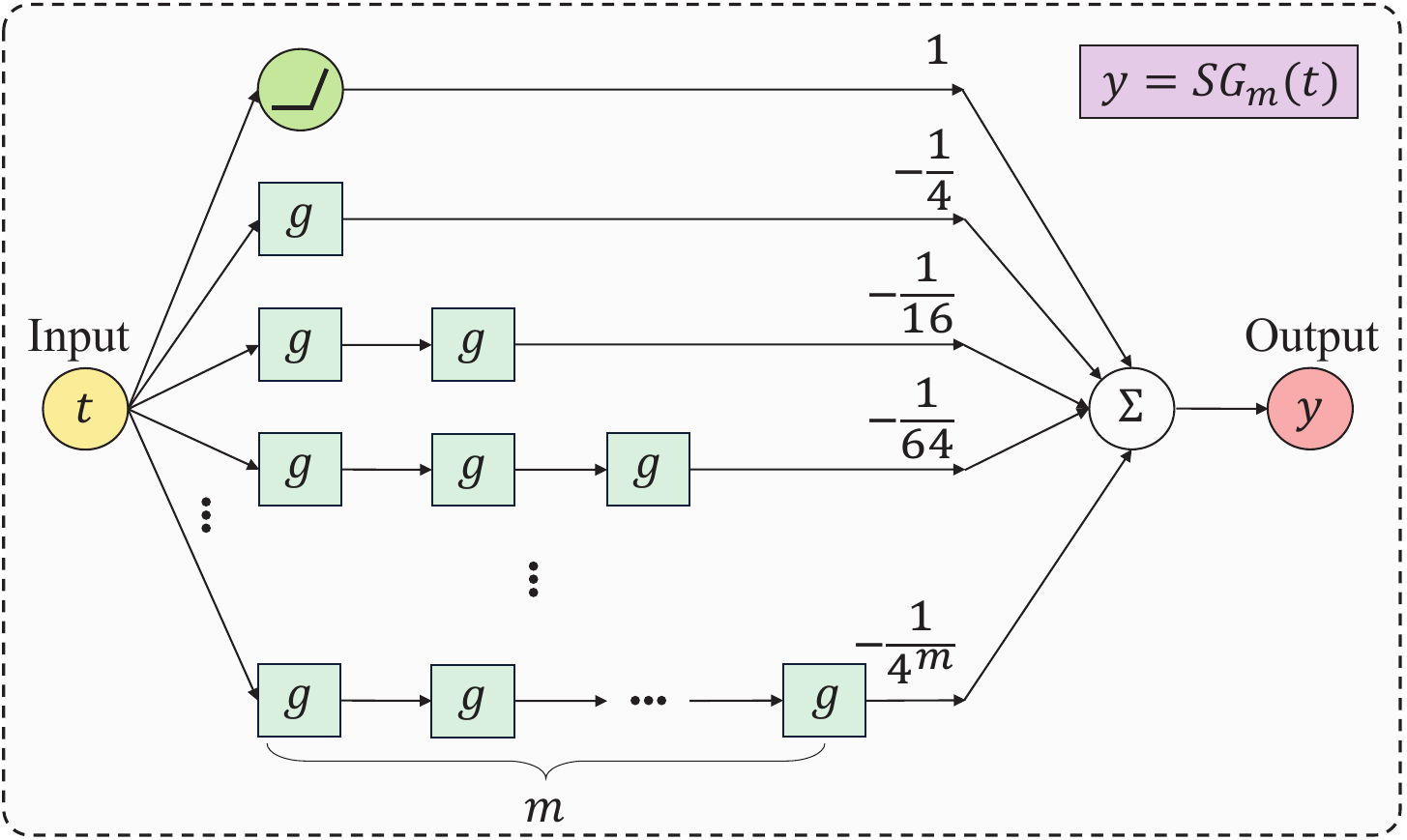}
\caption{Network architecture of the function $SG_m(t)=t-\sum\limits_{s=1}^m\frac{g_s(t)}{4^s}$}
\label{Figure:func_fm}
\end{figure}

% The final construction in our proof is the product-component that encodes the approximation of the product of two or more real numbers into a deep net.
The final construction in our proof is the product-component, which utilizes a deep neural network to approximately encode the product of two or more real numbers.
Since $t_1t_2=\frac{(t_1+t_2)^2-t_1^2-t_2^2}{2}$, we can use the square-component defined in \eqref{Def.square-component} to get the product component directly. For any $m\in\mathbb N$, define
\begin{equation}\label{Def.product-component-2}
    {PG}_{2,m}(t_1,t_2):=\frac{{SG}_m(t_1+t_2)-{SG}_{m}(t_1)-{SG}_m(t_2)}{2}.
\end{equation}
It is obvious that ${PG}_{2,m}$ is a deep net of depth $m$ and $\mathcal O(m^2)$ free parameters. Furthermore, for $j=3,4,\dots$, we can define the product-component of order $j$ as
 \begin{equation}\label{Def.product-component-j}
     PG_{j,m}(t_1,\dots,t_j)
     := \overbrace{PG_{2,m}(PG_{2,m}(
     \cdots PG_{2,m}}^{j-1}(t_1,t_2),\cdots,t_{j-1}),t_{j}).
\end{equation}
As shown in Figure \ref{Figure:func_pro2}, $ {PG}_{j,m}$ is a deep net of depth $m(j-1)$, with $\mathcal O(m^2)$ free parameters. The following proposition, which can be derived directly from Proposition \ref{Prop:square-component}, demonstrates the effectiveness of the product-component defined in \eqref{Def.product-component-j}.
\begin{figure}[!t]
\centering
\centering
\includegraphics*[width=4.2cm,height=4.5cm]{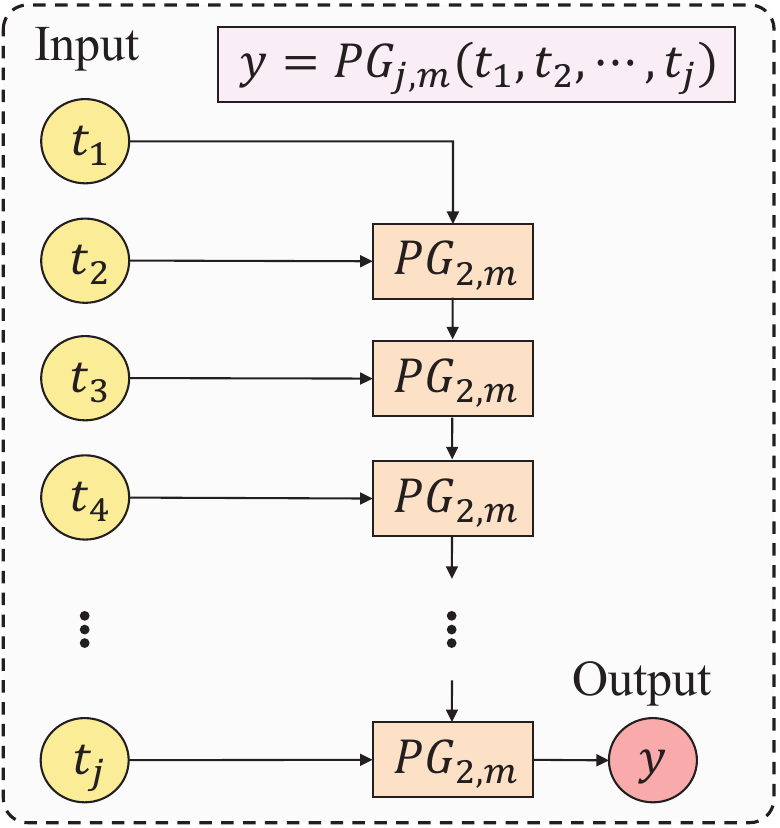}
\caption{Network architecture of the function $PG_{j,m}(t_1,\dots,t_j)$}
\label{Figure:func_pro2}
\end{figure}

\begin{proposition}\label{Prop:product-component-bound}
   Let $j,m\in\mathbb N$, $a,b\in\mathbb R$, and ${PG}_{j,m}$ be given in \eqref{Def.product-component-j}. Then for any $t_1,\dots,t_j\in[a,b]$, there holds
\begin{equation}\label{product-component-bound}
    |t_1\cdots t_j-{PG}_{j,m}(t_1,\dots,t_j)|\leq c_2j2^{-m},
\end{equation}
where $c_2$ is a constant depending only on $a,b$.
\end{proposition}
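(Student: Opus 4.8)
The plan is to induct on the order $j$, using Proposition~\ref{Prop:square-component} as the only real input. For the base case $j=2$, I would start from the algebraic identity $t_1t_2=\tfrac12\big((t_1+t_2)^2-t_1^2-t_2^2\big)$ and the definition \eqref{Def.product-component-2}, which together give the exact decomposition
\[
 PG_{2,m}(t_1,t_2)-t_1t_2=\tfrac12\Big(\big(SG_m(t_1+t_2)-(t_1+t_2)^2\big)-\big(SG_m(t_1)-t_1^2\big)-\big(SG_m(t_2)-t_2^2\big)\Big).
\]
Each of the three terms is a square-component error, so applying \eqref{square-component-bound} on an interval containing both $[a,b]$ and $[2a,2b]$ bounds each by $c_1 2^{-2m}$, whence $|PG_{2,m}(t_1,t_2)-t_1t_2|\le \tfrac32 c_1 2^{-2m}$. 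Since $2^{-2m}\le 2^{-m}$, this is at most $2c_2 2^{-m}$ once $c_2$ is chosen large enough, which is the desired bound for $j=2$.

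For the inductive step I would use \eqref{Def.product-component-j}, which writes $PG_{j,m}(t_1,\dots,t_j)=PG_{2,m}(p,t_j)$ with $p:=PG_{j-1,m}(t_1,\dots,t_{j-1})$. Setting $q:=t_1\cdots t_{j-1}$ and applying the triangle inequality,
\[
 |PG_{j,m}(t_1,\dots,t_j)-t_1\cdots t_j|\le |PG_{2,m}(p,t_j)-p\,t_j|+|t_j|\,|p-q|.
\]
The second term is bounded by $\max\{|a|,|b|\}\cdot|p-q|$, and $|p-q|$ is exactly the order-$(j-1)$ error, controlled by the induction hypothesis by $c_2(j-1)2^{-m}$. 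The first term is a fresh order-$2$ error, handled as in the base case (three square-component errors, each $\le c_1'2^{-2m}$), giving at most $c_2 2^{-m}$. Adding the two contributions and using $\max\{|a|,|b|\}\le 1$ in the regime of interest yields $e_j\le c_2(j-1)2^{-m}+c_2 2^{-m}=c_2 j 2^{-m}$, closing the induction.

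The one point requiring care, and the main obstacle, is the boundedness of the intermediate computed value $p=PG_{j-1,m}(t_1,\dots,t_{j-1})$: Proposition~\ref{Prop:square-component} is only quantitative on a prescribed interval, so I must verify that $p$ stays in a controlled range through all $j-1$ nested compositions. This follows because $|p|\le|q|+e_{j-1}\le(\max\{|a|,|b|\})^{j-1}+1$ once $m$ is large enough that $e_{j-1}\le 1$ (small $m$ is absorbed by enlarging $c_2$, since then the right-hand side is already $\Omega(1)$); one then applies Proposition~\ref{Prop:square-component} on a fixed enlarged interval and folds the resulting constants into $c_2$. Strictly speaking, when $[a,b]\not\subseteq[-1,1]$ the partial products can grow geometrically in $j$ and the telescoped bound is $\lesssim 2^{-2m}\sum_k \beta^k$ with $\beta=\max\{|a|,|b|\}$; the clean linear-in-$j$ estimate $c_2 j 2^{-m}$ corresponds to the case $\beta\le 1$ (which is all that is used later, since inputs lie in $\mathcal B^d_{1/2}$), where the geometric sum is at most $j$. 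Apart from this bookkeeping on the working interval, the argument is just the product-to-squares identity plus the triangle inequality.
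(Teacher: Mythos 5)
Your argument is correct and is essentially the derivation the paper intends: the paper offers no proof of this proposition beyond the remark that it ``can be derived directly from Proposition~\ref{Prop:square-component}'', and your induction on $j$ via the product-to-squares identity, the triangle inequality, and control of the intermediate value $p=PG_{j-1,m}(t_1,\dots,t_{j-1})$ is the standard way to make that remark precise. Your caveat that the clean bound $c_2 j 2^{-m}$ with $c_2$ depending only on $a,b$ really requires $\max\{|a|,|b|\}\le 1$ (otherwise the partial products, and hence the interval on which Proposition~\ref{Prop:square-component} must be invoked, grow with $j$) is a genuine point the paper glosses over, but it is harmless for its application since all inputs fed to the product-component there lie in $[-1/2,1/2]$ or $[0,1]$.
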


\section{Component-based Sketching for Deep ReLU Nets}\label{Sec.algorithm}
In this section, we present a component-based sketching scheme  to equip   deep ReLU nets.

% Our schemes depend on the localized approximation property of shallow ReLU nets on intervals \cite{wang2020random}, minimal $\mu$-enery  configurations on  spheres \cite{brauchart2015distributing},  product-component property of deep ReLU nets \cite{petersen2018optimal}, and a divide-and-conquer approach in distributed learning \cite{lin2017distributed}.

% Our component-based sketching scheme for shallow ReLU nets is motivated by  \cite{wang2020random} where a random sketching was developed to ease the training of shallow ReLU nets and \cite{fang2020learning} in which a component-based sketching was proposed for shallow sigmoid nets.

% \begin{figure}[!t]
% \centering
% \centering
% \includegraphics*[width=7.6cm,height=2.5cm]{}
% \caption{Network architecture of the function\\ $g(t)=\left\{
% \begin{array}{ll}
% 2t, & t<1/2,\\
% 2(1-t), & 1/2\leq t \leq 1.
% \end{array}
% \right.$}
% \label{Figure:func_g}
% \end{figure}

% \begin{figure}[!t]
% \centering
% \centering
% \includegraphics*[width=6.2cm,height=3cm]{}
% \caption{Network architecture of the product gate with two variables.}
% \label{Figure:func_pro}
% \end{figure}

\subsection{Construction of sketching basis via dimension leverage}

Given a partition of the interval $\mathbb J:=[-1/2,1/2]$,
we first use  $T_{\tau,t_{k-1},t_{k}}(t)$ in \eqref{Localized-identifier} to recognize the position of the univariate inputs. That is, $T_{\tau,t_{k-1},t_{k}}(t)$  reflects the locality of $t$ in the spatial domain.
Then, we introduce an additional index $J\in\mathbb N$ to capture the locality in the frequency domain \cite{han2020depth,yarotsky2017error} to avoid the saturation phenomenon. Given the tolerance parameter $m$, locality parameter $\tau$, and frequency parameter $J$, we define
\begin{equation}\label{basis-for-simpl}
    \tilde{\times}_{J,j,k,m,\tau}(t):=PG_{J,m}(T_{\tau,t_k,t_{k+1}}(t),\overbrace{t,\dots,t}^{j},\overbrace{1,\dots,1}^{J-1-j}).
\end{equation}

% \begin{figure}[!t]
% \centering
% \centering
% \includegraphics*[width=5cm,height=5cm]{figures/ball}
% \caption{Cone-type partition of the ball}
% \label{Figure:cone-partition}
% \end{figure}

\begin{figure*}[!t]
\centering
\centering
\includegraphics*[width=16cm,height=9cm]{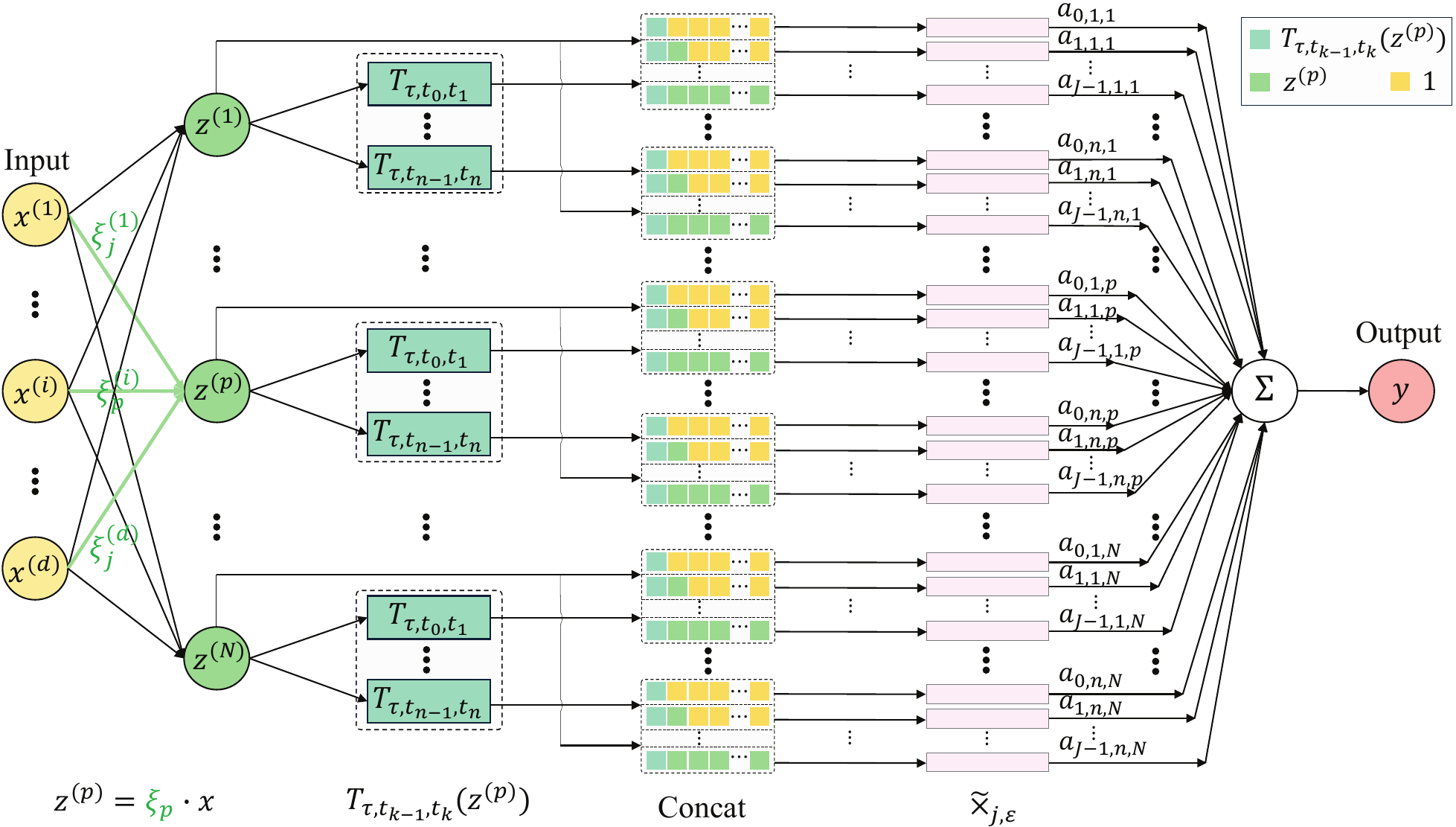}
\caption{Network architecture of the proposed method}
\label{Figure:proposed_net}
\end{figure*}

It should be mentioned that $J$ is not very large in practice \cite{lin2017does} and $\tilde{\times}_{J,j,k,m,\tau}$ reduces to $T_{\tau,t_k,t_{k+1}}$ when $J=1$. We then introduce the dimension leveraging scheme based on
 the minimum energy points on spheres.
Let $\mathbb S^{d-1}$ be the unit sphere embedded in $d$-dimensional Euclidean space $\mathbb R^d$.
The Riesz $\mu $-energy $(\mu \geq 0)$ associated with $\Xi _{N}=\{\xi_i\}_{i=1}^N$, denoted by
$A_{\mu }(\Xi _{N})$, is defined as \cite{brauchart2015distributing}
$$
           A_{\mu }(\Xi _{N}):=\left\{
          \begin{array}{cc}
           \sum_{i\neq j}|\xi_{i}-\xi_{j}|^{-\mu }, & if\ \mu >0, \\
            \sum_{i\neq j}-\log |\xi_{i}-\xi_{j}|, & if\ \mu =0.%
           \end{array}%
           \right.
$$
 We use $ \mathcal{R}_{\mu
}(\mathbb S^{d-1},N)$ to denote the $N$-point minimal $\mu $-energy
over $\mathbb{S}^{d-1}$, that is,
\begin{equation}\label{minimum energy points set}
\mathcal{R}_{\mu}(\mathbb{S}^{d-1},N):=\min_{\Xi _{N}\in \mathbb{S}%
^{d-1}}A_{\mu }(\Xi _{N}),
\end{equation}
where the minimization is taken over all $N$-point configurations of $%
\mathbb{S}^{d-1}$. If $\Xi _{N}^{\ast }\subset \mathbb{S}^{d-1}$ is
a minimizer of (\ref{minimum energy points set}), i.e.,
$$
            A_{\mu }(\Xi _{N}^{\ast })=\mathcal{R}_{\mu
              }(\mathbb{S}^{d-1},N),
$$
then $\Xi _{N}^{\ast }$ is called a minimal $\mu $-energy configuration of $%
\mathbb{S}^{d-1}$, and the points in $\Xi _{N}^{\ast }$ are called
the minimal $\mu $-energy points.
Numerous feasible approaches have been developed to approximately solve the
minimal $\mu$-energy configuration problems, among which a widely used
procedure is Leopardi's recursive zonal sphere partitioning
procedure \cite{leopardi2006partition}. It has been justified that
this procedure can approximately generate the minimal $\mu$-energy points of $\mathbb{S}
^{d-1} $ for some $\mu$ with a ``cheap'' computational cost,
more precisely, with an $\mathcal{O}(N\log N)$ asymptotic time
complexity \cite{leopardi2006partition}. Let $\{\xi_\ell\}_{\ell=1}^N$ denote the set of minimal $\mu$-energy  points.  We then  set $t=\xi_\ell\cdot x$ in \eqref{basis-for-simpl} to leverage the dimension.
Due to the locality property (\ref{Deteailed trapezoid}) of $T_{\tau,t_{k-1},t_k}$, the basis $\{T_{\tau,t_{k-1},t_k}(\xi_\ell\cdot x)\}_{\ell,k}^{N,n}$ presents a cone-type partition of the ball and thus reflects some position information of the input $x$, just as Figure \ref{Figure:cone-partition} exhibits. Differently from the classical cubic partition, the cone-type partition produced by the dimension leveraging scheme  is user-friendly when $d$ is large.

Given $J,n,N,m$,$\tau$, and $\{\xi_\ell\}_{\ell=1}^N$, we obtain a set of sketching basis $\{\tilde{\times}_{J,j,k,m,\tau}(\xi_\ell\cdot x)\}_{\ell,k,j}^{N,n,J-1}$ and construct a
  parameterized   hypothesis space
\begin{equation}\label{Hypothesis-space-deep}
             \mathcal{H}_{J,n,N,m,\tau}:=\left\{
             \sum_{j=0}^{J-1}\sum_{k=1}^{n}\sum_{\ell=1}^{N}a_{jk \ell} \tilde{\times}_{J,j,k,m,\tau}(\xi_\ell\cdot x):a_{ jk\ell}\in \mathbb{R}\right\}.
\end{equation}
The position information of $x$ and the almost equally-spaced property of  minimal $\mu$-energy configurations \cite{brauchart2015distributing} are
crucial to guarantee the representation capability of the linear space (\ref{Hypothesis-space-deep}) \cite{wang2020random}.
Figure \ref{Figure:proposed_net} presents the structure of deep nets  in \eqref{Hypothesis-space-deep}. It can be found in Figure \ref{Figure:proposed_net} that elements in the sketching basis possess a special structure that is different from the classical deep fully connected neural networks. Actually, their structures depend heavily on the deep net components rather than neurons. As the deep net components are specified before the training process, provided the parameters  $J,n,N,m,\tau$ are given, $ \mathcal{H}_{J,n,N,m,\tau}$ defined by \eqref{Hypothesis-space-deep} is actually a linear space of dimension $nJN$ and can be generated off-line.

\subsection{Component-based sketching for deep ReLU nets}
In this subsection, we propose a component-based sketching scheme for deep ReLU nets.
% Based on the product-component property of deep ReLU nets, our sketching scheme   can be divided into off-line stage and on-line stage. Similar as Algorithm \ref{Algorithm:shallow},
Given the training data $D=\{(x_i,y_i)\}_{i=1}^{|D|}$ with $x_i\in\mathbb B_{1/2}$ and $y_i\in[-M,M]$ for some $M>0$, and generated hypothesis space $ \mathcal{H}_{J,n,N,m,\tau}$,
the deep net  estimate is defined by
\begin{equation}\label{RLS-deep-final}
      f_{D,J,n,N,m,\tau}(x):= \pi_M f^*_{D,J,n,N,m,\tau}(x),
\end{equation}
where $\pi_M(t)=\mbox{sign}(t)\min\{|t|,M\}$ is the well known truncation operator \cite{gyorfi2002distribution} and
\begin{equation}\label{RLS-deep}
          f^*_{D,J,n,N,m,\tau}:={\arg \min}_{f\in  \mathcal{H}_{J,n,m,N,\tau }}\frac1{|D|}\sum_{(x_i,y_i)\in D}%
             |f(x_{i})-y_{i}|^{2}.
\end{equation}
The detailed implementation of the sketching scheme for deep ReLU nets can be found in Algorithm \ref{Algorithm:sketching-deep}.

\begin{algorithm}[t]\caption{Component-based sketching for deep  ReLU nets}\label{Algorithm:sketching-deep}
\begin{algorithmic}
\STATE { \textbf{Input:} Training data $D=\{(x_i,y_i)\}_{i=1}^{|D|}$, input for test data $D^*=\{x_i^*\}_{i=1}^{|D^*|}$, and parameters $n,N,J\in\mathbb N$,
$\tau\in (0,1)$, and $\varepsilon\in (0,1)$. }
\STATE {{ Step 1 (Initialization)}: Normalize $D_x=\{x_{i}\}_{i=1}^{|D|}$ and $D^*$ to $\mathbb
B_{1/2}^d$ and compute $M=\max_{1\leq i\leq |D|}|y_{i}|.$}
\STATE {{ Step 2 (Off-line: Product-component)}: For given $J\in\mathbb N$ and sufficiently small $\varepsilon>0$, construct a deep net
$PG_{J,m}$ defined by \eqref{Def.product-component-j} such that $|t_1\cdots t_J-PG_{J,m}(t_1,\dots,t_J)|\leq \varepsilon$.}
\STATE {{ Step 3 (Off-line:Sketching)}: For given $n,N\in\mathbb N$, generate a set of minimal $\mu$-energy points  $\{\xi_j\}_{\ell=1}^N$ with $\mu>d-1$, and a set of equally spaced points on $G_n:=\{t_k\}_{k=0}^n$ with $t_k=-1/2+k/n$.}
\STATE {{ Step 4 (Off-line:Spatial locality identifier)}: For $\tau\in(0,1)$, generate a set of spatial locality identifiers $\{T_{\tau,t_{k-1},t_k}(\xi_\ell\cdot x)\}_{\ell=1,k=1}^{N,n}$ via (\ref{Localized-identifier}).
 \STATE{ { Step 5 (Off-line: Generating Hypothesis space)}:
Generate the hypothesis space (\ref{Hypothesis-space-deep}).}
 \STATE{ { Step 6 (On-line: Deriving the estimate)}: Define the estimator by
$$
          f_{D,J,n,N,m,\tau}(x):= \pi_M f^*_{D,J,n,N,m,\tau}(x),
$$
where $f^*_{D,J,n,N,\tau,\varepsilon}$ is given by (\ref{RLS-deep}).}
\STATE{\textbf{Output:}  Prediction of input for test data $\{f_{D,J,n,N,m,\tau}(x^*_i)\}_{i=1}^{|D^*|}$. }
}
\end{algorithmic}
\end{algorithm}

% % ----------------------Alg. 1------------------------
% \begin{algorithm}[t]\caption{Random sketching for shallow ReLU nets}\label{alg1}
% \begin{algorithmic}
% \STATE { \textbf{Input:} Training data set $D=\{(x_i,y_i):i=1,\dots,m\}$, parameters $n,\ell\in\mathbb N$ and
% $\tau\in (0,1)$. }
% \STATE {{ Step 1 (Initialization)}: Normalize $D_x=\{x_i\}_{i=1}^m$ to $\mathbb
% B_{1/2}^d$ and compute $M=\max_{1\leq i\leq m}|y_i|.$}
%  \STATE{ { Step 2 (Sketching thresholds)}: Let $t_0=-1/2$, and randomly sketch  $\ell$
%  points in $(-1/2,1/2)$ according to the uniform distribution and
%  sort them increasingly as $t_1<t_2<\dots<t_\ell$.
%  }
%  \STATE{{Step 3 (Sketching inner weights)}:
%   Randomly sketch $n$ inner weights $\{\xi_j\}_{j=1}^n$ according to the uniform distribution
%   on the unit sphere $\mathbb S^{d-1}$.}
%  \STATE{ { Step 4 (Generating hypothesis space)}:
% Generate the hypothesis space (\ref{hypothesis-space}).

% }
%  \STATE{ { Step 5 (Deriving outer weights)}: Define the estimator
%  by
%  \begin{equation}\label{Estimator1}
%           f_{D,\ell,n,\tau}(x):= \pi_M f^*_{D,l,n,\tau }(x),
% \end{equation}
% where $f^*_{D,\ell,n,\tau }$ is given by (\ref{Estimator}).
%   \STATE{\textbf{Output:} Thresholds $\{t_k\}_{k=1}^{\ell}$, inner weights $\{\xi_j\}_{j=1}^{n}$, outer weights $\{a_{j,k}\}_{j,k=1}^{n,\ell}$
% and the estimator $f_{D,\ell,n,\tau}(x)$.}
% }
% \end{algorithmic}
% \end{algorithm}

At the first glance, there are totally six hyper-parameters involved in the sketching scheme: $N, n, J, m, \tau, \mu$. Note that $\tau, \mu$, and $m$ can be specified before the learning process. In particular, we can use the approaches in \cite{leopardi2006partition} to generate a set of minimal $\mu$-energy points and make $m$ sufficiently large. In this way, $m$ and $\mu$ are not practically involved. Furthermore, it can be found in Figure \ref{Figure:localized-1} that small $\tau$ leads to good locality. We can select $\tau$ to be small enough. The selection of $N,n,J$ is more important than the other three hyper-parameters. Our theoretical analysis shows that the size of the minimal $\mu$-energy configuration should satisfy $N\sim n^{d-1}$. All these demonstrate that the crucial hyper-parameter of the proposed component-based sketching scheme is $n$, which reflects the well-known bias-variance trade-off in machine learning \cite{cucker2007learning}, where large $n$ results in small bias but large variance, while small $n$ implies small variance but large bias. The perfect generalization performance relies on the selection of a suitable $n$. Besides $n$, $J$ is another crucial parameter to realize the advantage of deep nets. In fact, if $J=1$, the hypothesis space $\mathcal{H}_{J,n,m,N,\tau }$ is actually a shallow net.  The component-based sketching scheme is embodied by tuning $J$, which involves fixed structures of deep nets for special purposes. Fortunately, in most applications, $J$ is not large \cite{lin2017does} and cannot be larger than $5$ in practice, making the parameter selection  not so difficult.

As shown in Algorithm \ref{Algorithm:sketching-deep}, there are mainly three novelties of the component-based sketching scheme for deep ReLU nets. At first, it transforms the complicate non-convex and non-linear optimization for deep learning into a linear least squares problem, which successfully avoids the conflict between optimization and generalization. That is, it is easy to find a global minimum \eqref{RLS-deep} by using some basic gradient-based strategy or analytic methods. Then, mathematically speaking, the proposed component-based sketching scheme
requires the computational complexity of order
$\mathcal{O}(|D|Nd+|D|J^2Nn+J^3N^3n^3)$.
% $\mathcal O(|D|J^2N^2n^2)$\textcolor{red}{($\mathcal{O}(|D|Nd+|D|J^2Nn+J^3N^3n^3)$)}.
Our theoretical  and numerical results below will show that $JNn$ is much smaller than $|D|$, demonstrating that Algorithm \ref{Algorithm:sketching-deep} significantly reduces the computational burden compared to classical deep learning training \cite{goodfellow2016deep}. Finally, we will  show that such a component-based sketching scheme succeeds in breaking through the mentioned saturation problem of shallow ReLU nets \cite{yarotsky2017error,wang2020random} and therefore demonstrates the power of depth in learning extremely smooth functions.

\section{Theoretical Behaviors}\label{Sec.theory}
In this section, we focus on studying theoretical behaviors of the proposed  algorithm. In particular, we are interested in the expressivity of the hypothesis space spanned by  the sketching basis and the generalization performance of the component-based sketching algorithm.

\subsection{Expressivity of the sketching basis}
Different from the classical deep fully connected nets, the hypothesis space $\mathcal{H}_{J,n,N,m,\tau}$ we adopted in the component-based sketching scheme is a linear space of $JnN$ dimension. The first and most important question is whether such a linearization degrades the expressivity of deep nets. The answer to this question is plausibly  positive, since the nonlinear deep net hypothesis space $\mathcal H_{L,n}$ can be regarded as infinite unions of linear deep net hypothesis spaces when the linear and nonlinear deep nets have the same depth and width. However, due to the special construction of $\mathcal{H}_{J,n,N,m,\tau}$ that possesses special structures via combining deep net components and generally cannot be achieved by training a deep fully connected net, we show in this subsection that $\mathcal{H}_{J,n,N,m,\tau}$ does not essentially shrink the capacity of nonlinear deep nets.

There are several quantities that have been employed to measure the capacity of deep nets, such as the covering number \cite{guo2019realizing}, pseudo-dimension \cite{lin2022universal},  and VC-dimension \cite{bartlett2019nearly}. To be detailed, let $\mathcal X\subseteq\mathbb R^d$, $\mathcal B$ be a Banach space, and $\mathcal V$ be a compact set
in $\mathcal B$. The quantity $H_\varepsilon(\mathcal V,\mathcal B)=\log_2\mathcal N_\varepsilon(\mathcal V,\mathcal B)$, where $\mathcal N_\varepsilon(\mathcal V,\mathcal B)$ is the number of elements in the least $\varepsilon$-net of $\mathcal V$, is called the $\varepsilon$-entropy of $\mathcal V$ in  $\mathcal B$. The quantity $\mathcal N_\varepsilon(\mathcal V,\mathcal B)$ is called the
 $\varepsilon$-covering number of  $\mathcal V$. For any $t\in\mathbb
R$, define
$$
        \mbox{sgn} (t):=\left\{\begin{array}{cc}
                       1, & \mbox{if} \ t\geq0,\\
                       -1, & \mbox{if}\ t<0.
                       \end{array}
                       \right.
$$
If a vector ${\bf t}=(t_1,\dots,t_n)$ belongs to $\mathbb R^n$, then we denote by $\mbox{sgn} ({\bf t})$ the vector $(\mbox{sgn} (t_1),$ $\dots$,$\mbox{sgn} (t_n))$. The VC dimension of $\mathcal V$ over $\mathcal X$, denoted by $VCdim(\mathcal V)$, is defined as the maximal natural number $l$ for which there exists a collection $(\eta_1,\dots,\eta_l)$ in  $\mathcal X$ such that the cardinality of the sgn-vectors set
$$
              S=\{(\mbox{sgn} (v(\eta_1)),\dots,\mbox{sgn} (v(\eta_l))):v\in \mathcal V\}
$$
equals $2^l$, that is, the set $S$ coincides with the set of all vertexes of the unit cube in  $\mathbb R^l$. Here and hereafter, $\mbox{sgn}(t)$ for $t\in\mathbb R$ denotes the indicator function.
The quantity
$$
                Pdim(\mathcal V):=\max_{g} VCdim(\mathcal V+g),
$$
is called the pseudo-dimension of the set $\mathcal V$ over $X$,
where $g$ ranges over all functions defined on $X$ and $\mathcal
V+g=\{v+g:v\in \mathcal V\}$.

In this paper, we use the pseudo-dimension to quantify the capacity, mainly due to the following   important properties exhibited in \cite{mendelson2003entropy}.

\begin{lemma}\label{Lemma:CAPACITY-RELATION}
Let $G$ be a $k$-dimensional vector space of functions from a set
$\mathbb A$ into $\mathbb R$. Then $Pdim(G )=k$.
If $\mathcal V_R$ is a class of functions which consists of
all functions $f\in \mathcal V$ satisfying $|f(x)|\leq R$ for all
$x\in \mathbb A$, then
$$
             VC(\mathcal V)\leq {Pdim}(\mathcal V),
$$
and
$$
            H_\varepsilon(\mathcal V_R,L^2(\mathbb A))\leq c
            {Pdim}(\mathcal V_R)\log_2\frac{R}\varepsilon,
$$
where $c$ is an absolute positive constant.
\end{lemma}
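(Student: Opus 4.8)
The plan is to handle the lemma's assertions essentially independently.

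\textbf{The pseudo-dimension identity.} To show $Pdim(G)=k$ for a $k$-dimensional vector space $G$, I would prove both bounds directly. For $Pdim(G)\ge k$: fix a basis $\phi_1,\dots,\phi_k$ of $G$ and, using linear independence, choose points $\eta_1,\dots,\eta_k\in\mathbb A$ so that the matrix $(\phi_j(\eta_i))_{i,j=1}^{k}$ is nonsingular; then the evaluation map $f\mapsto(f(\eta_1),\dots,f(\eta_k))$ carries $G$ onto $\mathbb R^{k}$, so every sign vector is attained, $G$ already shatters $\{\eta_i\}$, and $Pdim(G)\ge VCdim(G)\ge k$. For $Pdim(G)\le k$: suppose $VCdim(G+g)\ge k+1$ for some $g$ with witnesses $\eta_1,\dots,\eta_{k+1}$; the affine map $f\mapsto(g(\eta_i)+f(\eta_i))_{i=1}^{k+1}$ has image an affine subspace $A\subseteq\mathbb R^{k+1}$ of dimension at most $k$, hence $A\subseteq\{v:\langle\gamma,v\rangle=b\}$ for some $\gamma\neq0$ and scalar $b$. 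A short sign-counting argument (splitting on the sign pattern of $\gamma$ and tracking the convention $\mathrm{sgn}(0)=1$) shows that on the orthant indexed by $\mathrm{sgn}(\gamma)$ one has $\langle\gamma,v\rangle\ge0$ while on the opposite orthant $\langle\gamma,v\rangle\le0$, with at least one of these strict since $\gamma\neq0$; then $b$ cannot be consistent with both orthants, so not all $2^{k+1}$ sign patterns can appear, a contradiction. As a byproduct, $VCdim(\mathcal V)\le Pdim(\mathcal V)$ is immediate by taking $g\equiv0$ in $Pdim(\mathcal V)=\max_g VCdim(\mathcal V+g)$.

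\textbf{The entropy bound.} Here I would reduce the $L^2$-covering-number estimate to the classical link between covering numbers and the combinatorial (fat-shattering) dimension. First note that pseudo-dimension dominates the fat-shattering dimension at every scale, $\mathrm{fat}_\gamma(\mathcal V_R)\le Pdim(\mathcal V_R)$ for all $\gamma>0$, because $\gamma$-fat-shattering is a restricted instance of pseudo-shattering. Then apply the standard uniform covering-number bound for classes bounded by $R$ (Haussler's bound; see also \cite{mendelson2003entropy}), of the form $H_\varepsilon(\mathcal V_R,L^2(\mathbb A))\le c\,\mathrm{fat}_{c'\varepsilon}(\mathcal V_R)\log_2(R/\varepsilon)$, and substitute the preceding inequality to obtain the claim with an absolute constant $c$.

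\textbf{Main obstacle.} The only genuinely combinatorial step is the upper bound $Pdim(G)\le k$ — concretely, that a $k$-dimensional affine subspace of $\mathbb R^{k+1}$ cannot meet all $2^{k+1}$ sign-orthants — and getting the borderline $\mathrm{sgn}(0)=1$ cases right in that count. The entropy bound, though analytically the most involved, is effectively a black-box citation to the combinatorial-dimension literature rather than something I would reprove in detail.
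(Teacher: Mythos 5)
Your proposal is correct, but note that the paper does not actually prove this lemma: it is imported wholesale as a known result from \cite{mendelson2003entropy}, so there is no in-paper proof to compare against. Your reconstruction of the combinatorial assertions is the standard one and checks out: the lower bound $Pdim(G)\ge k$ via a nonsingular evaluation matrix (which exists because the evaluation functionals span the dual of $G$); the upper bound via the observation that the image of $G+g$ under evaluation at $k+1$ points is an affine subspace of dimension at most $k$, hence lies in a hyperplane $\langle\gamma,v\rangle=b$ that cannot realize both the sign pattern $\mathrm{sgn}(\gamma)$ and its negation — with the convention $\mathrm{sgn}(0)=1$ one indeed gets $\langle\gamma,v\rangle\ge 0$ on the former orthant and $\le 0$ on the latter, with at least one inequality strict because $\gamma\neq 0$; and $VC(\mathcal V)\le Pdim(\mathcal V)$ by taking $g\equiv 0$. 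For the entropy bound you retain exactly one black box — the covering-number estimate in terms of a combinatorial dimension — and that is precisely the content of the cited reference, so your proof is no less self-contained than the paper's treatment. Your detour through $\mathrm{fat}_\gamma(\mathcal V_R)\le Pdim(\mathcal V_R)$ is valid, though one could equally invoke the Haussler--Pollard packing bound stated directly in terms of the pseudo-dimension for $[-R,R]$-valued classes, which gives $H_\varepsilon(\mathcal V_R,L^2(\mathbb A))\le c\,Pdim(\mathcal V_R)\log_2(R/\varepsilon)$ without mentioning fat-shattering at all. Nothing in your outline would fail.
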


The above lemma demonstrates that deriving a tight bound of the pseudo-dimension is sufficient to provide upper bounds of both the VC-dimension and covering numbers of deep nets. Moreover, Lemma \ref{Lemma:CAPACITY-RELATION} shows that the pseudo-dimension is a good measurement of the capacities of linear and nonlinear spaces, since the pseudo-dimension of a linear space equals the dimension.
Furthermore, the following upper bounds on the pseudo-dimension of deep nets were derived in
\cite{lin2022universal,bartlett2019nearly}.
\begin{lemma}\label{Lemma:Pseudo-for-DCNN}
Let $\mathcal H_{n^*,L,d_{\max}}$ be the set of deep nets with $n^*$ free parameters, depth $L$, and width $d_{\max}$. Then
there exists an absolute constant $C_0$ such that
\begin{equation}\label{bound-for-pseudo}
    Pdim(\mathcal H_{n^*,L,d_{\max}})\leq C_0Ln^*\log d_{\max}.
\end{equation}
\end{lemma}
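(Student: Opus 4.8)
The plan is to bound the pseudo-dimension by reducing it to a counting problem on sign patterns of piecewise-polynomial functions, following the style of VC/pseudo-dimension arguments for piecewise-linear networks. By definition, $Pdim(\mathcal H_{n^*,L,d_{\max}})$ is the largest integer $l$ for which there exist inputs $x_1,\dots,x_l$ and shifts $r_1,\dots,r_l\in\mathbb R$ such that, as $\theta\in\mathbb R^{n^*}$ ranges over all weight/bias configurations of a net $f_\theta\in\mathcal H_{n^*,L,d_{\max}}$, the map $\theta\mapsto(\mathrm{sgn}(f_\theta(x_1)+r_1),\dots,\mathrm{sgn}(f_\theta(x_l)+r_l))$ attains all $2^l$ values. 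So it suffices to show that for \emph{any} fixed $x_1,\dots,x_l$ and $r_1,\dots,r_l$ the number $K(l)$ of distinct such sign vectors satisfies $K(l)\le (C\,l\,L\,d_{\max})^{C'Ln^*}$ for absolute constants $C,C'$; then $2^l\le K(l)$ forces $l=O(Ln^*\log d_{\max})$ after a routine bootstrap.

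The key structural fact is that for a fixed input $x_j$, $f_\theta(x_j)$ is a \emph{piecewise polynomial} in $\theta$: as long as no ReLU unit changes activation status, the output of layer $i$ is a polynomial of degree at most $i$ in $\theta$, so $f_\theta(x_j)$ is a polynomial of degree at most $L+1$. I would therefore construct a partition of $\mathbb R^{n^*}$ layer by layer. Assume that after fixing the activation patterns of layers $1,\dots,i$ we have a partition on each cell of which every layer-$i$ pre-activation, for every input $x_j$, is a fixed polynomial of degree $\le i$ in $\theta$. Moving to layer $i+1$, the ReLUs applied to its $l\cdot d_{i+1}$ pre-activations split each cell according to the sign pattern of $l\cdot d_{i+1}$ polynomials of degree $\le i$ in the parameters involved. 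Refining over all cells and all $L$ layers, and adding one final split by the signs of the $l$ polynomials $f_\theta(x_j)+r_j$ of degree $\le L+1$, bounds $K(l)$ by a product of at most $L+1$ such refinement factors.

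Each factor is controlled by the classical bound of Warren (equivalently Milnor--Thom): the number of distinct sign vectors of $m$ real polynomials of degree at most $\delta$ in $n$ variables is at most $(C\,m\,\delta/n)^{n}$ once $m\ge n$. Applying it with $m=l\,d_{\max}$, $\delta\le L+1$, $n\le n^*$ gives a per-layer factor at most $(C\,l\,L\,d_{\max}/n^*)^{n^*}$; multiplying the $\le L+1$ factors yields $K(l)\le (C\,l\,L\,d_{\max})^{C'Ln^*}$. Combining with $2^l\le K(l)$, taking logarithms, and using that $\log l$ is negligible relative to $l$, a standard iteration of $l\le C'Ln^*\log_2(C\,l\,L\,d_{\max})$ gives $l\le C_0Ln^*\log d_{\max}$. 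The sharpening to $\log d_{\max}$ in place of $\log(n^*Ld_{\max})$ comes from the more careful per-layer accounting of \cite{bartlett2019nearly} (as adapted in \cite{lin2022universal}): one tracks the number $n_i$ of parameters feeding layer $i$ separately and uses bounds of the form $\sum_i n_i\log(\cdot)$ rather than replacing every $n_i$ by $n^*$.

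The main obstacle is precisely this layer-by-layer region count with polynomial degrees growing in depth: one must show that simultaneously refining all cells costs only a single Warren-type factor per layer, keep careful track of which parameters are active at each layer, and control the compounding of the $L+1$ factors so as to lose only a logarithmic term. The algebraic-geometry input (Warren/Milnor--Thom) is off the shelf; the bookkeeping that turns it into the clean bound $Pdim(\mathcal H_{n^*,L,d_{\max}})\le C_0Ln^*\log d_{\max}$ is the delicate part, and for that I would follow \cite{bartlett2019nearly,lin2022universal}.
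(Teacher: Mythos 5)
The paper does not prove this lemma at all---it imports it directly from \cite{lin2022universal,bartlett2019nearly}---so there is no internal proof to compare against; your sketch correctly reproduces the standard argument of those references (stratification of parameter space by activation patterns, per-layer counting of sign patterns of polynomials whose degree grows with depth via Warren's theorem, and a final bootstrap from $2^l\le K(l)\le (ClLd_{\max})^{C'Ln^*}$). The only point to watch is the one you already flag yourself: the crude bootstrap gives $l\lesssim Ln^*\log(lLd_{\max})$, i.e.\ a $\log(n^*Ld_{\max})$ factor rather than the stated $\log d_{\max}$, and closing that gap requires the finer layer-indexed bookkeeping of \cite{bartlett2019nearly}, which you appropriately defer to rather than re-derive.
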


Lemma \ref{Lemma:Pseudo-for-DCNN} shows that for the set of deep ReLU nets with $n^*$ free parameters, depth $L$, and width $d_{\max}$, the pseudo-dimension cannot be larger than $Ln^*\log d_{\max}$. Recalling the construction of $\mathcal{H}_{J,n,N,m,\tau}$, which is a deep net with $\mathcal O(JnNm^2)$ free parameters, depth $\mathcal O(mJ)$, and width $\mathcal O(JnN(m+J))$, we can derive from Lemma \ref{Lemma:Pseudo-for-DCNN} and Lemma \ref{Lemma:CAPACITY-RELATION} the following proposition.
\begin{proposition}\label{Prop:pseudo-dimension}
Let $\mathcal{H}_{J,n,N,m,\tau}$ be given by \eqref{Hypothesis-space-deep} with $0<\tau<1$ and $m\sim  \log n$. Then, there holds
\begin{equation}\label{Pseudo-dimen-sketching}
    JnN\leq  Pdim( \mathcal{H}_{J,n,N,m,\tau})
     \leq C_1  J nN\log^3 (JnN),
\end{equation}
where $C_1$ is a constant depending only on $d$.
\end{proposition}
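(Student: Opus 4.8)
The plan is to establish the two inequalities in \eqref{Pseudo-dimen-sketching} separately. The lower bound $JnN \leq Pdim(\mathcal{H}_{J,n,N,m,\tau})$ is the easy direction: by construction, $\mathcal{H}_{J,n,N,m,\tau}$ as defined in \eqref{Hypothesis-space-deep} is a linear space spanned by the $nJN$ functions $\{\tilde{\times}_{J,j,k,m,\tau}(\xi_\ell\cdot x)\}_{\ell,k,j}$. Provided these functions are linearly independent — which follows from the disjoint (up to the $\tau$-overlap) supports of the trapezoid factors $T_{\tau,t_k,t_{k+1}}$ along each ridge direction $\xi_\ell$, together with the distinctness of the monomial degrees $j=0,\dots,J-1$ — the space has exact linear dimension $nJN$. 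Then the first part of Lemma \ref{Lemma:CAPACITY-RELATION}, stating that the pseudo-dimension of a $k$-dimensional function space equals $k$, gives $Pdim(\mathcal{H}_{J,n,N,m,\tau}) = JnN \geq JnN$ immediately. (One should be slightly careful: if the spanning set happens to be degenerate the dimension could be strictly smaller than $nJN$, so I would either argue linear independence explicitly or observe that the generic/worst-case configuration realizes the full dimension; the cleanest route is to exhibit $nJN$ points at which the Gram-type evaluation matrix is nonsingular, using the localization property \eqref{Deteailed trapezoid}.)

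For the upper bound, the idea is to view $\mathcal{H}_{J,n,N,m,\tau}$ not as a linear space but as a subset of a nonlinear deep ReLU network class and invoke Lemma \ref{Lemma:Pseudo-for-DCNN}. Each basis element $\tilde{\times}_{J,j,k,m,\tau}(\xi_\ell\cdot x)$ is, by \eqref{basis-for-simpl}, a $PG_{J,m}$ network applied to a trapezoid $T_{\tau,t_k,t_{k+1}}$ of the scalar ridge input $\xi_\ell\cdot x$. Summing $nJN$ such elements with free coefficients $a_{jk\ell}$, and carrying out the parameter/depth/width bookkeeping exactly as stated in the text preceding the proposition, the whole family $\mathcal{H}_{J,n,N,m,\tau}$ sits inside a deep ReLU net class with $n^* = \mathcal{O}(JnNm^2)$ free parameters, depth $L = \mathcal{O}(mJ)$, and width $d_{\max} = \mathcal{O}(JnN(m+J))$. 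Applying \eqref{bound-for-pseudo} gives
\begin{equation*}
Pdim(\mathcal{H}_{J,n,N,m,\tau}) \leq C_0\, L n^* \log d_{\max} \leq C_0'\, (mJ)(JnNm^2)\log\bigl(JnN(m+J)\bigr).
\end{equation*}

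The last step is to substitute $m \sim \log n$ and absorb constants and lower-order logarithmic factors. With $m \sim \log n$ we have $m^3 J^2 \sim J^2 \log^3 n$, and since $J$ is treated as a bounded parameter (the text notes $J \leq 5$ in practice, so $J^3$ is a constant, though even keeping it as a polynomial factor is harmless) and $\log(JnN(m+J)) = \mathcal{O}(\log(JnN))$, the bound collapses to $C_1 JnN \log^3(JnN)$ for a constant $C_1$ depending only on $d$ (the $d$-dependence entering through the implied constants in the $\mathcal{O}$ estimates for the width and parameter count, which involve the ambient dimension via the ridge inputs). I would also use the second part of Lemma \ref{Lemma:CAPACITY-RELATION} only if a covering-number statement were needed; here Lemma \ref{Lemma:Pseudo-for-DCNN} applied directly suffices. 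The main obstacle is not any single estimate but getting the architecture accounting airtight — in particular verifying that the shared sub-networks ($SG_m$ inside $PG_{J,m}$, reused across all $k,\ell,j$) are counted consistently so that the stated $\mathcal{O}(JnNm^2)$ parameter bound and $\mathcal{O}(mJ)$ depth bound genuinely hold for the summed network, and ensuring the $\log^3$ exponent is exactly reproduced rather than $\log^{3+\epsilon}$; once that bookkeeping matches the claims already asserted in the paragraph before the proposition, the rest is routine substitution.
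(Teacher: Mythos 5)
Your proposal matches the paper's own derivation, which is only sketched in the paragraph preceding the proposition: the lower bound comes from Lemma \ref{Lemma:CAPACITY-RELATION} (pseudo-dimension of a $k$-dimensional linear space equals $k$), and the upper bound from Lemma \ref{Lemma:Pseudo-for-DCNN} applied to the ambient deep ReLU class with $n^*=\mathcal O(JnNm^2)$ parameters, depth $\mathcal O(mJ)$, and width $\mathcal O(JnN(m+J))$, followed by substituting $m\sim\log n$. Your added caveats (linear independence of the spanning set for the lower bound, and the bookkeeping needed to land exactly on $\log^3(JnN)$ rather than a higher log power) are legitimate points that the paper itself glosses over, but the route is the same.
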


Proposition \ref{Prop:pseudo-dimension} demonstrates that up to a logarithmic factor, linearization  does not essentially degrade  the expressivity of deep nets.
In the following, we show that $m\sim \mathcal O( \log n)$ as stated in Proposition \ref{Prop:pseudo-dimension} is feasible for the sketching basis, allowing us to achieve almost optimal approximation rates. To this end, we first introduce the well known Sobolev space. For $r>0$, let $\mathbb A\subseteq \mathbb R^d$ be a compact set and $W^r(L^2(\mathbb A))$ be the
Sobolev space for the domain $\mathbb A$. When $r=u$ is an
integer, a function $f\in L^2(\mathbb A)$ belongs to
$W^r(L^2(\mathbb A))$ if and only if its distributional
derivatives $D^u f$ of order $u$ are in $L^2(\mathbb A)$,
and the seminorm for $W^u(L^2(\mathbb A))$ is given by
$$
        |f|_{W^u(L^2(\mathbb
        A))}^2=\sum_{|\nu|=u}\|D^\nu f\|_{L^2(\mathbb
        A)}^2.
$$
The norm
for $W^u(L^2(\mathbb A))$ is obtained by adding $\|
f\|_{L^2(\mathbb
        A)}$ to $|f|_{W^u(L^2(\mathbb
        A))}$. For other values of $r$, we obtain
$W^r(L^2(\mathbb A))$ as the interpolation space \cite{devore1993constructive}
$$
     W^r(L^2(\mathbb A))=(L^2(\mathbb A),W^u(L^2(\mathbb
     A)))_{\theta,2},\quad \theta=r/u,0<r<u.
$$
The following theorem quantifies the distance between $W^r(L^2(\mathbb B_{1/2}^d))$ and $ \mathcal{H}_{J,n,N,m,\tau}$ for some $r>0$, showing that $m\sim \mathcal O(\log n)$ and linearization does not degrade the approximation performance of deep nets.

\begin{theorem}\label{Theorem:approximation}
Let   $d\geq 2$, $r>0$,   $J,n,N\in N$ satisfy $N\sim n^{d-1}$ and $J\geq r$. If $f\in W^{r+(d-1)/2}(L^2(\mathbb B^d_{1/2}))$,
$0<\tau\leq n^{-4J-1}$, and $m\sim \log n$, then
\begin{eqnarray}\label{approximation-rate-deep}
       && C'n^{-r-(d-1)/2}\|f\|_{ W^{r+(d-1)/2}(L^2(\mathbb B^d_{1/2}))} \nonumber\\
       &\leq& \mbox{dist}\left( W^r(L^2(\mathbb B_{1/2}^d)),\mathcal{H}_{J,n,N,m,\tau}\right) \nonumber\\
        &\leq& Cn^{-r-(d-1)/2}\|f\|_{ W^{r+(d-1)/2}(L^2(\mathbb B^d_{1/2}))},
\end{eqnarray}
where $C,C'$ are constants depending only on $r$ and $d$, and
\begin{equation}\label{distance}
    \mbox{dist}(\mathbb A,\mathbb B):=\sup_{f\in\mathbb A}\mbox{dist}(f,\mathbb B):=\sup_{f\in\mathbb A}\inf_{g\in\mathbb B}\|f-g\|_{L^2(\mathbb B^d_{1/2})}
\end{equation}
for two sets $\mathbb A,\mathbb B\in L^2(\mathbb B_{1/2}^d)$.
\end{theorem}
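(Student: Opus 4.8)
Estimate \eqref{approximation-rate-deep} splits into a lower and an upper half, and essentially all the work is in the upper half, which I would organize along the three steps of the road-map. \emph{Step 1: dimension leverage.} Given $f\in W^{r+(d-1)/2}(L^2(\mathbb B^d_{1/2}))$, the goal is to produce, for each direction $\xi_\ell$ of the minimal $\mu$-energy configuration, a univariate profile $g_\ell$ with $\|f-\sum_{\ell=1}^N g_\ell(\xi_\ell\cdot x)\|_{L^2(\mathbb B^d_{1/2})}\lesssim n^{-r-(d-1)/2}\|f\|_{W^{r+(d-1)/2}}$, each $g_\ell$ being (close to) a piecewise polynomial of degree $\le J-1$ on the uniform mesh $\{t_k\}_{k=0}^n$ of $[-1/2,1/2]$ (note $|\xi_\ell\cdot x|\le\|\xi_\ell\|\,\|x\|\le1/2$, so the profiles live precisely on $\Lambda_n$). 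I would obtain this through the Radon-type ridge argument of \cite{petrushev1998approximation,wang2020random}: the smoothing of the (inverse) Radon transform trades the $(d-1)/2$ extra orders of smoothness of $f$ for univariate profiles of smoothness $r$; a spherical quadrature against the ``almost equally spaced'' points $\{\xi_\ell\}$ (legitimate since $\mu>d-1$, by \cite{brauchart2015distributing}) discretizes the backprojection with $N\sim n^{d-1}$ nodes; and a degree-$\le J-1$ piecewise-polynomial approximation of each profile on mesh $1/n$ (possible precisely because $J\ge r$, which is exactly what defeats the saturation phenomenon) contributes the $n^{-r}$ factor. The two errors should assemble to the stated rate.

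\emph{Step 2: realization by the sketching basis.} Writing each profile piece in the monomial basis, $g_\ell\approx\sum_{k}\mathbf 1_{[t_k,t_{k+1}]}\sum_{j=0}^{J-1}b_{jk\ell}t^j$, I would set $a_{jk\ell}=b_{jk\ell}$ to get a candidate $g=\sum_{j,k,\ell}a_{jk\ell}\tilde\times_{J,j,k,m,\tau}(\xi_\ell\cdot x)\in\mathcal H_{J,n,N,m,\tau}$. On the slab $\{\xi_\ell\cdot x\in[t_k,t_{k+1}]\}$, Proposition~\ref{Prop:localized} gives $T_{\tau,t_k,t_{k+1}}\equiv1$, so Proposition~\ref{Prop:product-component-bound} yields $|\tilde\times_{J,j,k,m,\tau}(t)-t^j|\le c_2J2^{-m}$, while outside $[t_k-\tau,t_{k+1}+\tau]$ the atom vanishes. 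Thus $g$ departs from $\sum_\ell g_\ell(\xi_\ell\cdot x)$ by three controllable amounts: (i) the product-component error, at most $JnN\cdot\max_{j,k,\ell}|b_{jk\ell}|\cdot c_2J2^{-m}$, which $m\sim\log n$ with a sufficiently large implied constant (as in Proposition~\ref{Prop:pseudo-dimension}) pushes below $n^{-r-(d-1)/2}$ even though $\max|b_{jk\ell}|$ may grow like a fixed power of $n$; (ii) the overlap/ramp error, supported on the $\tau$-neighborhoods of the $O(n)$ breakpoints $t_k$, which a routine bound using $\tau\le n^{-4J-1}$ makes negligible; (iii) the Step~1 error already controlled. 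Summing these gives the right-hand inequality of \eqref{approximation-rate-deep}.

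\emph{Lower bound.} By Lemma~\ref{Lemma:CAPACITY-RELATION}, $\mathcal H_{J,n,N,m,\tau}$ is a linear space of dimension $JnN\asymp_J n^{d}$ (using $N\sim n^{d-1}$), so its $L^2(\mathbb B^d_{1/2})$-distance to the unit ball of $W^{r+(d-1)/2}(L^2(\mathbb B^d_{1/2}))$ is at least the Kolmogorov $K$-width of that ball with $K=JnN$. The classical width asymptotics --- or, more elementarily, a disjoint-bump packing argument ($2K$ bumps of width $\sim K^{-1/d}$, normalized in $W^{r+(d-1)/2}$, each of $L^2$-norm $\asymp K^{-(r+(d-1)/2)/d}$) --- give that width as $\asymp K^{-(r+(d-1)/2)/d}$, and since $J$ is a fixed bounded integer this is $\gtrsim n^{-r-(d-1)/2}$, the left-hand inequality of \eqref{approximation-rate-deep}.

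\emph{Main obstacle.} The crux is Step~1: making the Radon/ridge decomposition \emph{quantitative} with exactly the $(d-1)/2$ smoothness transfer while simultaneously controlling the spherical quadrature error purely through the separation and covering properties of minimal $\mu$-energy points, so that the two errors combine into the single sharp exponent $n^{-r-(d-1)/2}$ with no spurious logarithmic or polynomial factors. Once the hyper-parameter budget $m\sim\log n$, $\tau\le n^{-4J-1}$, $N\sim n^{d-1}$ is fixed, Step~2 and the lower bound are essentially bookkeeping.
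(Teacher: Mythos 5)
Your proposal is correct and follows essentially the same route as the paper: the upper bound via Petrushev's ridge-function dimension-leverage theorem combined with a univariate piecewise-polynomial approximation on the mesh $\{t_k\}$ realized by the trapezoid and product components (with $m\sim\log n$ absorbing the $2^{-m}$ gate errors and $\tau\le n^{-4J-1}$ killing the overlap terms), and the lower bound via Kolmogorov width theory for the Sobolev ball against a linear space of dimension $JnN\sim n^d$. The only cosmetic difference is that the paper first applies Jackson's inequality to pass to a global polynomial of degree $s\sim n$ and then takes local Taylor expansions at the $t_k$, whereas you posit piecewise polynomials of degree $\le J-1$ directly; the substance is identical.
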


As the ReLU function $\sigma$ is non-differentiable, it is impossible for shallow nets with a finite number of neurons to approximate smooth function well \cite{yarotsky2017error}. In particular, \cite[Theorem 6]{yarotsky2017error}  proves that any twice-differentiable nonlinear function defined on $\mathbb \mathbb B_{1,2}^d$ cannot be $\epsilon$-approximated by ReLU networks of fixed depth $L$ with the number of free parameters less than $c\epsilon^{-1/(2(L-2))}$, where $c$ is a  positive constant depending only on $d$. Deepening the network is necessary to break through such a saturation problem. In our construction, the parameter $J$ is introduced to address the saturation problem. As $J$ increases, Theorem \ref{Theorem:approximation} allows for large $r$   and therefore avoids the saturation problem. Recalling \cite{han2020depth,yarotsky2017error,petersen2018optimal} where the derived approximation rates are also almost optimal for all deep nets with the same sizes,
Theorem \ref{Theorem:approximation} together with Proposition \ref{Prop:pseudo-dimension} illustrates that linearization does not essentially shrink the capacity of the deep nets hypothesis spaces, implying that the proposed component-based sketching scheme is capable of maintaining the excellent learning performances of deep learning while successfully circumventing the conflict between optimization and generalization.

\subsection{Generalization error for the component-based sketching algorithm}

Our previous analysis quantifies the representation capability of the constructed deep ReLU nets (\ref{Hypothesis-space-deep}), showing that the component-based sketching scheme for deep ReLU nets succeeds in breaking through the saturation problem of shallow nets in approximating smooth functions. In this part, we show that the component-based sketching scheme is one of the most efficient learners by demonstrating that it is AROE given in Definition \ref{Def:optimal-algorithm} .

As shown in \cite[Chapter 3]{gyorfi2002distribution}, there are no provable learning algorithms that can learn any data well. Therefore, some a-priori information should be specified for analysis. In addition to the smoothness assumption for the regression function, i.e., $f_\rho\in W^{r+(d-1)/2}(L^2(\mathbb B^d_{1/2}))$, we also need a widely-used distortion assumption for the marginal distribution $\rho_X$ \cite{chui2020realization}.
Let $S_\rho$ be the identity mapping
$$
               L^2(\mathbb B_{1/2}^d) ~~ {\stackrel{S_\rho}{\longrightarrow}}~~ L^2_{\rho_X},
$$
and $D_{\rho _{X}}=$ $\Vert S_\rho\Vert$.  $D_{\rho _{X}}$  is called the distortion of $\rho _{X}$, which measures how much $\rho_{X}$ distorts the Lebesgue measure.  The distortion assumption, $D_{\rho _{X}}<\infty$, which obviously holds for the uniform distribution, is necessary since our analysis is built upon the locality of deep ReLU nets \cite{chui2020realization} and has been widely used in the literature \cite{shi2011concentration,lin2018generalization,chui2020realization,wang2020random}.  Define $\Lambda_{dis}$ as the set of $\rho_X$ satisfying $D_{\rho _{X}}<\infty.$ We then present our generalization error estimates for the component-based sketching scheme.

% Denote by $\mathcal M(r,d)$  the set of all distributions satisfying $D_{\rho _{X}}<\infty$ and $f_\rho\in W^{r+(d-1)/2}(L^2(\mathbb B^d_{1/2}))$.
% We enter into a competition over $ \Psi_D$,  which denotes the class of all functions derived from the dataset $D$. Define
% \begin{eqnarray*}
%           e_D(r,d)
%            := \sup_{\rho\in \mathcal M(r,d)}\inf_{f_D\in \Psi_D}\mathbf E(\|f_\rho-f_{D}\|^2_\rho).
% \end{eqnarray*}
% It is easy to check that $ e_D(r,d)$ provides a baseline to quantify learning algorithms.

% We gather our assumptions as follows.
% \begin{assumption}\label{Assumption:distribution}
%  Let $d\geq 2$ and $J\in\mathbb N$. Assume that  $f_\rho\in W^{r+(d-1)/2}(L^2(\mathbb B^d_{1/2}))$ and $D_{\rho _{X}}<\infty$ for $0<r\leq J$.
% \end{assumption}

\begin{theorem}\label{Theorem:gene-deep}
Let $r>0$, $d\geq 2$, and $J\in\mathbb N$ with $J\geq r$. If $n,N\in N$ satisfy $N\sim n^{d-1}$, $n\sim|D|^\frac{1}{2r+2d-1}$,
$0<\tau\leq n^{-4J-1}$, and $m\sim\log n$, then there holds
\begin{eqnarray}\label{Generalization-deep}
  && C'_1 |D|^{\frac{-2r-d+1}{2r+2d-1}} \leq
  e(W^{r+(d-1)/2}(L^2(\mathbb B^d_{1/2})),\Lambda_{dis}) \nonumber\\
  &\leq&
  \mathcal V_{W^{r+(d-1)/2}(L^2(\mathbb B^d_{1/2})),\Lambda_{dis}}\nonumber\\
  &\leq&
  C_2'e(W^{r+(d-1)/2}(L^2(\mathbb B^d_{1/2})),\Lambda_{dis})\log |D|\nonumber\\
  &\leq& C_2'   |D|^{ \frac{-2r-d+1}{2r+2d-1}}\log |D|,
\end{eqnarray}
where $C_1'$ and $C_2'$ are constants depending only on $M$, $r$, $d$, $D_{\rho_X}$, and $\|f_\rho\|_{ W^{r+(d-1)/2}(L^2(\mathbb B^d_{1/2}))}$.
\end{theorem}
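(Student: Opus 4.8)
The plan is to establish \eqref{Generalization-deep} by proving separately a lower bound for the benchmark quantity $e(\cdot,\cdot)$ and an upper bound for the worst-case error $\mathcal V_{\cdot,\cdot}$ of the sketching estimator, and then to glue them together with the trivial inequality $e(\Theta,\Lambda)\le\mathcal V_{\Theta,\Lambda}(f_D)$, valid for every estimator. Write $\Theta:=W^{r+(d-1)/2}(L^2(\mathbb B^d_{1/2}))$ and $s:=r+(d-1)/2$. Since $\Lambda_{dis}$ contains the uniform distribution on $\mathbb B^d_{1/2}$ and $\Theta$ is a ball in a Sobolev space of smoothness $s$, restricting the supremum defining $e$ to distributions with uniform marginal and, say, Gaussian noise reduces matters to the classical minimax lower bound for nonparametric regression over a Sobolev ball, which is of order $|D|^{-2s/(2s+d)}=|D|^{\frac{-2r-d+1}{2r+2d-1}}$ and follows by the standard Fano/Assouad argument (see, e.g., \cite{gyorfi2002distribution}). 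This yields the first inequality of \eqref{Generalization-deep}; the second is trivial, and the last will follow from the matching classical minimax upper rate together with the upper bound on $\mathcal V$ established next.

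To bound $\mathcal V$, fix $\rho$ with $\rho_X\in\Lambda_{dis}$, $f_\rho\in\Theta$, and $|y|\le M$ almost surely. Since $f_{D,J,n,N,m,\tau}=\pi_Mf^*_{D,J,n,N,m,\tau}$ is the $M$-truncated least-squares estimator over the \emph{linear} space $\mathcal H_{J,n,N,m,\tau}$, the standard oracle inequality for truncated least squares over a finite-capacity class \cite{gyorfi2002distribution}, combined with the covering-number estimate of Lemma \ref{Lemma:CAPACITY-RELATION} at resolution $1/|D|$, gives
$$\mathbf E\big[\|f_{D,J,n,N,m,\tau}-f_\rho\|_\rho^2\big]\le C\frac{M^2\,Pdim(\mathcal H_{J,n,N,m,\tau})\,\log|D|}{|D|}+C\inf_{g\in\mathcal H_{J,n,N,m,\tau}}\|g-f_\rho\|_\rho^2,$$
the two summands being the sample (variance) term and the approximation (bias) term. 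For the bias term, since $f_\rho\in W^{s}(L^2(\mathbb B^d_{1/2}))$ and $N\sim n^{d-1}$, $0<\tau\le n^{-4J-1}$, $m\sim\log n$, Theorem \ref{Theorem:approximation} supplies $g^*\in\mathcal H_{J,n,N,m,\tau}$ with $\|f_\rho-g^*\|_{L^2(\mathbb B^d_{1/2})}\le C'n^{-s}\|f_\rho\|_{W^{s}(L^2(\mathbb B^d_{1/2}))}$; converting to the $\rho$-norm through the distortion, $\|g-f_\rho\|_\rho\le D_{\rho_X}\|g-f_\rho\|_{L^2(\mathbb B^d_{1/2})}$, so the bias term is at most $CD_{\rho_X}^2\|f_\rho\|_{W^{s}(L^2(\mathbb B^d_{1/2}))}^2\,n^{-(2r+d-1)}$. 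For the variance term, truncation does not increase the pseudo-dimension, so Proposition \ref{Prop:pseudo-dimension} (applicable since $m\sim\log n$), together with $N\sim n^{d-1}$ and $J$ fixed, gives $Pdim(\mathcal H_{J,n,N,m,\tau})\lesssim n^d\log^3 n$, whence the variance term is $\lesssim n^d\log^3 n\,\log|D|/|D|$.

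Choosing $n\sim|D|^{1/(2r+2d-1)}$ as in the statement balances the two contributions: both become of order $|D|^{\frac{-2r-d+1}{2r+2d-1}}$ up to a power of $\log|D|$ (recalling $\log n\sim\log|D|$), which already suffices for the AROE notion of Definition \ref{Def:optimal-algorithm}; a careful accounting of the logarithmic factors keeps them summarized by the single factor $\log|D|$ displayed in \eqref{Generalization-deep}. Because the resulting bound is uniform over all $\rho$ once $D_{\rho_X}$ and $\|f_\rho\|_{W^{s}(L^2(\mathbb B^d_{1/2}))}$ are controlled, taking the supremum over $\rho\in\mathcal M(\Theta,\Lambda_{dis})$ yields the asserted upper bound on $\mathcal V$, and combining with the lower and upper minimax rates of the first paragraph closes the chain \eqref{Generalization-deep}.

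I expect the main obstacle to be the variance term rather than the bias term: one must verify that passing to the truncated class $\pi_M\mathcal H_{J,n,N,m,\tau}$ does not inflate the pseudo-dimension, so that Lemma \ref{Lemma:CAPACITY-RELATION} converts Proposition \ref{Prop:pseudo-dimension} into the covering-number control at scale $1/|D|$ demanded by the oracle inequality, and that the several polylogarithmic factors coming from Proposition \ref{Prop:pseudo-dimension} and from the choice $m\sim\log n$ can all be absorbed without perturbing the exponent $\frac{-2r-d+1}{2r+2d-1}$; the transition from the Lebesgue-$L^2$ approximation bound of Theorem \ref{Theorem:approximation} to the $\rho$-norm via $D_{\rho_X}$ is the only place where the distortion assumption is used and is otherwise routine.
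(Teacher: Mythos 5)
Your overall architecture matches the paper's: a truncated least-squares oracle inequality for the variance term, Theorem \ref{Theorem:approximation} plus the distortion $D_{\rho_X}$ for the bias term, balancing via $n\sim|D|^{1/(2r+2d-1)}$, and the classical Sobolev minimax lower bound (the paper cites \cite[Theorem 3.2]{gyorfi2002distribution} and \cite{maiorov2006approximation}) for the first inequality. The one place where your argument as written falls short of the stated bound is the variance term. You control the capacity of $\mathcal H_{J,n,N,m,\tau}$ through Proposition \ref{Prop:pseudo-dimension}, i.e.\ $Pdim\lesssim JnN\log^3(JnN)$, and then feed this into a covering-number-based oracle inequality; since $\log(JnN)\sim\log|D|$, this yields a variance contribution of order $n^d\log^3|D|\cdot\log|D|/|D|$, hence a final rate with $\log^4|D|$ rather than the single $\log|D|$ asserted in \eqref{Generalization-deep}. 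Your remark that ``a careful accounting of the logarithmic factors'' absorbs this is not something the pseudo-dimension route can deliver. The paper avoids the issue by exploiting that, once the components are fixed offline, $\mathcal H_{J,n,N,m,\tau}$ is a \emph{linear} space of dimension exactly $u=JnN$, and invoking \cite[Theorem 11.3]{gyorfi2002distribution} (Lemma \ref{Lemma:ORACLE}), whose variance term is $\tilde CM^2 u\log|D|/|D|$ with no extra logarithms. The fix within your framework is equally simple: use the first assertion of Lemma \ref{Lemma:CAPACITY-RELATION}, $Pdim(G)=k$ for a $k$-dimensional linear space, in place of the deep-net upper bound of Proposition \ref{Prop:pseudo-dimension}; the $\log^3$ factor then disappears and your chain closes with the claimed $\log|D|$. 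Your residual worries (truncation not inflating the capacity, and the passage from Lebesgue $L^2$ to $\|\cdot\|_\rho$ via $D_{\rho_X}$) are indeed routine and are handled implicitly by Lemma \ref{Lemma:ORACLE} and the definition of the distortion, respectively. Everything else in your proposal is sound and coincides with the paper's proof.
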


Theorem \ref{Theorem:gene-deep} shows that, up to a logarithmic factor, the proposed component-based sketching scheme reaches the optimal learning rates that the best theoretical learning algorithms based on $D$ can achieve, implying that it is an AROE in learning smooth functions according to Definition \ref{Def:optimal-algorithm}.
% This demonstrates the power of the proposed algorithm in the framework of learning theory.
Recalling shallow nets saturate at smoothness with $r=1$  \cite{wang2020random,yarotsky2017error}, Theorem \ref{Theorem:gene-deep} theoretically demonstrates the power of depth in generalization.

Despite the optimal learning rates presented in \eqref{Generalization-deep}, it remains to determine the parameters $n$ since the smoothness index $r$ is frequently not accessible in practice. To fix $n$, we can use the
so-called ``cross-validation'' method \cite[Chapter 8]{gyorfi2002distribution}
or the ``hold-out'' method \cite[Chapter 7]{gyorfi2002distribution}. For the sake of
brevity, we only provide the theoretical assessment of the latter one.
There are three steps to implement the ``hold-outstrategy '': (i) Divide the set of samples into two
independent subsets, $\tilde{D}_1$ with cardinality $|\tilde{D}_1|$ and $\tilde{D}_2$ with cardinality $|\tilde{D}_2|$; (ii) use $\tilde{D}_1$ to construct the sequence $\{f_{\tilde{D}_1,J,n,N,m,\tau}\}_{n=1}^{|\tilde{D}_1|^{\frac{1}{2d-1}}}$; and (iii) use $\tilde{D}_2$ to select an appropriate value of $n$, thus obtaining the final
estimator $f_{D,J,\hat{n},\hat{N},m,\tau}$ with $\hat{N}\sim \hat{n}^{d-1}$.

Let us explain how $\tilde{D}_2$ is used for the selection of $n$. We
consider the set
$$
        \Xi=\{1,2,\dots,\lceil |D|^{\frac{1}{2d-1}}\rceil\}.
$$
Then, the adaptive selection for the kernel parameter is given by
\begin{equation}\label{holdout}
       \hat{n}=\arg\min_{n\in\Xi}\frac1{|\tilde{D}_2|}\sum_{(x_i,y_i)\in\tilde{D}_2}(f_{\tilde{D}_1,J,n,N,m,\tau}\}(x_i)-y_i)^2,
\end{equation}
with $N\sim n^{d-1}$. The following Theorem \ref{THEOREM: adaption selection} illustrates the generalization capability of the component-based sketching algorithm equipped with the proposed parametric selection strategy, and it shows that the proposed parameter selection strategy is optimal.

\begin{theorem}\label{THEOREM: adaption selection}
Let $d\geq 2$ and  $J\in\mathbb N$. If
$0<\tau\leq n^{-4J-1}$ and $m\sim\log n$, then
\begin{eqnarray}\label{gener-para-sele}
  && C'_1 |D|^{\frac{-2r-d+1}{2r+2d-1}} \leq
  \sup_{f_\rho\in W^{r+(d-1)/2}(L^2(\mathbb B^d_{1/2}))}
  \mathbf E \left\{\|f_{\tilde{D}_1,J,\hat{n},\hat{N},m,\tau}-f_\rho\|_\rho^2\right\} \nonumber\\
 &\leq& C_3' (J+D_{\rho_X}^2) |D|^{\frac{-2r-d+1}{2r+2d-1}}\log |D|,
\end{eqnarray}
where $C_3'$ is a constant  depending only on $M$, $r$, $d$, and $\|f_\rho\|_{ W^{r+(d-1)/2}(L^2(\mathbb B^d_{1/2}))}$.
\end{theorem}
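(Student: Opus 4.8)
The plan is to prove an oracle inequality for the hold-out rule \eqref{holdout} and then feed it with Theorem~\ref{Theorem:gene-deep} applied to the training half $\tilde D_1$. First I would condition on $\tilde D_1$: with $N\sim n^{d-1}$ fixed through $n$, the collection $\mathcal F_{\tilde D_1}:=\{f_{\tilde D_1,J,n,N,m,\tau}: n\in\Xi\}$ is then a \emph{deterministic} family of at most $\lceil|D|^{1/(2d-1)}\rceil$ functions, each bounded by $M$ in absolute value because of the truncation operator $\pi_M$ in \eqref{RLS-deep-final}. Moreover, $f_{\tilde D_1,J,\hat n,\hat N,m,\tau}$ is, by \eqref{holdout}, exactly the empirical-risk minimizer over $\mathcal F_{\tilde D_1}$ evaluated on the sample $\tilde D_2$, which is independent of $\tilde D_1$ by construction.

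\textbf{Oracle inequality.} For bounded regression, the excess-loss variable $g\mapsto (g(x)-y)^2-(f_\rho(x)-y)^2$ has mean $\|g-f_\rho\|_\rho^2$, is bounded by $8M^2$, and has variance at most $16M^2\|g-f_\rho\|_\rho^2$ whenever $\|g\|_\infty\le M$. Applying Bernstein's inequality to the $|\tilde D_2|$-average of this variable for each fixed $g\in\mathcal F_{\tilde D_1}$, taking a union bound over the $|\Xi|$ members, and integrating the resulting tail (the standard argument of \cite[Chapter 7]{gyorfi2002distribution}) yields
$$
\mathbf E\Big\{\|f_{\tilde D_1,J,\hat n,\hat N,m,\tau}-f_\rho\|_\rho^2\,\Big|\,\tilde D_1\Big\}
\le 2\min_{n\in\Xi}\|f_{\tilde D_1,J,n,N,m,\tau}-f_\rho\|_\rho^2+\frac{c\,M^2(\log|\Xi|+1)}{|\tilde D_2|},
$$
with $c$ an absolute constant.

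\textbf{Choosing the oracle index and assembling.} Taking expectation over $\tilde D_1$ and bounding the minimum by the single choice $n^\star\sim|\tilde D_1|^{1/(2r+2d-1)}$, which lies in $\Xi$ since $r>0$ forces $n^\star\le|\tilde D_1|^{1/(2d-1)}\le|D|^{1/(2d-1)}$, and using $N^\star\sim (n^\star)^{d-1}$, Theorem~\ref{Theorem:gene-deep} applied to the sample $\tilde D_1$ gives $\mathbf E\{\|f_{\tilde D_1,J,n^\star,N^\star,m,\tau}-f_\rho\|_\rho^2\}\le C_2'\,|\tilde D_1|^{(-2r-d+1)/(2r+2d-1)}\log|\tilde D_1|$ uniformly over $f_\rho\in W^{r+(d-1)/2}(L^2(\mathbb B^d_{1/2}))$ with bounded norm. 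Splitting the data so that $|\tilde D_1|\asymp|\tilde D_2|\asymp|D|$, we have $\log|\Xi|\le\log|D|$, and since $\frac{2r+d-1}{2r+2d-1}<1$ the residual term $\frac{M^2\log|D|}{|D|}$ is of strictly smaller order than $|D|^{(-2r-d+1)/(2r+2d-1)}\log|D|$; combining the three displays delivers the upper bound in \eqref{gener-para-sele}, with the stated dependence $C_3'(J+D_{\rho_X}^2)$ obtained by tracking how $J$ enters through the component construction of Theorem~\ref{Theorem:approximation} and how $D_{\rho_X}$ enters through the distortion estimate $\|g\|_\rho\le D_{\rho_X}\|g\|_{L^2(\mathbb B^d_{1/2})}$ used in passing from $L^2$ to $L^2_{\rho_X}$. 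The lower bound in \eqref{gener-para-sele} is precisely the lower bound on $e(W^{r+(d-1)/2}(L^2(\mathbb B^d_{1/2})),\Lambda_{dis})$ already recorded in Theorem~\ref{Theorem:gene-deep}.

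\textbf{Main obstacle.} The crux is the oracle inequality: obtaining it with the sharp leading constant and an additive term of the correct order $\log|\Xi|/|\tilde D_2|$ requires the variance–expectation bound for the excess loss combined with a careful Bernstein plus union-bound estimate, rather than a crude uniform deviation bound over $\mathcal F_{\tilde D_1}$ (which would degrade the exponent). The remaining ingredients---verifying $n^\star\in\Xi$, checking that the $\log|D|/|D|$ term is dominated, and propagating the constants' dependence on $J$ and $D_{\rho_X}$---are routine bookkeeping.
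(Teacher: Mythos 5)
Your proposal is correct and follows essentially the same route as the paper's proof: condition on $\tilde D_1$, apply a Bernstein-type inequality with the variance--expectation bound $\mathbf E[(\eta^n)^2]\le 16M^2\,\mathbf E[\eta^n]$ to the excess loss over $\tilde D_2$, union-bound over the grid $\Xi$, compare $\hat n$ to an oracle index, invoke Theorem~\ref{Theorem:gene-deep} for that index, and integrate the tail. The only cosmetic difference is that you package the concentration step as a clean conditional oracle inequality and bound the minimum by the deterministic choice $n^\star\sim|\tilde D_1|^{1/(2r+2d-1)}$, whereas the paper works with the population-risk minimizer $n^*$ over $\Xi$ and the two-sided bounds of Lemma~\ref{Lemma: cross-validation}; these are interchangeable.
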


% Theorem \ref{THEOREM: adaption selection} shows that the
% ``hold-out'' approach does not slow down the learning rate of
% ELM(R).

% \subsection{Generalization error for distributed sketching}

% \begin{theorem}\label{Theorem:distributed}
% If $D=\{(x_i,y_i)\}$ are drawn i.i.d. according to some unknown distribution $\rho$, then with confidence
% \end{theorem}

\begin{figure*}[t]
	\centering
    %第一行图片展示
	\subfigure{
		%左标题1
        \rotatebox{90}{\scriptsize{~~~~~~~~~~~~Testing RMSE}}
    	\includegraphics[width=5.5cm,height=2.5cm]{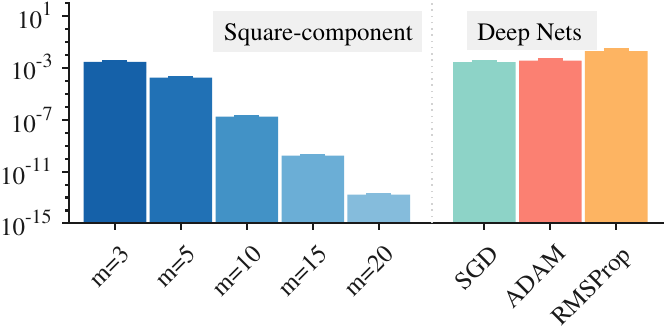}
	}
		\subfigure{
		\includegraphics[width=5.5cm,height=2.5cm]{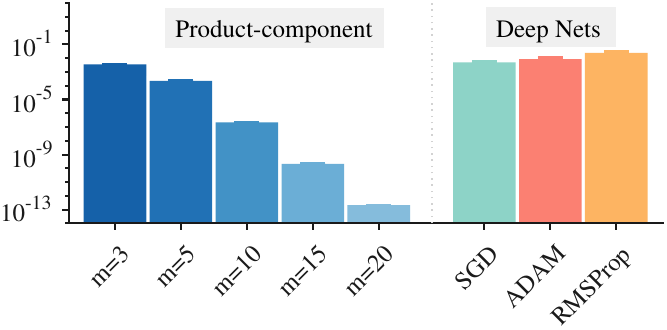}
	}
		\subfigure{
		\includegraphics[width=5.5cm,height=2.5cm]{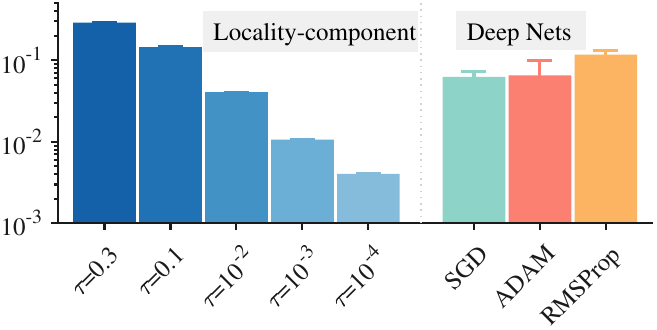}
	}
	% 两行图片的间隙有点大，通过vspace进行微调
	\vspace{-3mm}
    % 由于上面已经用了subfigure，下面我们希望从 a 重新编号，而不是从 d 开始，清零。
	\setcounter{subfigure}{0}
    % 第二行图片展示
    \subfigure[Square function]{
        % 左标题2
		\rotatebox{90}{\scriptsize{~~~~~~~~~~~Testing time(sec)}}
		\includegraphics[width=5.5cm,height=2.5cm]{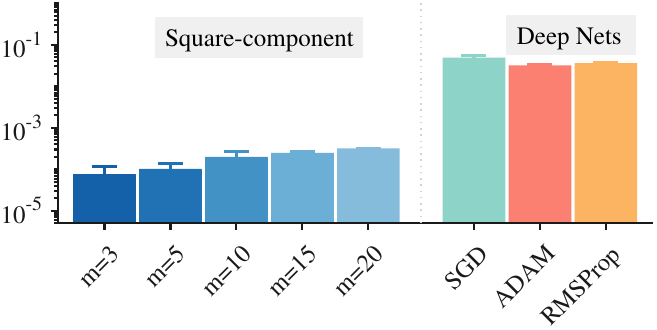}
	}
	\subfigure[Product function]{
		\includegraphics[width=5.5cm,height=2.5cm]{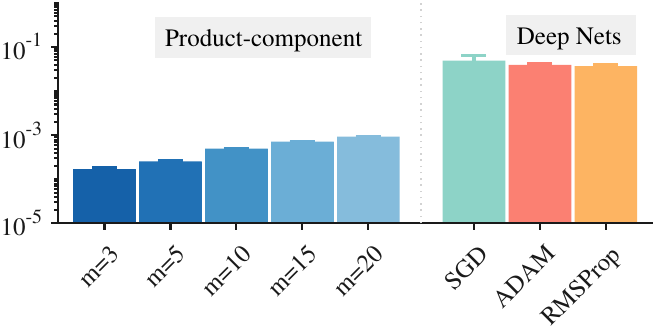}
	}
	\subfigure[Indicator function]{
		\includegraphics[width=5.5cm,height=2.5cm]{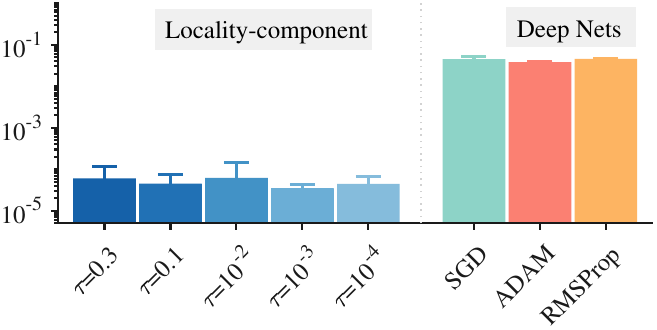}
	}
	\caption{Comparison of the testing RMSE and testing time between the constructed components and fully connected networks for approximating the square function, product function, and indicator function}
	\label{constructVStraining}
\end{figure*}

\begin{figure}[!t]
\centering
\centering
\includegraphics*[width=6cm,height=3cm]{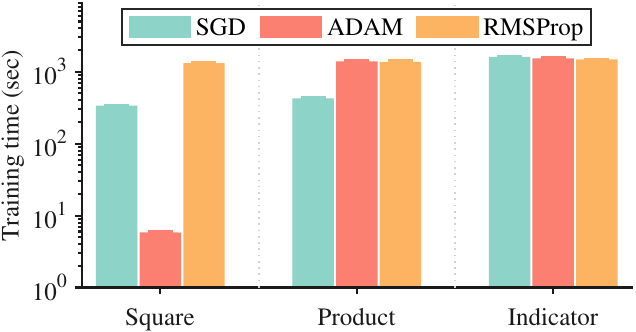}
\caption{The training time required for fully connected networks to approximate the square function, product function, and indicator function}
\label{DeepnetTrtime}
\end{figure}

\begin{figure*}[t]
    \centering
    \subfigure{\includegraphics[width=4.64cm,height=2.8cm]{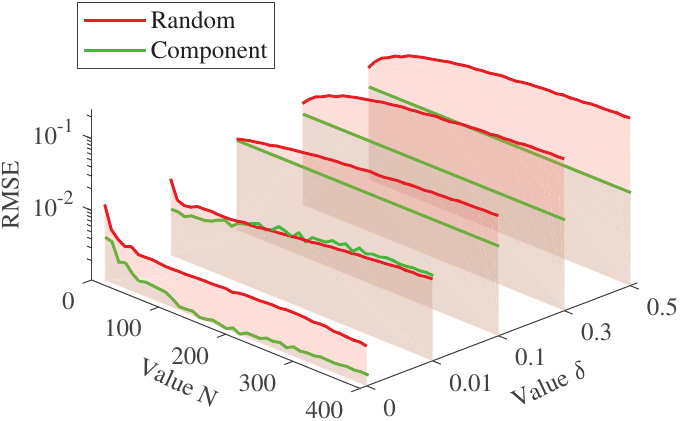}}\hspace{-0.1in}
    \subfigure{\includegraphics[width=4.64cm,height=2.8cm]{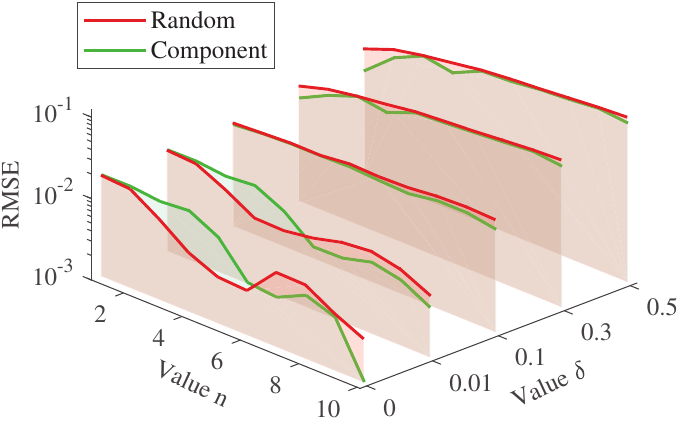}}\hspace{-0.1in}
    \subfigure{\includegraphics[width=4.64cm,height=2.8cm]{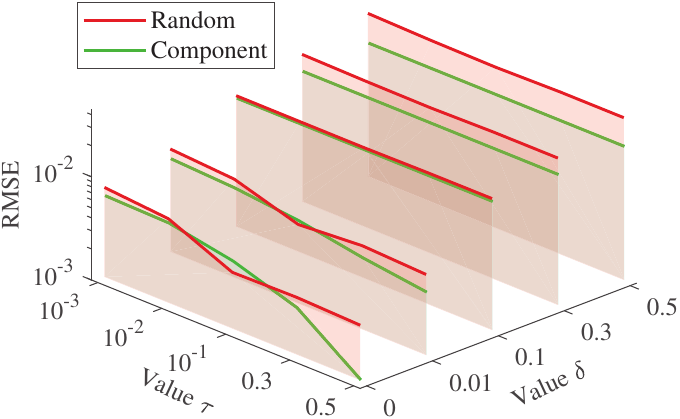}}\hspace{-0.1in}
    \subfigure{\includegraphics[width=4.64cm,height=2.8cm]{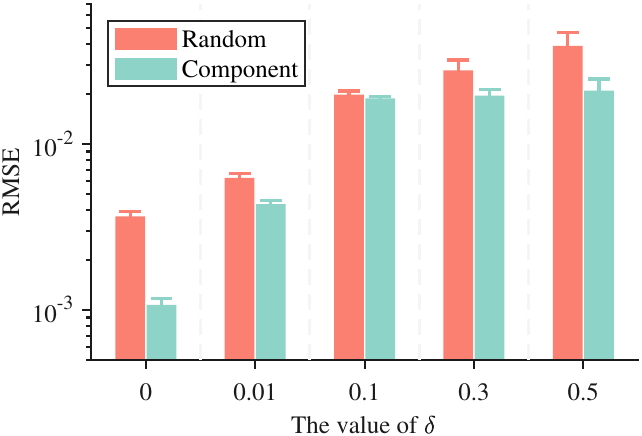}}\\
    \subfigure{\includegraphics[width=4.64cm,height=2.8cm]{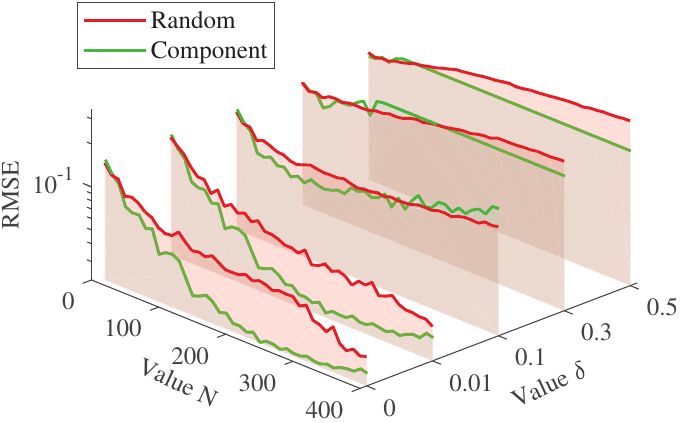}}\hspace{-0.1in}
    \subfigure{\includegraphics[width=4.64cm,height=2.8cm]{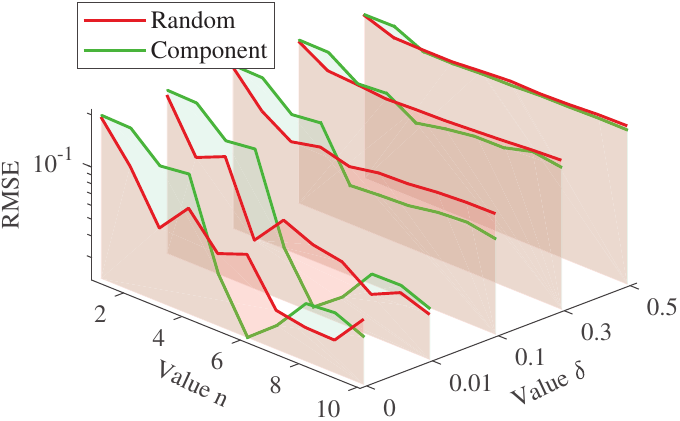}}\hspace{-0.1in}
    \subfigure{\includegraphics[width=4.64cm,height=2.8cm]{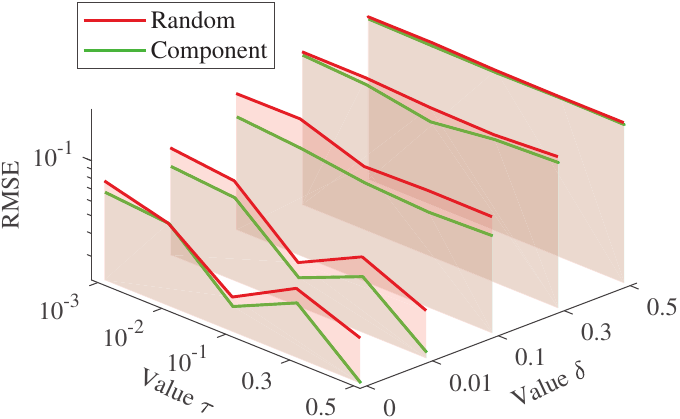}}\hspace{-0.1in}
    \subfigure{\includegraphics[width=4.64cm,height=2.8cm]{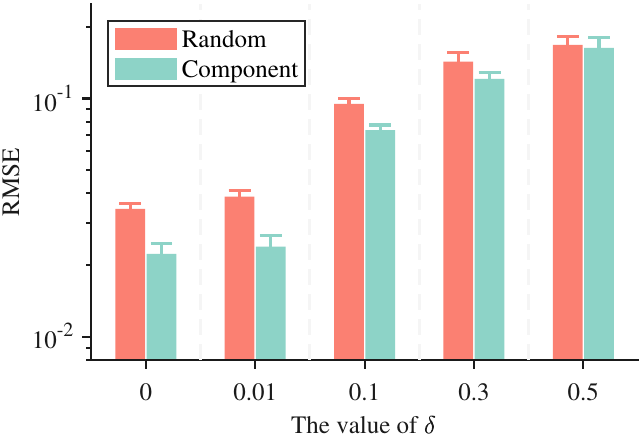}}
    \caption{Comparison of the testing RMSE for the two sketching techniques under varying parameter values and noise levels. The subfigures in the top row represent the results for the data generated by$f_1$, while the subfigures in the bottom row correspond to the data generated by $f_2$}\label{fig_sketching_techs}
    \vspace{-0.1in}
\end{figure*}

\begin{figure*}[t]
    \centering
    \subfigure{\includegraphics[width=3.86cm,height=2.5cm]{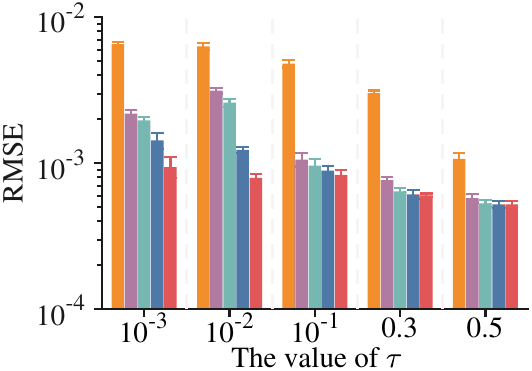}}\hspace{-0.045in}
    \subfigure{\includegraphics[width=3.56cm,height=2.5cm]{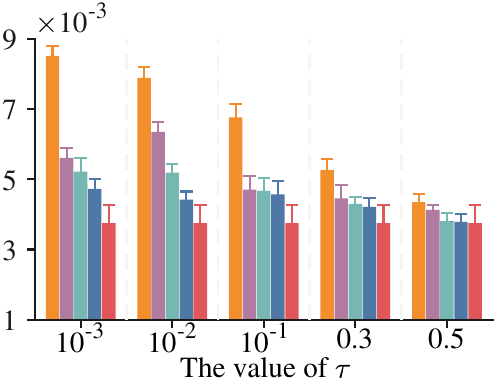}}\hspace{-0.045in}
    \subfigure{\includegraphics[width=3.56cm,height=2.5cm]{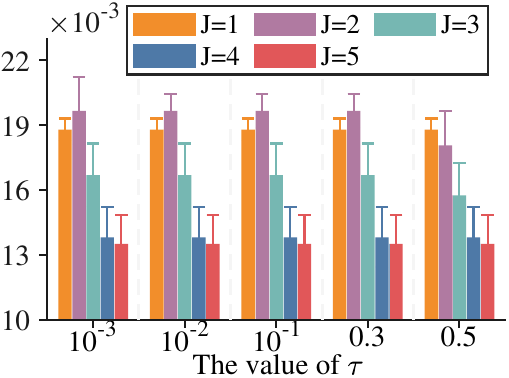}}\hspace{-0.045in}
    \subfigure{\includegraphics[width=3.56cm,height=2.5cm]{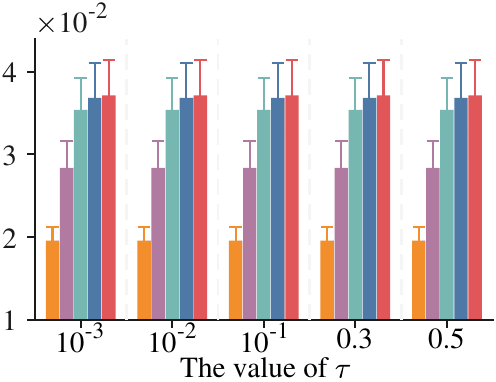}}\hspace{-0.045in}
    \subfigure{\includegraphics[width=3.56cm,height=2.5cm]{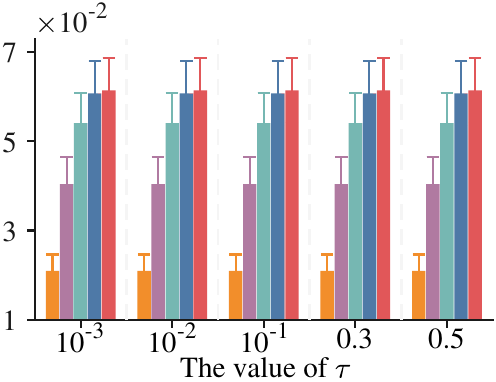}}\\
    \vspace{-0.08in}
    \setcounter{subfigure}{0}
    \subfigure[$\delta=0$]{\includegraphics[width=3.86cm,height=2.5cm]{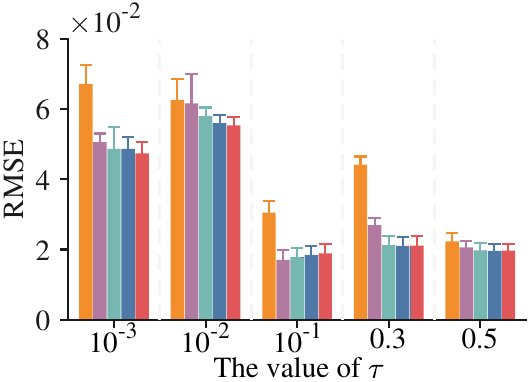}}\hspace{-0.045in}
    \subfigure[$\delta=0.01$]{\includegraphics[width=3.56cm,height=2.5cm]{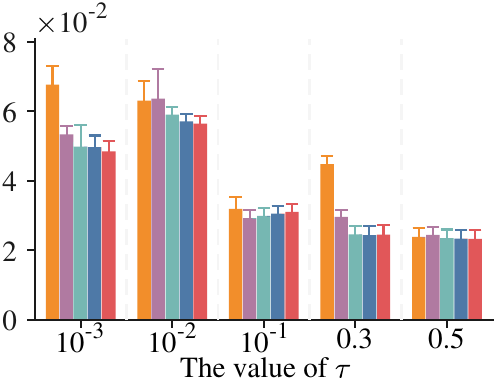}}\hspace{-0.045in}
    \subfigure[$\delta=0.1$]{\includegraphics[width=3.56cm,height=2.5cm]{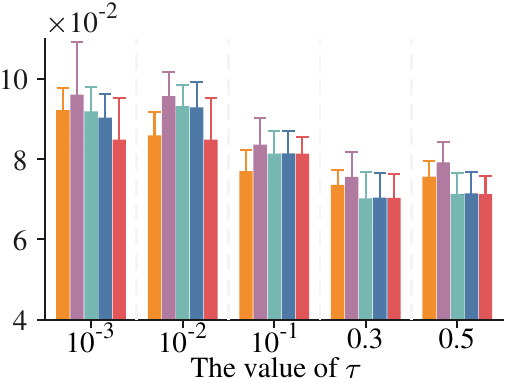}}\hspace{-0.045in}
    \subfigure[$\delta=0.3$]{\includegraphics[width=3.56cm,height=2.5cm]{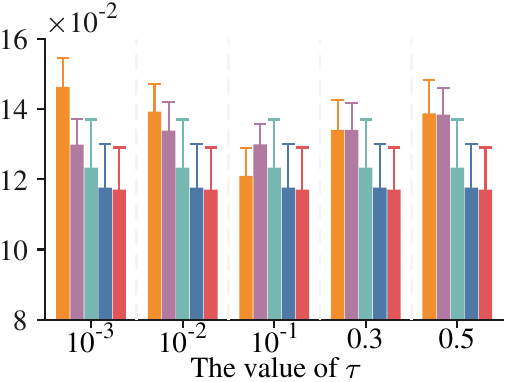}}\hspace{-0.045in}
    \subfigure[$\delta=0.5$]{\includegraphics[width=3.56cm,height=2.5cm]{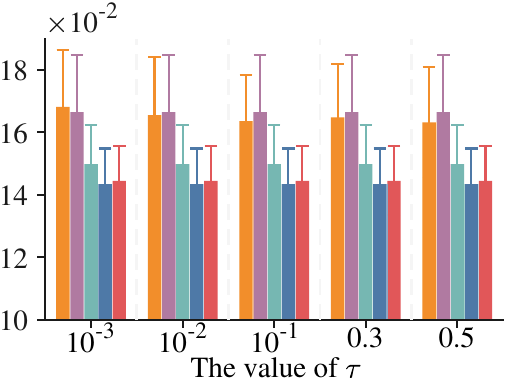}}
    \caption{Relationship between testing RMSE and the value $J$ of the frequency parameter under varying $\tau$ values and noise levels. The subfigures in the top row represent the results for the data generated by $f_1$, while the subfigures in the bottom row correspond to the data generated by $f_2$}\label{fig_freq_role}
    \vspace{-0.1in}
\end{figure*}

\section{Experimental Results}\label{Sec.experiments}
In this section, we present both toy simulations and real-world experiments to demonstrate the efficiency and effectiveness of the proposed method.
The comparison methods include: 1) 13 BP-type algorithms from MATLAB's ``Deep Learning Toolbox" \cite{MathWorks2020} for shallow fully connected
neural networks, specifically comprising 5 gradient descent-related algorithms such as gradient descent (GD),  GD with adaptive learning rate
(GDA), GD with momentum (GDM), GDM and adaptive learning rate (GDX), and resilient BP (RP), as well as 8 conjugate gradient descent-related
algorithms such as Levenberg-Marquardt (LM), Bayesian regularization (BR), BFGS quasi-Newton (BFG), CG-BP with Powell-Beale (CGB) restarts,  CG-BP
  with Fletcher-Reeves (CGF) updates, CG-BP with Polak-Ribi\'{e}re (CGP) updates, one-step secant (OSS), and scaled CG (SCG); 2) deep fully
  connected neural networks with optimizers such as stochastic gradient descent (SGD) \cite{goodfellow2016deep} and adaptive moment estimation
  (Adam) \cite{kingma2014adam}; 3) random sketching for neural networks (RSNN) \cite{wang2020random}. For all fully connected neural networks, we
  use the ReLU activation function. For deep networks, we set the widths of the hidden layers to be equal. All simulations were run on a desktop
  workstation equipped with an Intel(R) Core(TM) i9-10980XE 3.00 GHz CPU, an Nvidia GeForce RTX 3090 GPU, 128 GB of RAM, and Windows 10. The
  results are recorded by averaging the results of multiple individual trials, utilizing the best parameters identified through the grid search.

\subsection{Synthetic Results}\label{Syn_ex}

% In this part, we carried out three simulations to verify our theoretical statements. The first simulation
% is devoted to illustrating the superiority of component-based sketching compared with random sketching. The second simulation is performed to demonstrate the role of adding the locality information in the frequency domain, which is related the frequency parameter $J$. The third simulation focuses on showing the power of the proposed method in terms of generalization performance and training time when compared to the above involved methods.
In this part, we conduct four simulations to validate our theoretical assertions. The first simulation is designed to illustrate the prominent benefits of component construction as compared to neuron-based training. The second simulation aims to demonstrate the superiority of component-based sketching over random sketching. The third simulation is performed to illustrate the importance of incorporating locality information in the frequency domain, which is tied to the frequency parameter $J$. The last simulation highlights the strength of the proposed method, particularly in terms of generalization performance and training time, compared to the aforementioned methods.

Before conducting simulations, we describe the process of generating synthetic data. The inputs $\{\bm{x}_i\}_{i=1}^p$ of the training set are independently drawn from a uniform distribution on the ball $\mathbb{B}_{1/2}^d$ with $d=3$ or $d=4$. The corresponding outputs
$\{y_i\}_{i=1}^N$ are generated according to the regression models in the form of $y_i=f_j(\bm{x}_i)=g_j(r_j(\bm{x}_i))+\varepsilon_i$ for $i=1,2,\dots,N$ and $j=1,2$, where
$\varepsilon_i$ is independent Gaussian noise from $\mathcal{N}(0,\delta^2)$, the function $g_1(r_1)$ is defined as
\begin{equation*}
g_1(r_1)=(r_1-0.1)(r_1-0.5)(r_1-0.9)
\end{equation*}
with $r_1(\bm{x})=2\|\bm{x}\|_2$
for 3-dimensional data,
and $g_2(r_2)$ is defined as
\begin{equation*}
g_2(r_2)=\left\{
\begin{array}{ll}
(1-r_2)^6(35r_2^2+18r_2+3), & \quad\mbox{if} ~ r_2\leq 1 \\
0, & \quad \mbox{otherwise}
\end{array}
\right.
\end{equation*}
with $r_2(\bm{x})=2.2\|\bm{x}\|_2$ for 4-dimensional data. Generalization abilities are evaluated on the testing set $\{(\bm{x}_i',y_i')\}_{i=1}^{p'}$, which is generated similarly to the training set but with the assurance that $y_{i}'=g_j(r_j(\bm{x}_i'))$.
In these simulations, the sizes of the training and testing sets are set to 2000 and 1000, respectively, with the exception of Simulation 1.

{\bf Simulation 1.} In this simulation, we compare the proposed component constructions with neuron-based training of fully connected networks for approximating the square function $y(t)=t^2$, the product function $y(t_1,t_2)=t_1t_2$, and the indicator function
$$
 y(t_1,t_2)=\left\{\begin{array}{ll}
   1, &\mbox{if}\ 1/2\leq t_1\leq 5/8 ~\mbox{and} ~ 3/8\leq t_2\leq 1/2,\\
   0, &\mbox{otherwise}.
   \end{array}
   \right.
$$
As this simulation focuses on comparing the approximation capabilities for the three functions, no noise is introduced into either the training or the testing data. For each function, we generate 10,000 training samples and 1,000 test samples.
In fully connected neural networks, the depth and width are selected from the sets $\{10, 20, \dots, 200\}$ and $\{1,2,\dots,5\}$, respectively. The maximum number of epochs is set to 20,000. Additionally, the piecewise learning rate decay method is utilized, where the initial learning rate, the factor for reducing the learning rate, and the epoch interval for learning rate decay are set to 0.05, 0.99, and 100, respectively. Here, we employ the optimizers of SGD \cite{goodfellow2016deep}, Adam \cite{kingma2014adam}, and RMSProp \cite{Ruder2016}. It is noteworthy that the proposed component constructions do not require training and can be directly evaluated. A comparison of the testing RMSE and testing time between the constructed components and fully connected networks for approximating the three functions is illustrated in Figure \ref{constructVStraining}, and the training time of the fully connected networks with the three optimizers is shown in Figure \ref{DeepnetTrtime}. In the figures, the blue bars on the left represent the results of the component constructs, while the three colored bars correspond to the results of fully connected neural networks utilizing the three optimizers. Based on these findings, the following conclusions can be deduced: 1) For the square and product functions, the approximation errors of the component constructs exhibit an exponential decline as the parameter $m$ increases, significantly outperforming fully connected neural networks. This observation is consistent with the conclusions drawn in Proposition \ref{Prop:square-component} and Proposition \ref{Prop:product-component-bound}. For the indicator function, the approximation errors of the component constructs decrease substantially as the parameter $\tau$ decreases, achieving much lower errors compared to fully connected neural networks when $\tau$ is sufficiently small. This is in accordance with the conclusion of Proposition \ref{Prop:localized}. 2) Although the testing time of the component constructs for the square and product functions gradually increases with the increase of the parameter $m$, it remains notably shorter than that of fully connected neural networks. Consequently, compared to fully connected neural networks, the component constructs drastically reduce the time required to construct the linear hypothesis space \eqref{Hypothesis-space-deep}. More importantly, the component constructs necessitate no training, thereby eliminating the cumbersome parameter tuning process and potentially thousands of seconds of training time associated with fully connected neural networks, as illustrated in Figure \ref{DeepnetTrtime}.
In summary, the component constructs comprehensively outperform fully connected neural networks in both training and testing time, as well as in approximation accuracy. Therefore, we use the component constructs in this paper.

{\bf Simulation 2.} This simulation examines the impact of the sketching techniques employed for $t_k$ and $\xi_j$ in \eqref{Hypothesis-space-deep} on the generalization performance. We compare the following two sketching techniques:
\begin{itemize}
    \item Random sketching: Set $t_0=-0.5$, and randomly sketch $n$ values from a uniform distribution within the interval $(-0.5, 0.5)$ and arrange them in ascending order to obtain the sequence $t_1 < \dots < t_n$. Randomly sketch $N$ inner weights $\{\xi_j\}_{j=1}^N$ according to the uniform distribution in the unit sphere $\mathbb{S}^{d-1}$.
    \item Component-based sketching: the values of $\{t_k\}_{k=0}^n$ are sampled at equal intervals from the interval $[-0.5, 0.5]$, and the inner weights $\{\xi_j\}_{j=1}^N$ are the center points of the regions of equal area partitioned by Leopardi's recursive zonal sphere partitioning procedure \cite{leopardi2006partition} \footnote{http://eqsp.sourceforge.net}.
\end{itemize}

In this simulation, the standard deviation of Gaussian noise $\delta$ varies from the set $\{0, 0.01, 0.1, 0.3, 0.5\}$. We fix the frequency parameter $J$ as 1, and vary the parameters $N\in \{10,20,30,\dots,400\}$, $n\in \{1,2,3,\dots,10\}$, and $\tau \in \{0.001,0.01,0.1,0.3,0.5\}$.
Figure \ref{fig_sketching_techs} illustrates a comparison of the two sketching techniques on testing RMSE under varying parameter values and noise levels. Specifically, the subfigures in the first three columns depict the relationship between RMSE and one parameter, while keeping the other two parameters of $(N, n, \tau)$ fixed at their optimal values. The subfigures in the last column compare the RMSE between the two sketching techniques when all parameters are set to their optimal values.
From the above results, we draw the following conclusions: 1) Under various noise levels, the relationship between the testing RMSE of the two sketching techniques and the respective parameter values consistently exhibits a similar overarching trend. Specifically, in scenarios with high noise, optimal models tend to favor smaller values for $N$ and $n$, as models of greater complexity can suffer from diminished generalization performance resulting from overfitting the noise. Furthermore, in high-noise levels, the influence of the parameter $\tau$ on the generalization error is comparatively minor.
2) When a single parameter undergoes variation, the testing RMSE of the component-based sketching is notably lower than that of the random sketching in most cases. Moreover, upon optimizing all three parameters, the component-based sketching exhibits a significantly lower RMSE, along with a reduced standard deviation, when processing training data across varying noise levels. These findings unequivocally underscore the superior effectiveness of component-based sketching compared to random sketching.

% ------------------------Table realdata-------------------------
\begin{table}[t]
    \renewcommand\arraystretch{1.0}
 \begin{center}
		\caption{Comparisons of component-based sketching with varying frequency parameter $J$ on testing RMSE under different noise levels\label{tab_freq_role}}
  \vspace{0.02in}
\renewcommand\arraystretch{1.1}
		{\footnotesize
\begin{tabular}{
                        *{1}{p{0.85cm}<{\centering}}
                        *{1}{|p{0.5cm}<{\centering}}
                        *{1}{||p{0.75cm}<{\centering}}
 				    *{1}{p{0.88cm}<{\centering}}
                        *{1}{p{0.88cm}<{\centering}}
                        *{1}{p{0.88cm}<{\centering}}
                        *{1}{p{0.88cm}<{\centering}}
			           }
\toprule[1.5pt]
% \multicolumn{2}{c||}{Iteration \# $k$} & 2 & 5 &10 &30 &50 \\
\multirow{2}{*}{Data} &  \multirow{2}{*}{\# $J$ } & \multicolumn{5}{c}{Standard deviation of noise} \\
\cline{3-7}
& & $\delta\hspace{-0.02in}=\hspace{-0.02in}0$ & $\delta\hspace{-0.02in}=\hspace{-0.02in}0.01$ &$\delta\hspace{-0.02in}=\hspace{-0.02in}0.1$ &$\delta\hspace{-0.02in}=\hspace{-0.02in}0.3$ &$\delta\hspace{-0.02in}=\hspace{-0.02in}0.5$\\
\hline
\hline
\multirow{10}{*}{\shortstack{$f_1$\\($\times10^{-3}$)}}
& \multirow{2}{*}{$J\hspace{-0.02in}=\hspace{-0.02in}1$}   & 1.07 &  4.35 & 18.79 & \textbf{19.56} & \textbf{20.98}  \\
&                             & (0.10) & (0.24) & (0.52) & (1.62) & (3.64)\\
& \multirow{2}{*}{$J\hspace{-0.02in}=\hspace{-0.02in}2$}   & 0.58 & 4.13 & 18.06 & 28.37 & 40.37 \\
&                             & (0.04) & (0.13) & (1.61) & (3.25) & (6.18) \\
& \multirow{2}{*}{$J\hspace{-0.02in}=\hspace{-0.02in}3$}   & 0.53 & 3.81 & 15.76 & 35.39 & 54.07  \\
&                             & (0.03) & (0.24) & (1.50) & (3.85) & (6.82)\\
& \multirow{2}{*}{$J\hspace{-0.02in}=\hspace{-0.02in}4$}   & \textbf{0.52} & 3.78 & 13.82 & 36.82 & 60.71 \\
&                             & (0.03) & (0.24) & (1.40) & (4.27) & (7.17) \\
& \multirow{2}{*}{$J\hspace{-0.02in}=\hspace{-0.02in}5$}   & \textbf{0.52} & \textbf{3.75} & \textbf{13.51} & 37.12 & 61.39  \\
&                             & (0.03) & (0.50) & (1.33) & (4.31) & (7.30)\\
\hline
\hline
\multirow{10}{*}{\shortstack{$f_2$\\($\times10^{-2}$)}}
& \multirow{2}{*}{$J\hspace{-0.02in}=\hspace{-0.02in}1$}   & 2.28 & 2.39 & 7.36 & 12.10 & 16.31 \\
&                             & (0.24) & (0.26) & (0.38) & (0.80) & (1.78)\\
& \multirow{2}{*}{$J\hspace{-0.02in}=\hspace{-0.02in}2$}   & \textbf{1.71} & 2.44 & 7.56 & 12.98 & 16.65 \\
&                             & (0.28) & (0.24) & (0.62) & (0.73) & (1.84) \\
& \multirow{2}{*}{$J\hspace{-0.02in}=\hspace{-0.02in}3$}   & 1.78 & 2.35 & \textbf{7.02} & 12.33 & 14.98  \\
&                             & (0.26) & (0.25) & (0.65) & (1.37) & (1.25)\\
& \multirow{2}{*}{$J\hspace{-0.02in}=\hspace{-0.02in}4$}   & 1.84 & 2.34 & 7.04 & 11.76 & \textbf{14.34} \\
&                             & (0.26) & (0.25) & (0.62) & (1.25) & (1.14) \\
& \multirow{2}{*}{$J\hspace{-0.02in}=\hspace{-0.02in}5$}   & 1.89 & \textbf{2.33} & 7.03 & \textbf{11.70} & 14.44  \\
&                             & (0.26) & (0.25) & (0.59) & (1.20) & (1.11)\\
\bottomrule[1.5pt]
\end{tabular}}
 \end{center}
 \vspace{-0.1in}
\end{table}

\begin{table*}[t]
\renewcommand\arraystretch{1.1}
 \begin{center}
 \caption{Comparisons of the testing RMSE for the seventeen methods under different noise levels\label{Comparison_RMSE_syn}}
{\footnotesize
\begin{tabular}{
                        *{1}{p{0.65cm}<{\centering}}
                        *{1}{|p{0.67cm}<{\centering}}
                        *{1}{||p{0.44cm}<{\centering}}
 				    *{4}{p{0.46cm}<{\centering}}
                        *{1}{p{0.001cm}<{\centering}}
                        *{1}{p{0.47cm}<{\centering}}
 				    *{7}{p{0.46cm}<{\centering}}
                        *{1}{p{0.001cm}<{\centering}}
                        *{1}{p{0.3cm}<{\centering}}
                        *{1}{p{0.6cm}<{\centering}}
 				    *{2}{p{0.5cm}<{\centering}}
			           }
\toprule[1.5pt]
% \multicolumn{2}{c||}{Iteration \# $k$} & 2 & 5 &10 &30 &50 \\
\multirow{2}{*}{Data} &  \multirow{2}{*}{\shortstack{Noise\\level}} & \multicolumn{5}{c}{Gradient descent-related algorithms} &  & \multicolumn{8}{c}{Conjugate gradient descent-related algorithms} &  & \multicolumn{4}{c}{Deep nets}\\
\cline{3-7} \cline{9-16} \cline{18-21}
& & GD & GDA & GDM & GDX & RP & & LM & BR & BFG & CGB & CGF & CGP & OSS & SCG & & SGD & ADAM & RSNN & DSNN\\
\hline
\hline
\multirow{10}{*}{\shortstack{$f_1$\\(\scriptsize{$\times10^{-3}$})}}
& \multirow{2}{*}{$\delta\hspace{-0.02in}=\hspace{-0.02in}0$}   & 18.26 & ~5.92  & 17.90 & ~3.59 & ~3.59  &   & ~2.94  & ~7.57 & ~\textcolor{green}{\textbf{2.48}} & ~3.72 & ~2.74 & ~3.13 & ~3.06  & ~2.92 & & 17.87 & ~~18.69 & ~\textcolor{blue}{\textbf{1.70}} & ~\textcolor{red}{\textbf{0.52}}  \\
&                            & (1.09) & (0.53)  & (0.72) & (0.26) & (0.35)  &   & (0.13)  & (4.02) & (0.17) & (1.25) & (0.18) & (0.20) & (0.18)  & (0.26) &  & (0.72) & ~~(0.49) & (0.23) & (0.03) \\
\cline{2-21}
& \multirow{2}{*}{$\delta\hspace{-0.02in}=\hspace{-0.02in}0.01$}  & 16.59 & ~7.03  & 16.09 & ~5.72 &  ~6.16 &  &  ~5.46 & ~7.39  & ~\textcolor{blue}{\textbf{5.07}} & ~5.19 & ~\textcolor{green}{\textbf{5.09}} & ~6.94 & ~5.53 &  ~5.49 &  & 18.05 & ~~18.76 & ~6.55 & ~\textcolor{red}{\textbf{3.75}}  \\
&                            & (1.36) & (0.69)  & (0.78) & (0.40) & (0.49)  &   & (0.28)  & (2.07) & (0.26) & (0.42) & (0.25) & (5.48) & (0.35)  & (0.37) &  & (0.47) & ~~(0.42) & (0.70) & (0.50) \\
\cline{2-21}
& \multirow{2}{*}{$\delta\hspace{-0.02in}=\hspace{-0.02in}0.1$}  & 19.77 & 22.45  & 20.22 & 19.97 &  21.16 &   &  20.96 & \textcolor{green}{\textbf{18.96}} & 20.65 & 20.42 & 21.72 & 20.47 & 19.78 & 20.59 &  & 19.35 & ~~\textcolor{blue}{\textbf{18.81}} & 19.11 & \textcolor{red}{\textbf{13.51}} \\
&                            & (1.31) & (2.07)  & (1.21) & (1.04) & (1.34)  &   & (1.64)  & (0.55) & (1.48) & (1.20) & (1.55) & (1.06) & (1.17)  & (1.22) &  & (0.51) & ~~(0.53) & (2.14) & (1.33) \\
\cline{2-21}
& \multirow{2}{*}{$\delta\hspace{-0.02in}=\hspace{-0.02in}0.3$}  & 29.45 & 58.31 & 33.38 & 36.09 & 35.15  &   & 37.78  & 20.08 & 36.72 & 37.40 & 33.67 & 37.15 & 33.74  & 34.62 &  & 20.53 & ~~\textcolor{green}{\textbf{19.60}} & \textcolor{red}{\textbf{19.24}} & \textcolor{blue}{\textbf{19.56}} \\
&                            & (7.21) & (14.32)  & (6.90) & (7.51) & (2.79)  &   & (6.48)  & (1.87) & (6.33) & (5.41) & (7.20) & (4.49) & (4.71)  & (6.56) &  & (1.55) & ~~(1.69) & (0.88) & (1.62) \\
\cline{2-21}
& \multirow{2}{*}{$\delta\hspace{-0.02in}=\hspace{-0.02in}0.5$}   & \textcolor{green}{\textbf{21.05}} &  67.93 & 46.05 & 58.89 & 55.30  &   & 65.99  & 23.63 & 57.65 & 50.17 & 55.91 & 51.12 &  47.90 & 46.33 &  & 22.35 & ~~26.91 & \textcolor{red}{\textbf{19.85}} & \textcolor{blue}{\textbf{20.98}} \\
&                            & (3.64) & (11.56)  & (19.12) & (9.88) & (11.31)  &   & (8.09)  & (5.56) & (16.05) & (14.63) & (15.47) & (7.88) & (10.32)  & (9.25) &  & (3.37) & ~~(9.65) & (2.06) & (3.64) \\
\hline
\hline
\multirow{10}{*}{\shortstack{$f_2$\\(\scriptsize{$\times10^{-2}$)}}}
& \multirow{2}{*}{$\delta\hspace{-0.02in}=\hspace{-0.02in}0$}   & 10.10 & ~9.22  & ~9.47 & ~8.75 & ~8.53  & & ~6.75  & 14.92  & ~6.96 & 10.63 & ~6.95 & ~7.40 &  ~7.44 & ~7.68 &  & ~\textcolor{blue}{\textbf{2.88}} & ~~16.58 & ~\textcolor{green}{\textbf{4.79}} & ~\textcolor{red}{\textbf{1.71}} \\
&                            & (1.48) & (1.12)  & (1.79) & (0.71) & (0.38)  &   & (0.45)  & (1.65) & (0.52) & (8.59) & (0.54) & (0.37) & (0.45)  & (0.61) &  & (0.98) & ~~(5.99) & (7.63) & (0.28) \\
\cline{2-21}
& \multirow{2}{*}{$\delta\hspace{-0.02in}=\hspace{-0.02in}0.01$}   & 10.48 & ~9.91  & ~9.86 & ~8.70 & ~8.37  &   & ~6.85  & 14.91 & ~7.08 & ~7.84 & ~6.92 & ~7.61 & ~7.70  & ~7.58 & & ~\textcolor{blue}{\textbf{2.87}} & ~~16.29 & ~\textcolor{green}{\textbf{5.37}} & ~\textcolor{red}{\textbf{2.33}} \\
&                            & (1.99) & (1.09)  & (1.78) & (0.61) & (0.67)  &   & (0.43)  & (1.66) & (0.50) & (0.71) & (0.34) & (0.58) & (0.61)  & (0.60) &  & (1.00) & ~~(6.21) & (8.44) & (0.25) \\
\cline{2-21}
& \multirow{2}{*}{$\delta\hspace{-0.02in}=\hspace{-0.02in}0.1$}   & 11.71 & 10.85  & 10.97 & 10.31 & 11.19  & & 10.24  & 14.95  & ~9.93 & 10.16 & ~\textcolor{green}{\textbf{9.76}} & ~9.90 & 10.65 & 10.24  & & ~\textcolor{red}{\textbf{3.45}} & ~~14.84 & 10.93 & ~\textcolor{blue}{\textbf{7.02}}  \\
&                            & (1.77) & (1.05)  & (1.55) & (0.75) & (0.90)  &   & (0.74)  & (1.69) & (0.72) & (0.73) & (0.88) & (0.66) & (1.96)  & (0.67) &  & (0.98) & ~~(6.75) & (1.47) & (0.64) \\
\cline{2-21}
& \multirow{2}{*}{$\delta\hspace{-0.02in}=\hspace{-0.02in}0.3$}   & 15.39 & \textcolor{green}{\textbf{14.32}}  & 16.36 & 15.14 & 16.29  &   & 16.35  & 15.30 & 15.36 & 15.38 & 15.03 & 15.39 & 15.02  & 15.25 &  & ~\textcolor{red}{\textbf{7.12}} & ~~15.86 & 15.27 & \textcolor{blue}{\textbf{11.70}} \\
&                            & (1.42) & (1.13)  & (2.43) & (0.96) & (0.86)  &   & (0.98)  & (1.71) & (1.11) & (1.43) & (1.61) & (1.46) & (1.17)  & (1.20) &  & (1.38) & ~~(4.70) & (2.43) & (1.20) \\
\cline{2-21}
& \multirow{2}{*}{$\delta\hspace{-0.02in}=\hspace{-0.02in}0.5$}   & 17.85 & 17.43  & 19.13 & 17.63 & 18.04 &  & 17.42  &  \textcolor{green}{\textbf{15.92}} & 19.26 & 17.35 & 17.54 & 17.30 & 17.58  & 17.31 &  & \textcolor{red}{\textbf{13.03}} & ~~16.89 & 17.59 & \textcolor{blue}{\textbf{14.34}} \\
&                            & (2.07) & (1.46)  & (2.14) & (1.83) & (1.71)  &   & (1.73)  & (1.75) & (1.65) & (1.41) & (1.76) & (1.38) & (1.42)  & (1.67) &  & (2.14) & ~~(4.07) & (1.58) & (1.14) \\
\bottomrule[1.5pt]
\end{tabular}}
\end{center}
\end{table*}

% \begin{figure*}[t]
%     \centering
%     \subfigure{\includegraphics[width=18cm,height=3.5cm]{}}\\
%     \subfigure{\includegraphics[width=18cm,height=3.5cm]{}}
%     \caption{Relationship between testing RMSE and the value $J$ of the frequency parameter under varying $\tau$ values and noise levels. The sub-figures in the top row represent the results for data generated by$f_1$, while the sub-figures in the bottom row correspond to data generated by $f_2$.}\label{fig_freq_role}
%     \vspace{-0.1in}
% \end{figure*}

\begin{figure*}[t]
    \centering
    \subfigcapskip=-5pt
    \subfigure[$\delta=0$]{\includegraphics[width=5.9cm,height=3.2cm]{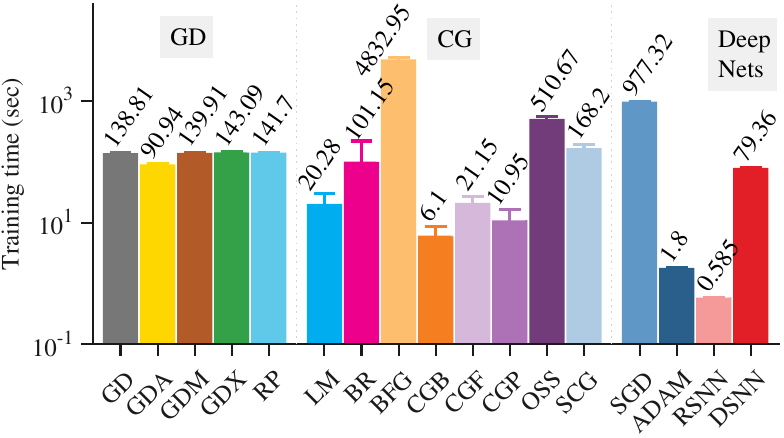}}
    \subfigure[$\delta=0.01$]{\includegraphics[width=5.9cm,height=3.2cm]{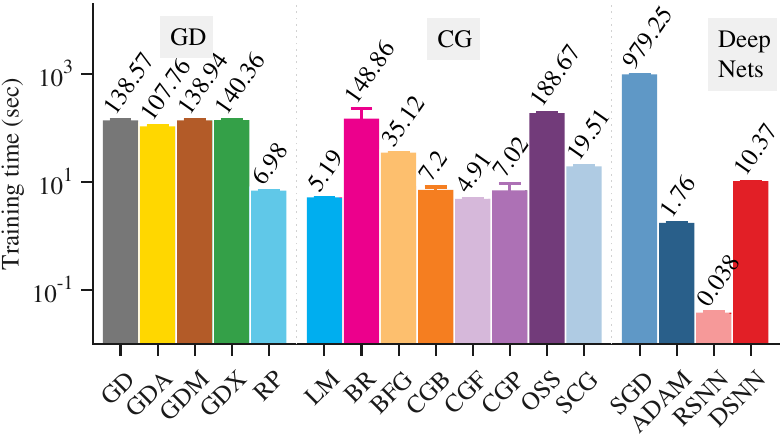}}
    \subfigure[$\delta=0.1$]{\includegraphics[width=5.9cm,height=3.2cm]{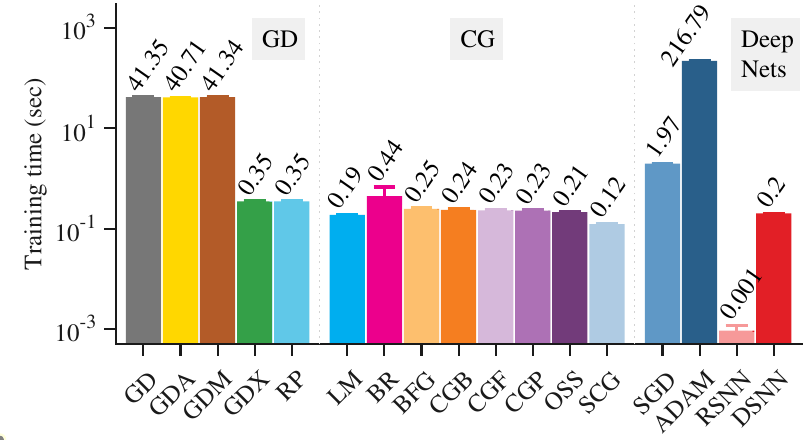}}\\
    \subfigure[$\delta=0.3$]{\includegraphics[width=5.9cm,height=3.2cm]{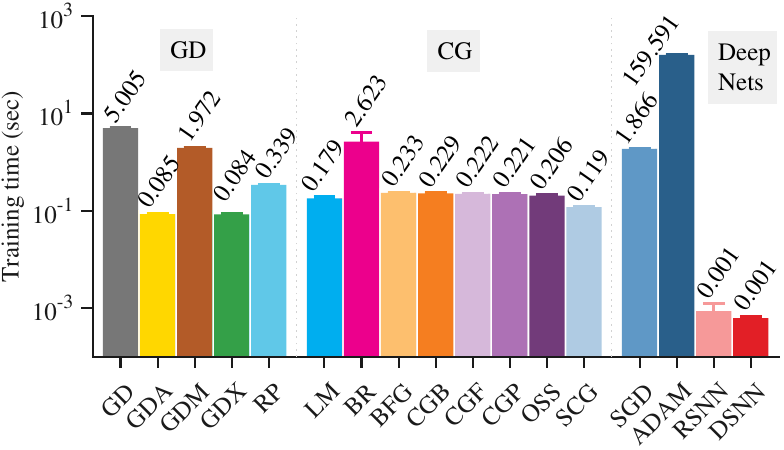}}\hspace{0.05in}
    \subfigure[$\delta=0.5$]{\includegraphics[width=5.9cm,height=3.2cm]{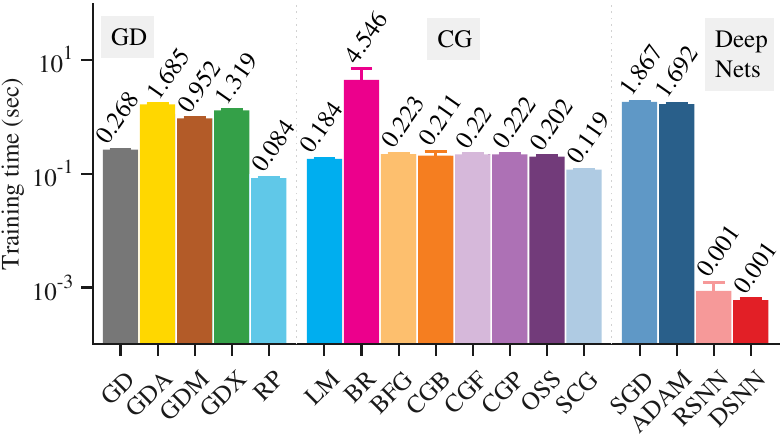}}
    \caption{Comparisons of the training time for the seventeen methods on data generated by $f_1$ with different noise levels}\label{fig_trtime_poly}
    \vspace{-0.1in}
\end{figure*}

\begin{figure*}[t]
    \centering
    \subfigcapskip=-5pt
    \subfigure[$\delta=0$]{\includegraphics[width=5.9cm,height=3.2cm]{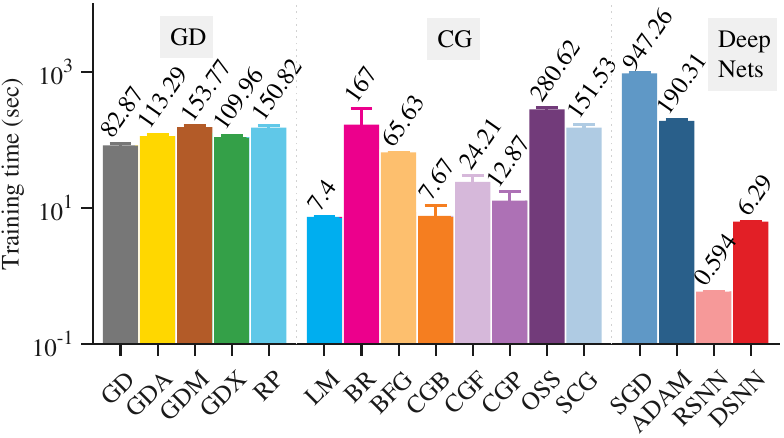}}
    \subfigure[$\delta=0.01$]{\includegraphics[width=5.9cm,height=3.2cm]{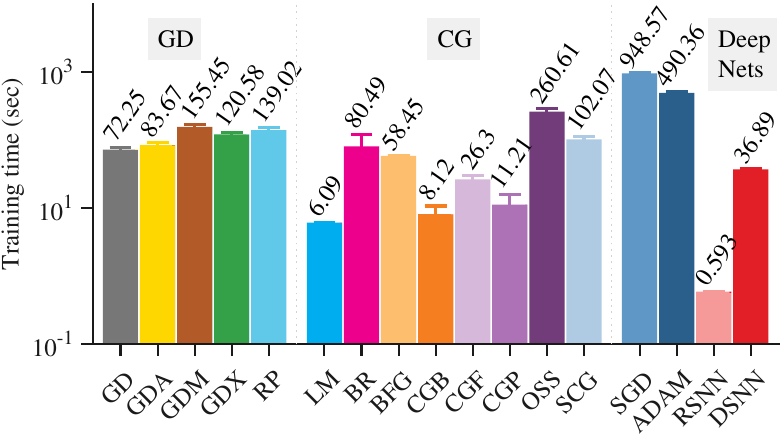}}
    \subfigure[$\delta=0.1$]{\includegraphics[width=5.9cm,height=3.2cm]{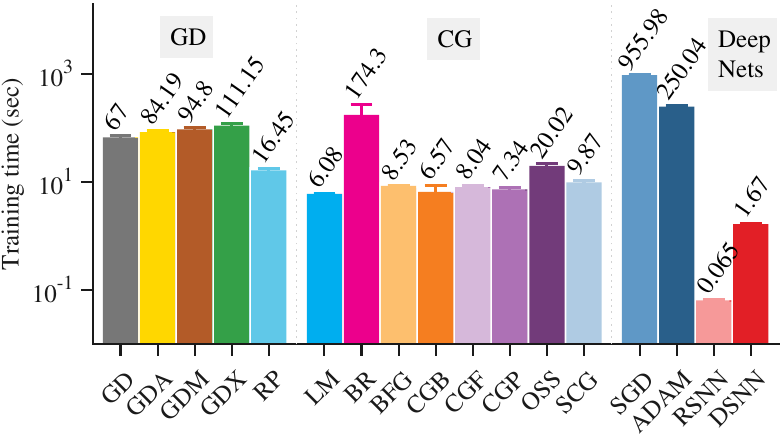}}\\
    \subfigure[$\delta=0.3$]{\includegraphics[width=5.9cm,height=3.2cm]{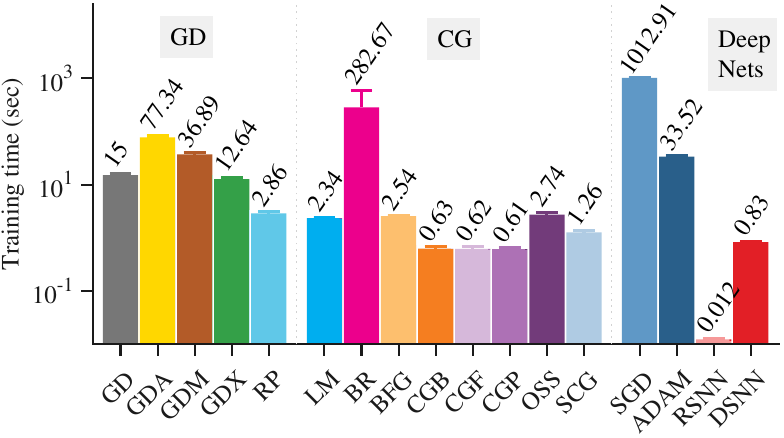}}\hspace{0.05in}
    \subfigure[$\delta=0.5$]{\includegraphics[width=5.9cm,height=3.2cm]{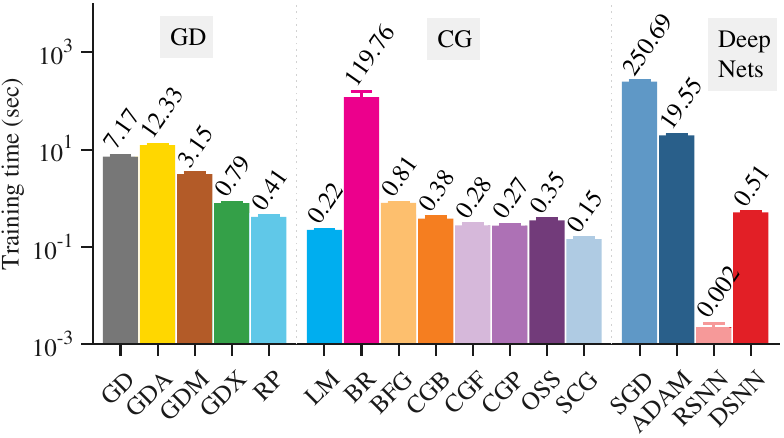}}
    \caption{Comparisons of the training time for the seventeen methods on data generated by $f_2$ with different noise levels}\label{fig_trtime_f21}
    \vspace{-0.1in}
\end{figure*}

\begin{figure*}[t]
    \centering
    \subfigcapskip=-5pt
    \subfigure{\includegraphics[width=5.85cm,height=3.85cm]{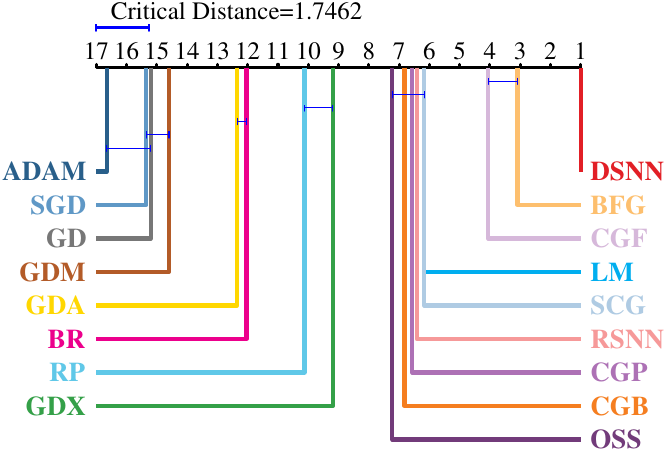}}\hspace{0.06in}
    \subfigure{\includegraphics[width=5.85cm,height=3.85cm]{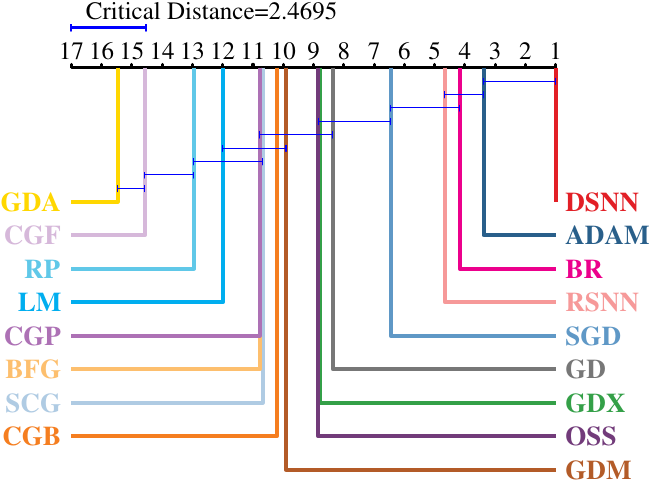}}\hspace{0.06in}
    \subfigure{\includegraphics[width=5.85cm,height=3.85cm]{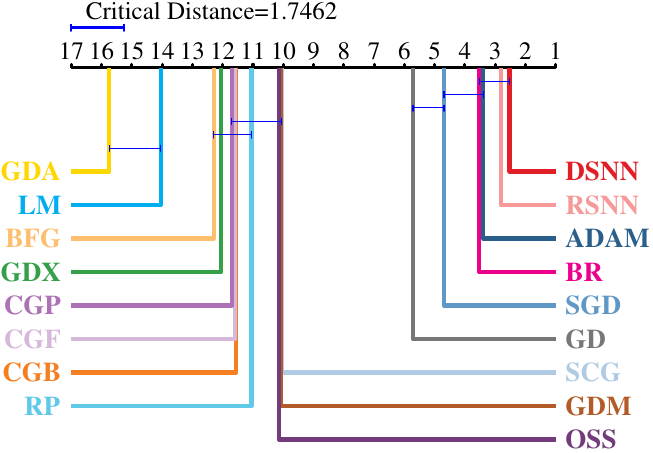}}\\
    \setcounter{subfigure}{0}
    \subfigure[Low noise data ($\delta\leq 0.01$)]{\includegraphics[width=5.85cm,height=3.85cm]{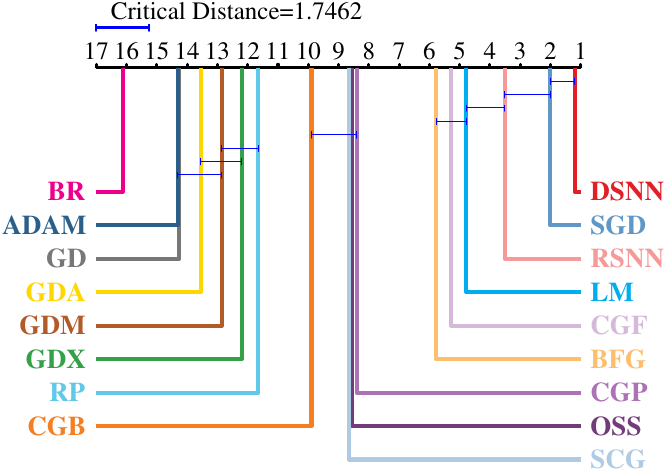}}\hspace{0.06in}
    \subfigure[Middle noise data ($\delta=0.1$)]{\includegraphics[width=5.85cm,height=3.85cm]{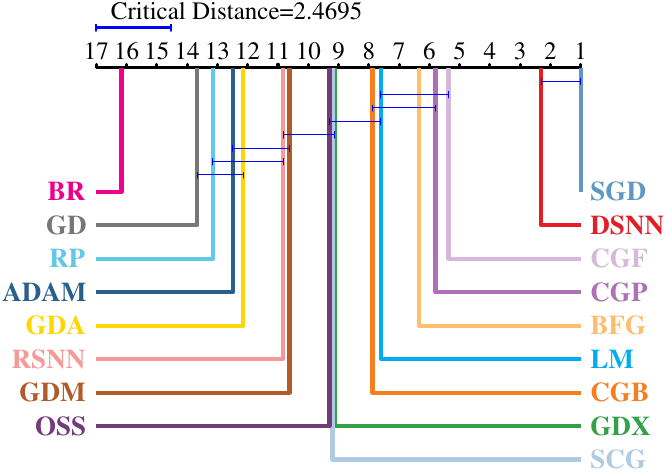}}\hspace{0.06in}
    \subfigure[high noise data ($\delta\geq 0.3$)]{\includegraphics[width=5.85cm,height=3.85cm]{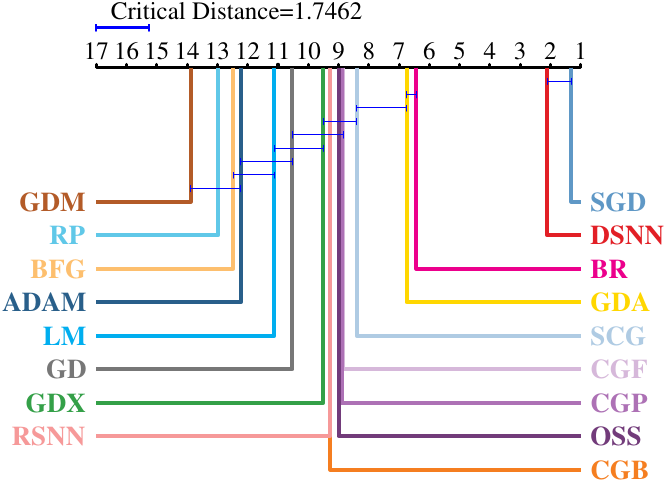}}
    \caption{Post hoc statistical tests of the seventeen methods on the two synthetic datasets}\label{fig_cdplot_syn}
    \vspace{-0.1in}
\end{figure*}

% ------------------------Table realdata-------------------------
\begin{table}[t]
    \renewcommand\arraystretch{1.0}
 \begin{center}
		\caption{\small Overall description of the real-world datasets}\label{tab_data_des}
		\small
	\begin{tabular}{*{1}{p{2.1cm}<{ }}
		            *{1}{p{1.2cm}<{\centering}}	
                    *{1}{p{1.5cm}<{\centering}}	
                    *{1}{p{1.3cm}<{\centering}}	
	               }
		\toprule[1.25pt]
       Datasets  &Attributes    &Training $\#$ &Testing $\#$ \\
       \hline
       \hline
		\texttt{WineRed}          &11   & 1279  & 320 \\
		\texttt{WineWhite}        &11 &3918   & 980  \\
		\texttt{Abalone}           &8   &3342 &835 \\
            \texttt{CCPP}             &4   &7654   &1914 \\
            \texttt{Energy}            &17   &17520   &17520 \\
            \texttt{Quality}             &65   &14592   &14592 \\
            \bottomrule[1.25pt]
	\end{tabular}
 \end{center}
 \vspace{-0.1in}
\end{table}

%% ----------------Table 4 -----------------
\begin{table*}[t]
\caption{Comparisons of the testing RMSE for the seventeen methods on real-world datasets\label{Comparison_RMSE_real}}
\renewcommand\arraystretch{1.1}
{\footnotesize
\begin{tabular}{
                        *{1}{p{1.2cm}<{\centering}}
                        *{1}{||p{0.47cm}<{\centering}}
 				    *{4}{p{0.47cm}<{\centering}}
                        *{1}{p{0.001cm}<{\centering}}
                        *{1}{p{0.48cm}<{\centering}}
 				    *{7}{p{0.48cm}<{\centering}}
                        *{1}{p{0.001cm}<{\centering}}
                        *{1}{p{0.5cm}<{\centering}}
                        *{1}{p{0.65cm}<{\centering}}
 				    *{2}{p{0.55cm}<{\centering}}
			           }
\toprule[1.5pt]
% \multicolumn{2}{c||}{Iteration \# $k$} & 2 & 5 &10 &30 &50 \\
\multirow{2}{*}{Data}  & \multicolumn{5}{c}{Gradient descent-related algorithms} &  & \multicolumn{8}{c}{Conjugate gradient descent-related algorithms} &  & \multicolumn{4}{c}{Deep nets}\\
\cline{2-6} \cline{8-15} \cline{17-20}
& GD & GDA & GDM & GDX & RP & & LM & BR & BFG & CGB & CGF & CGP & OSS & SCG & & SGD & ADAM & RSNN & DSNN\\
\hline
\hline
\multirow{2}{*}{\shortstack{\texttt{WineRed}\\({$\times10^{-1}$})}}
   & ~6.73 & ~6.52  & ~7.13 & ~6.64 & ~6.55 &  & ~6.65  &  ~\textcolor{green}{\textbf{6.37}}  & ~7.47 & ~6.49 & ~6.52 & ~6.48 & ~6.55  & ~6.50 &  & ~\textcolor{red}{\textbf{6.33}} & ~6.39 & ~6.67 &  ~\textcolor{blue}{\textbf{6.36}} \\
   & (0.63) & (0.40)  & (0.95) & (0.41) &  (0.31) &   & (0.29)  & (0.28) & (2.50) & (0.31) & (0.25) & (0.26) &  (0.35) & (0.32) & & (0.28) & (0.26) &  (0.55) &  (0.27) \\
\hline
\multirow{2}{*}{\shortstack{\texttt{WineWhite}\\({$\times10^{-1}$})}}
   & \multirow{2}{*}{---} &  ~7.15 & \multirow{2}{*}{---} & ~7.34 &  ~7.23 &   & ~7.33  & ~\textcolor{green}{\textbf{7.02}} & ~7.76 & ~7.26 & ~7.55 & ~7.17 & ~7.46  & ~7.71 & & ~\textcolor{red}{\textbf{6.86}} & ~7.12 & ~7.66 & ~\textcolor{blue}{\textbf{6.98}}  \\
   &  & (0.39)  & & (0.53)  & (0.27)  &   & (0.53)  & (0.23) & (1.31) & (0.36) & (0.91) & (0.29) & (0.50)  & (1.10) & & (0.22) & (0.16) & (0.34) & (0.24)  \\
\hline
\multirow{2}{*}{\texttt{Abalone}}
    & \multirow{2}{*}{---}  & ~\textcolor{blue}{\textbf{2.12}}  & \multirow{2}{*}{---} & ~2.19 &  ~2.15 &   & ~2.14  & ~\textcolor{blue}{\textbf{2.12}} & ~2.26 & ~2.27 & ~2.15 & ~\textcolor{green}{\textbf{2.13}} &  ~2.15 & ~2.14 & & ~\textcolor{blue}{\textbf{2.12}} & ~2.15 & ~2.26 & ~\textcolor{red}{\textbf{2.11}}  \\
    &  & (0.06)  & & (0.14)  & (0.09)  &   & (0.09)  & (0.07) &  (0.33) & (0.24) & (0.10) & (0.09) & (0.09) & (0.05)  &  & (0.08) & (0.07) & (0.20) & (0.06)   \\
\hline
\multirow{2}{*}{\texttt{CCPP}}
   & \multirow{2}{*}{---} & ~3.96  & \multirow{2}{*}{---} & ~3.96 & ~3.92  &   & ~\textcolor{red}{\textbf{3.81}}  & ~3.96 & ~3.98 & ~3.93 & ~\textcolor{green}{\textbf{3.84}} & ~3.91 & ~3.85  & ~3.87 & & \multirow{2}{*}{---}  & ~3.87 & ~4.16 & ~\textcolor{blue}{\textbf{3.82}}  \\
   &  & (0.15) &  & (0.14)  & (0.16)  &  & (0.15) & (0.14) & (0.17) & (0.16) & (0.17)  & (0.16)  & (0.14) & (0.13) &  &  & (0.13) & (1.91) & (0.15)\\
\hline
\multirow{2}{*}{\shortstack{\texttt{Energy}\\({$\times10^{-2}$})}}
   & 72.92 & 10.96  & 51.28 & 10.90 &  ~7.65 &   & ~\textcolor{green}{\textbf{3.41}}  & ~\textcolor{red}{\textbf{2.65}} & 19.28 & ~7.57 & ~6.38 & ~9.35 & ~4.97 & ~5.01 & & ~3.94 & ~5.78 & ~6.42 & ~\textcolor{blue}{\textbf{2.66}}  \\
   & (46.08) & (1.25)  & (53.46) & (2.34) & (0.83)  &   & (0.28)  & (0.20) & (38.85) & (1.20) & (0.32) & (1.86) & (0.59)  & (0.83) & & (0.14) & (0.62) & (1.47) &  (0.08) \\
\hline
\multirow{2}{*}{\shortstack{\texttt{Quality}\\({$\times10^{-2}$})}}
    & ~9.84 & ~9.51  & ~9.86 & ~9.50 & ~9.51 &  & ~9.60 & ~\textcolor{red}{\textbf{9.41}}  & ~9.49 & ~9.54 & ~\textcolor{green}{\textbf{9.47}} & ~9.55 & ~9.49 & ~9.50  &  & ~9.59 & ~9.64 & ~11.85 & ~\textcolor{blue}{\textbf{9.43}}   \\
    & (0.09) & (0.07)  & (0.06) & (0.09) &  (0.06) &  &  (0.07)  & (0.06) & (0.07) & (0.13) & (0.06) & (0.12) & (0.08)  & (0.10) & & (0.06) & (0.06) & (4.07) & (0.06)  \\
\bottomrule[1.5pt]
\end{tabular}}
\end{table*}

{\bf Simulation 3.} To explore the importance of incorporating locality information from the frequency domain, we assign the frequency domain parameter $J$ with values ranging from $\{1, 2, 3, 4, 5\}$, while maintaining the same range of values for other parameters as in Simulation 2. Note that a $J$ value exceeding 1 signifies that the model integrates frequency domain locality information, whereas a $J$ value of 1 indicates the absence of such integration within the model. Figure \ref{fig_freq_role} demonstrates the influence of the parameter $\tau$ and $J$ on accuracy when parameters $N$ and $n$ are set to their optimal values under different noise levels. Additionally, Table \ref{tab_freq_role} presents a comparative analysis, highlighting the influence of varying $J$ on accuracy, given that $N$, $n$, and $\tau$ have all been tuned to their optimal values. In Table \ref{tab_freq_role}, the best testing RMSE for each noise level is in bold, and the values in brackets are standard deviations.
Based on the results, we have the following observations: 1) At a given noise level, the relationship between the generalization error and the value of the parameter $J$ exhibits a consistent similarity across different $\tau$ values, significantly indicating that the frequency parameter $J$ plays a similar role under varying values of $\tau$. Specifically, an increase in the value of the parameter $J$ conspicuously enhances the model's accuracy in most cases, thereby underscoring the crucial importance of integrating local frequency domain information. 2) When the noise level of data generated by $g_1$ is high (i.e., $\delta\geq 0.3$), the RMSE tends to increase as the value of $J$ grows. In contrast, this phenomenon is not present in the data generated by $g_2$. This disparity is due to the fact that function $g_1$ possesses superior smoothness compared to function $g_2$. In scenarios with high noise, a higher value of $J$ increases the complexity of the model, fitting the noise information to some extent, which ultimately results in overfitting. This observation simultaneously highlights the necessity of introducing frequency domain locality information when fitting data with complex distributions.

{\bf Simulation 4.} This simulation offers a comparative analysis of the proposed method against the aforementioned approaches, evaluating both accuracy and training time.
% In all fully connected neural networks, the initial learning rate and the maximum number of epochs are set to 0.05 and 50,000, respectively; the number of neurons in hidden layers is selected from the set $\{10, 20,\dots, 200\}$. For deep fully connected neural networks, we employ the piecewise learning rate decay method, where the factor for reducing the learning rate is set to 0.99, and the epoch interval for learning rate decay is set to 100; the depth of the deep nets is selected from the set $\{2,3,4,5\}$.
For fully connected neural networks, the settings and ranges of parameters are identical to those employed in Simulation 1, with the exception of the maximum number of epochs, which is set to 50,000.
For both RSNN and the proposed method, the parameters $n$, $N$, and $\tau$ are selected from the sets $\{1,2,\dots,10\}$, $\{10,20,\dots,400\}$, and $\{0.001, 0.01, 0.1, 0.3, 0.5\}$, respectively; the depth of RSNN and the parameter $J$ in the proposed method are selected from the set $\{1,2,3,4,5\}$.

The testing RMSEs of the seventeen methods with the best parameters, for varying noise levels, are presented in Table \ref{Comparison_RMSE_syn}, where the bold red, blue, and green numbers respectively represent the top three average accuracies under the same noise level. A comparative analysis of their respective training times is illustrated in Figures \ref{fig_trtime_poly}-\ref{fig_trtime_f21}, where the symbols ``GD'' and ``CG'' represent gradient descent-related algorithms and conjugate gradient descent-related algorithms for shallow fully connected neural networks, respectively,
and ``Deep Nets'' include deep fully connected neural networks, RSNN, and our proposed method. Furthermore, a nonparametric posthoc statistical test \cite{Demsar2006,Yu2017} is conducted to investigate the comprehensive generalization performance of these methods. Here, a critical difference (CD) diagram is utilized to demonstrate the overall statistical comparisons. The results are shown in Figure \ref{fig_cdplot_syn}, where the data is grouped into low-noise (i.e., $\delta\leq0.01$), medium-noise (i.e., $\delta=0.1$), and high-noise (i.e., $\delta\geq0.3$) categories, and the statistical tests are performed independently in each group. Notably, the right side of the axis in Figure \ref{fig_cdplot_syn} indicates the highest rank, which corresponds to the best performance.

From the above results, we can draw the following observations:
1) Deep neural networks typically outperform shallow networks in terms of accuracy, particularly when dealing with complex data distributions, such as those generated by $f_2$. This is attributed to their ability to more effectively uncover the intrinsic nonlinear structural relationships within such data.
2) Among shallow networks, the conjugate gradient-based approach often surpasses gradient-based methods in both computational accuracy and training time.
3) In deep fully connected networks, the Adam algorithm often suffers from suboptimal performance due to gradient vanishing issues during optimization. In contrast, the SGD algorithm exhibits stable and superior generalization performance across various noisy datasets, yet it incurs the longest computational time among these methods.
4) RSNN utilizes sketching technology to transform the neural network training problem into a least-squares problem that is independent of data dimensions and quantity, thus avoiding the complex nonlinear optimization process of neural networks. Consequently, RSNN possesses the shortest training time in almost all cases. However, its generalization performance tends to be relatively poor when handling data with more complex distributions, such as data generated by $f_2$ with high levels of noise.
5) Compared to RSNN, our proposed method employs a component-based sketching technique instead of random sketching, resulting in a more stable generalization performance with a reduced standard deviation. Furthermore, our method incorporates locality information in the frequency domain, which, while slightly increasing training complexity, demonstrates superior generalization performance when handling data with complex distributions. In summary, our proposed method achieves the best generalization performance in most cases, while its training time is comparable to or even more efficient than that of shallow networks.

\begin{figure*}[t]
    \centering
    \subfigcapskip=-5pt
    \subfigure[\texttt{WineRed}]{\includegraphics[width=5.9cm,height=3.2cm]{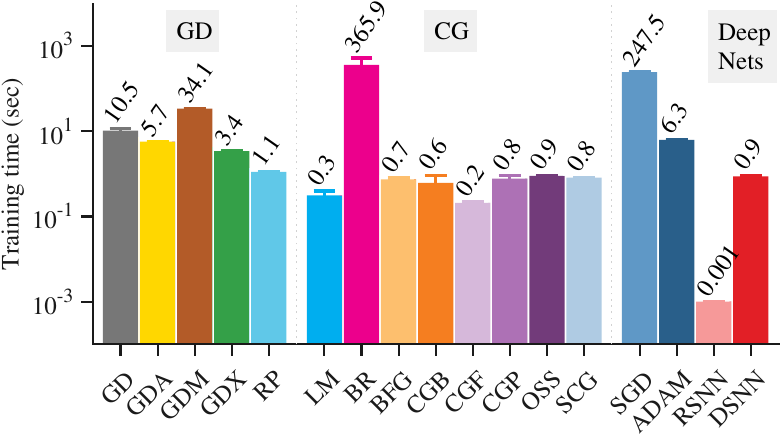}}
    \subfigure[\texttt{WineWhite}]{\includegraphics[width=5.9cm,height=3.2cm]{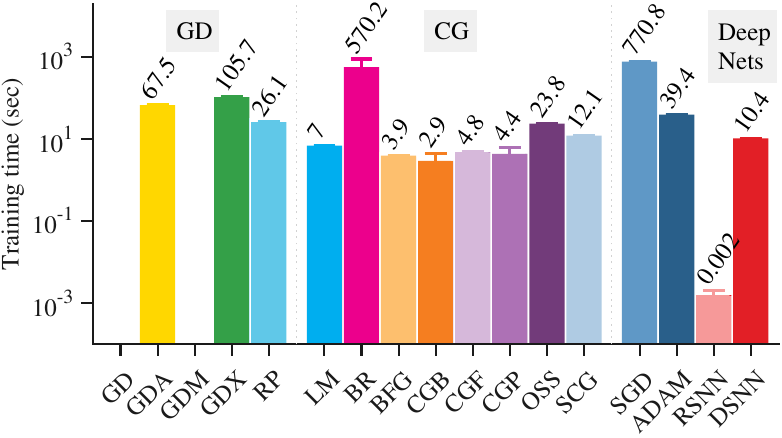}}
    \subfigure[\texttt{Abalone}]{\includegraphics[width=5.9cm,height=3.2cm]{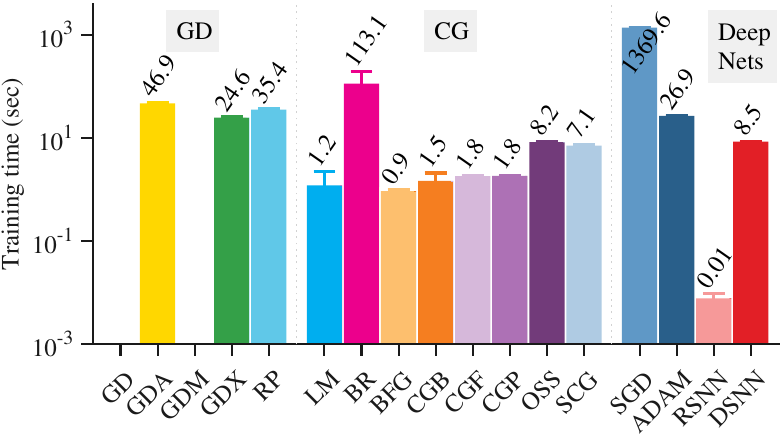}}\\
    \subfigure[\texttt{CCPP}]{\includegraphics[width=5.9cm,height=3.2cm]{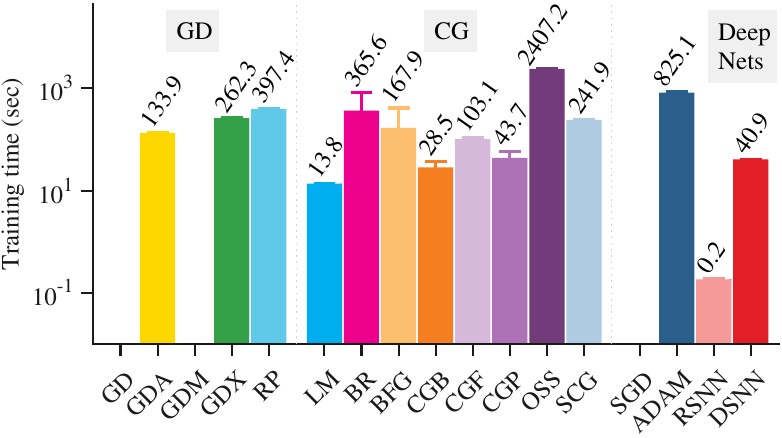}}
    \subfigure[\texttt{Energy}]{\includegraphics[width=5.9cm,height=3.2cm]{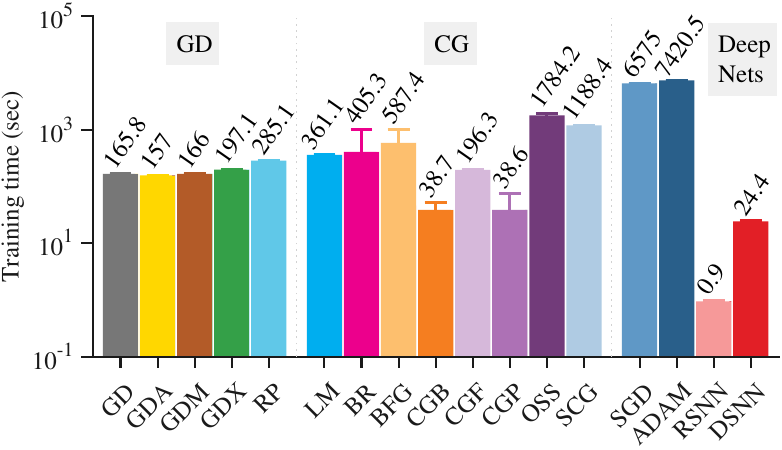}}
    \subfigure[\texttt{Quality}]{\includegraphics[width=5.9cm,height=3.2cm]{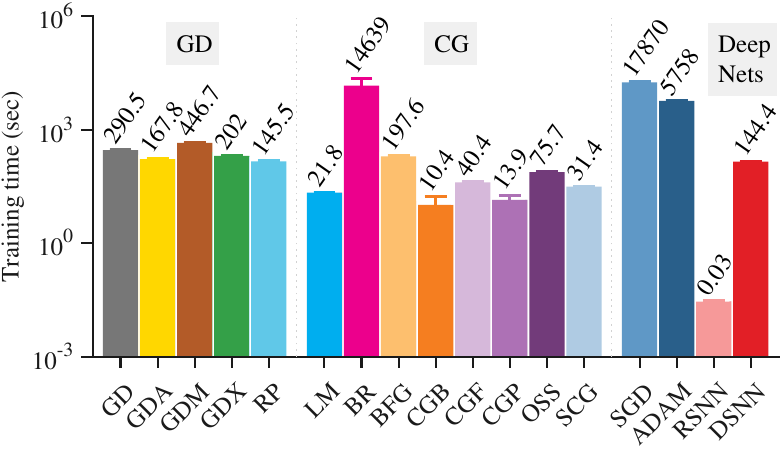}}
    \caption{Comparisons of the training time for the seventeen methods on real-world datasets}\label{fig_trtime_realdata}
    \vspace{-0.1in}
\end{figure*}

\subsection{{ Real-world} Benchmark Datasets Experiments}

In this part, we carry out experiments on a family of real-world benchmark datasets to demonstrate the effectiveness of the proposed method. The general description of these datasets is presented in Table \ref{tab_data_des}, in which ``$\#$'' represents the number of samples. The datasets are as follows:

$\bullet$ \emph{Wine Quality\footnote{\url{https://archive.ics.uci.edu/dataset/186/wine+quality}}:} The Wine Quality dataset comprises two subsets, corresponding to red and white wine samples from northern Portugal, denoted as \texttt{WineRed} and \texttt{WineWhite}. The purpose of this dataset is to model the relationship between wine quality and 11 physicochemical indicators, including fixed acidity, volatile acidity, citric acid, pH, alcohol, etc. Wine quality scores are determined by the median score of at least three sensory evaluators through blind tasting, ranging from 0 (very poor) to 10 (excellent).

$\bullet$ \emph{Abalone\footnote{\url{https://archive.ics.uci.edu/dataset/1/abalone}}:} This dataset contains 4,177 samples of abalones, with each sample providing detailed physical characteristics, such as sex, length, diameter, height, number of growth rings, etc. Typically, the age of an abalone is calculated by adding 1.5 to the number of growth rings (in years). The utilization of this dataset aims to model the relationship between the age of abalones and their physical characteristics, thus providing scientific support for abalone aquaculture and fisheries management.

$\bullet$ \emph{Combined Cycle Power Plant\footnote{\url{https://archive.ics.uci.edu/dataset/294/combined+cycle+power+plant}}:} The dataset contains 9,568 samples collected from a combined cycle power plant operating at full load from 2006 to 2011, denoted as \texttt{CCPP}. Each sample includes hourly average ambient variables such as temperature, ambient pressure, relative humidity, and exhaust vacuum, as well as the target variable, net hourly electrical energy output. The primary objective is to utilize these data to establish a relationship between the power plant's hourly net electrical power output and these ambient variables, with the ultimate goal of optimizing energy production and distribution.

$\bullet$ \emph{Steel Industry Energy Consumption\footnote{\url{https://archive.ics.uci.edu/dataset/851/steel+industry+energy+consumption}}:} This dataset encompasses energy consumption data collected by DAEWOO Steel Co. Ltd in Gwangyang, South Korea, at a 15-minute interval over a span of 365 days,
% totaling 35,040 samples.
denoted as \texttt{Energy}.
The features include lagging and leading current reactive power, lagging and leading current power factor, carbon dioxide emissions, and load types. The aim of this dataset is to analyze the relationship between energy consumption and these features, thereby providing robust support for the formulation and optimization of effective energy consumption policies.

$\bullet$ \emph{Production Quality\footnote{\url{https://www.kaggle.com/datasets/podsyp/production-quality}}:} This dataset collects data from a roasting machine that spans from January 2015 to May 2018, denoted as \texttt{Quality}. The roasting machine contains five chambers of equal size, and each chamber is equipped with three temperature sensors positioned at different locations. Additionally, the roasting machine is equipped with layer height and humidity sensors to measure the layer height and humidity when raw materials enter the machine. These sensors collect data every minute. Product quality is assessed through hourly sampling in the laboratory, reflecting the roasting quality of the previous hour. Given the inconsistency in sampling frequencies between product quality data and roasting sensor data, we divide the roasting sensor data within an hour into four temporal windows for averaging, and then combine the averaged data from these four windows to represent the roasting features for that hour. The goal of this dataset is to establish a relationship between the product quality produced by the roasting machine and the features obtained from sensor data,
aiming to effectively enhance and optimize the machine's production performance.

Before discussing the experiments, it is important to clarify some implementation details: 1) For each dataset, we first apply min-max normalization to each dimension, excluding the target variable, and then map the normalized data to the ball centered at the origin with a radius of 1/2. 2) For the \texttt{Energy} and \texttt{Quality} datasets, we perform a logarithmic transformation on the target variable $y$, with the specific transformation formula being $\hat{y}=\log(1+y)$.
3) For the \texttt{Energy} and \texttt{Quality} datasets, we randomly split the data into equal halves for training and testing. For the other datasets, we randomly allocate 80\% of the samples for training and the remaining 20\% for testing. 10 independent sets of training and testing samples are generated by performing 10 random splits on each dataset.
% \footnote{Due to the significant randomness of RSNN,  each dataset undergoes 100 random splits.}.
4) For both RSNN and the proposed method, the parameters $N$ and $\tau$ are selected from the sets $\{10,20,\dots,200\}$ and $\{0.01,0.1,0.3,0.5\}$ respectively, and the ranges of other parameters remain the same as those in Simulation 4 of Section \ref{Syn_ex}.

\begin{figure}[t]
    \centering
  \includegraphics[width=6.5cm,height=4.2cm]{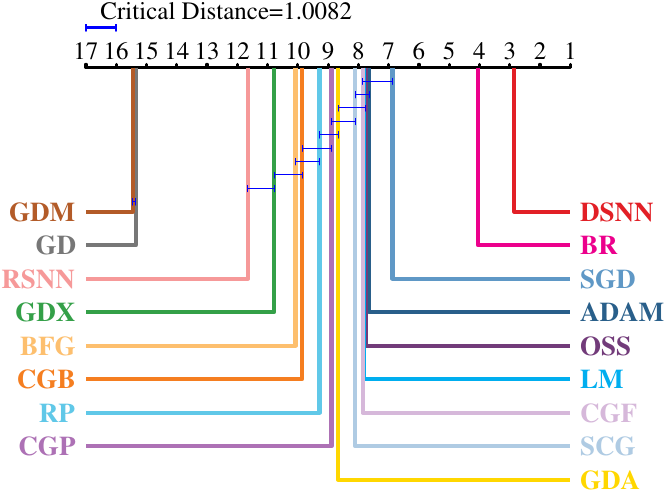}
    \caption{Post hoc statistical tests of the seventeen methods on real-world datasets}\label{fig_cdplot_real}
    \vspace{-0.1in}
\end{figure}

The average RMSE along with the standard deviation for the compared methods is reported in Table \ref{Comparison_RMSE_real}, wherein the bold red, blue, and green numbers indicate the three lowest RMSEs. Figure \ref{fig_trtime_realdata} depicts the corresponding average training time. It is worth noting that the methods GD, GDM, and SGD have no records of RMSE and training time for some datasets, such as \texttt{WineWhite}, \texttt{Abalone}, and \texttt{CCPP}, due to the occurrence of the exploding gradients problem. To further illustrate the advantages of the proposed method, we additionally provide a CD diagram for the six datasets, as shown graphically in Figure \ref{fig_cdplot_real}. From the results, we can conclude
the following assertions: 1) Deep neural networks, due to their proficiency in precisely capturing complex nonlinear relationships within data, demonstrate stable and outstanding performance on real-world datasets, compared to shallow neural networks. 2) Among shallow networks, BR stands out as particularly excellent, surpassing even deep networks in accuracy on some datasets and ranking second in the posthoc statistical test. This may be attributed to the fact that real-world data frequently contains substantial noise, thereby hindering the full exploitation of the benefits of deep neural networks. This finding is consistent with the results of Simulation 4, where BR tends to exhibit good performance in noisy datasets, as demonstrated in subfigure (c) of the final column in Figure \ref{fig_cdplot_syn}. However, the BR method typically requires a large number of neurons and training epochs, resulting in training time that is comparable to, or even exceeds, that of deep neural networks. 3) Although RSNN maintains the shortest training time among all methods, its generalization performance is relatively poor, ranking third from the bottom in posthoc statistical test. This could be attributed to the complex data distribution relationships encountered in real-world data, which frequently lack clear functional correlations and are accompanied by considerable noise. The architecture of RSNN is not capable of accurately capturing such complex data relationships. This also aligns with the results from Simulation 3, where RSNN performs poorly on data generated by $f_2$  with high-level noise, as shown in subfigures (b) and (c) in the bottom row of Figure \ref{fig_cdplot_syn}. 4) Grounded in component-based sketching techniques, the proposed method innovatively integrates locality information from the frequency domain. This not only avoids the burdensome optimization process of deep neural networks, but also accurately captures the complex distribution relationships of real-world data. Consequently, it achieves excellent generalization performance in a training time comparable to, or even shorter than, that required for shallow networks.

All these simulations and real-world experiments demonstrate the effectiveness and efficiency of the proposed method when compared to the aforementioned approaches.

% The regularization parameter $\lambda$ is fix to $0.001$, and the layer number of the constructed deep neural network model for the product gate is set to $10$.  All the experiments are run on a desktop workstation equipped with an Intel(R) Core(TM) i9-10980XE 3.00 GHz CPU, 128 GB of RAM, and Windows 10. The results are recorded by averaging the results from multiple individual trials with the best parameters.

\section{Proofs}\label{Sec.Proofs}
The main tool in proving Theorem \ref{Theorem:approximation} is the following lemma, a dimension leverage technique, which can be easily deduced from \cite[Theorem 8.2]{petrushev1998approximation}. We refer readers to  \cite{fang2020learning,wang2020random} for detailed proofs.
\begin{lemma}\label{Lemma:petrushev}
Let $r>0$, $\mathcal L_n$ be a linear space defined on $\mathbb J$ of dimension $n$, and $\{\xi_\ell\}_{\ell=1}^N$ be a set of minimal $\mu$-energy points with $\mu>d$.
If $N\sim n^{d-1}$, for each $g\in W^r(L^2(\mathbb J))$, there exists a  $g_n\in \mathcal L_n$  and some $r'\geq r$ such that
\begin{equation}\label{approx-tool-ass}
    \|g-g_n\|_{L^2(\mathbb J)}\leq  C_1n^{-r'}\|g\|_{W^{r'}(L^2(\mathbb J))},
\end{equation}
then for any $f\in W^{r+(d-1)/2}(L^2(\mathbb B^d_{1/2}))$,
there exist functions $g_\ell$ ($\ell=1,\dots,N$) in $\mathcal{L}_n$ such that
\begin{equation}\label{Def.R}
    R_{n,N}(x):=\sum_{\ell=1}^N g_\ell(\xi_\ell\cdot x)
\end{equation}
satisfies
\begin{equation}\label{apprx-tool-res}
    \left\|f-R_{n,N}\right\|_{L^2(\mathbb B^d_{1/2})}\leq C_2n^{-r-(d-1)/2}\|f\|_{ W^{r+(d-1)/2}(L^2(\mathbb B^d_{1/2}))},
\end{equation}
 where $C_1$ and $C_2$ are constants depending only on $d,r$.
\end{lemma}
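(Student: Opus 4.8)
The statement is an instance of the abstract dimension-leverage (ridge-approximation) theorem of Petrushev \cite[Theorem 8.2]{petrushev1998approximation}, so the plan is to recall that theorem and verify that our hypotheses furnish its two inputs, after which the conclusion follows. The first input is a scale of univariate linear spaces carrying a Jackson estimate, which is exactly hypothesis \eqref{approx-tool-ass}; no rescaling of the univariate variable is needed, because $\mathbb{J}=[-1/2,1/2]$ already coincides with the range of $t=\xi\cdot x$ for $x\in\mathbb{B}^d_{1/2}$ and $\xi\in\mathbb{S}^{d-1}$. The second input is a set of directions on $\mathbb{S}^{d-1}$ that is quasi-uniform --- well separated, with mesh-norm of the optimal order $N^{-1/(d-1)}$ --- so that the spherical integral in the continuous ridge representation of $f$ can be discretized into an $N$-point sum at a controlled cost. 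This is precisely the established behaviour of minimal $\mu$-energy configurations once $\mu$ exceeds the potential-theoretic threshold $d-1$ (our hypothesis $\mu>d$ suffices), see \cite{brauchart2015distributing}. With $N\sim n^{d-1}$, both inputs are in place, and Petrushev's theorem --- applied after the harmless dilation carrying $\mathbb{B}^d_{1/2}$ onto the unit ball used in \cite{petrushev1998approximation}, which only affects $r,d$-dependent constants --- produces univariate $g_\ell\in\mathcal{L}_n$ and the estimate \eqref{apprx-tool-res}.

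To see why this is ``easily deduced'', it helps to unpack the mechanics that the theorem packages. One first invokes a Jackson estimate on the ball to approximate $f\in W^{r+(d-1)/2}(L^2(\mathbb{B}^d_{1/2}))$ by a polynomial of degree $\lesssim n$ with $L^2$-error $\lesssim n^{-r-(d-1)/2}\|f\|_{W^{r+(d-1)/2}}$, and splits it over dyadic bands into pieces $f_j$ of degree $\sim 2^j\lesssim n$ with $\|f_j\|_{L^2}\lesssim 2^{-j(r+(d-1)/2)}\|f\|_{W^{r+(d-1)/2}}$. Each $f_j$ admits a ridge representation $f_j(x)=\sum_{\ell=1}^N h_{j,\ell}(\xi_\ell\cdot x)$ with univariate polynomial profiles of degree $\lesssim 2^j$, the quasi-uniformity of $\{\xi_\ell\}$ being what controls the profiles: $\sum_\ell\|h_{j,\ell}\|^2_{L^2(\mathbb{J})}\lesssim\|f_j\|^2_{L^2(\mathbb{B}^d_{1/2})}$, hence by a Bernstein inequality $\|h_{j,\ell}\|_{W^{r'}(\mathbb{J})}\lesssim 2^{jr'}\|h_{j,\ell}\|_{L^2(\mathbb{J})}$. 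One then replaces each $h_{j,\ell}$ by its $\mathcal{L}_n$-approximant $g_{j,\ell}$ from \eqref{approx-tool-ass} and sets $g_\ell=\sum_j g_{j,\ell}$. Writing $\tilde R_{n,N}(x)=\sum_\ell(\sum_j h_{j,\ell})(\xi_\ell\cdot x)$ and splitting
\[
\|f-R_{n,N}\|_{L^2(\mathbb{B}^d_{1/2})}\le\|f-\tilde R_{n,N}\|_{L^2(\mathbb{B}^d_{1/2})}+\|\tilde R_{n,N}-R_{n,N}\|_{L^2(\mathbb{B}^d_{1/2})},
\]
the first term is $\lesssim n^{-r-(d-1)/2}\|f\|_{W^{r+(d-1)/2}}$ by the construction above; for the second, the elementary slice bound $\|u(\xi\cdot x)\|_{L^2(\mathbb{B}^d_{1/2})}\lesssim\|u\|_{L^2(\mathbb{J})}$ together with the frame-type upper bound for sums of ridge functions along well-distributed directions reduces it band by band to $\sum_j n^{-r'}2^{j(r'-r-(d-1)/2)}\|f\|_{W^{r+(d-1)/2}}$, a geometric series over $2^j\lesssim n$ that sums to $\lesssim n^{-r-(d-1)/2}\|f\|_{W^{r+(d-1)/2}}$. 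Combining the two bounds yields \eqref{apprx-tool-res}.

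The only genuinely non-routine ingredient is the spherical-discretization step: quantifying how faithfully the $N$-point sum over minimal $\mu$-energy points reproduces integration over $\mathbb{S}^{d-1}$ in the ridge representation, together with the accompanying frame-type bound for the (overcomplete) ridge systems, uniformly across the dyadic bands. This is exactly where the hypotheses $\mu>d$ and $N\sim n^{d-1}$ are used, and it is what one imports from \cite[Theorem 8.2]{petrushev1998approximation} in combination with the quasi-uniformity estimates for minimal-energy configurations in \cite{brauchart2015distributing}; the companion arguments in \cite{fang2020learning,wang2020random}, which treat the settings closest to ours, carry out this step in full detail.
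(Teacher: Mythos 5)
Your proposal rests on exactly the same foundation as the paper, which offers no proof of this lemma beyond invoking \cite[Theorem 8.2]{petrushev1998approximation} and deferring to \cite{fang2020learning,wang2020random} for details; your unpacking of the mechanics (Jackson on the ball, dyadic ridge decomposition along quasi-uniform minimal-energy directions, substitution of the univariate $\mathcal{L}_n$-approximants, and the slice/stability bounds) is a faithful sketch of how those references carry it out. The one point to be aware of is that your geometric series $\sum_j n^{-r'}2^{j(r'-r-(d-1)/2)}$ only sums to $O(n^{-r-(d-1)/2})$ when $r'\geq r+(d-1)/2$, whereas the hypothesis as stated only guarantees $r'\geq r$ --- but this looseness is inherited from the lemma's own formulation (and from how the paper later applies it), not introduced by you.
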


Set  $\mathcal L_n$ as
$$
     \mathcal H^{uni}_{J,n,m,\tau}:=\left\{\sum_{j=0}^{J-1}\sum_{k=1}^na_{jk}\tilde{\times}_{J,j,k,m,\tau}(t):a_{jk}\in\mathbb R\right\}
$$
with some fixed $0<\tau\leq n^{-4J-1}$ and $m\in\mathbb N$, where $\tilde{\times}_{J,j,k,m,\tau}$ is given in \eqref{basis-for-simpl}.
It follows from the above lemma that the approximation capability of the hypothesis space $\mathcal H_{j,n,N,m,\tau}$ defined in (\ref{Hypothesis-space-deep}) is determined by the approximation performance of $\mathcal H^{uni}_{J,n,\tau,\varepsilon}$. Therefore, it suffices to quantify the distance between $W^r(L^2(\mathbb J))$ and
$\mathcal H^{uni}_{J,n,\tau,\varepsilon}$.

For $s\in\mathbb N$ and a compact set  $\mathbb A$, denote by $\mathcal P_s(\mathbb A)$ the set of algebraic polynomials of degree at most $s$ defined on $\mathbb A$.
The well-known Jackson inequality  \cite{devore1993constructive} shows that for any $g\in W^r(L^2(\mathbb J))$, there exists a $p\in \mathcal P_s(\mathbb J)$ such that
\begin{equation}\label{Jackson-uni}
    \|g-p\|_{L^2(\mathbb J)}\leq C_3 s^{-r}\|g\|_{W^r(L^2(\mathbb J))}, \ \mbox{and}\ \|p\|_{W^r(L^2(\mathbb J))}\leq \|g\|_{W^r(L^2(\mathbb J))},
\end{equation}
where $C_3$ is a constant depending only on $r$.
Let $0\leq j\leq J-1$, $1\leq k\leq n$. For any $p\in \mathcal P_s(\mathbb J)$ with $s\in\mathbb N$, define
\begin{equation}\label{Taylor-poly}
    \psi_{j,t_{k-1},p}(t):=p( t_{k-1})+\frac{p'(t_{k-1})}{1!}
            (t-t_{k-1})+\cdots+\frac{p^{(j)}(t_{k-1})}{j!}( t-t_{k-1})^j
\end{equation}
as the Taylor polynomial approximation  of the polynomial $p$, where  we use the denotation $0!=1$.
It follows from the Taylor's formula that
\begin{equation}\label{Taylor formula1}
           p(t)=\psi_{j,t_{k-1},p}(t)+\frac{1}{j!}\int_{t_{k-1}}^tp^{(j+1)}(u)(t-u)^{j}du.
\end{equation}
Define further
\begin{equation}\label{N-one dim}
       N_{n,j}(t):=\sum_{k=1}^{n}\psi_{j,t_{k-1},p}(t)T_{\tau,t_{k-1},t_{k}}(t).
\end{equation}
We then have the following lemma that describes the approximation error of $N_{n,j}(t)$.
\begin{lemma}\label{Lemma:distance}
Let $n,s\in\mathbb N$ satisfy $n\sim s$. If  $0<\tau\leq n^{-4r-5}$, then for any $p\in\mathcal P_s(\mathbb J)$, there holds
\begin{eqnarray*}
        \|p-N_{n,j}\|_{L^2(\mathbb J)}^2  \leq C_4n^{-2j-2}\|p\|^2_{W^{j+1}(L^2(\mathbb J))},
\end{eqnarray*}
where $C_4$ is a constant depending only on $d,j$.
\end{lemma}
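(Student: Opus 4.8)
The plan is to split $\mathbb J=[-1/2,1/2]$ into the parts where exactly one trapezoid is active and the thin ``collars'' $C_k:=[t_k-\tau,t_k+\tau]$, $k=1,\dots,n-1$, around the interior breakpoints where two neighbouring trapezoids overlap, to bound the two contributions separately, and to use the strong smallness $\tau\le n^{-4r-5}$ to absorb the collar part. Write $I_k=[t_{k-1},t_k]$ (so $|I_k|=1/n$) and note that $\tau<1/(2n)$ whenever $n\ge 2$ (the case $n=1$ being trivial since then $N_{1,j}=\psi_{j,t_0,p}$ on all of $\mathbb J$). Then Proposition~\ref{Prop:localized} gives: on $\mathbb J\setminus\bigcup_{k}C_k$ exactly one $T_{\tau,t_{k-1},t_k}$ equals $1$ and all others vanish, so there $N_{n,j}(t)=\psi_{j,t_{k-1},p}(t)$ for $t\in I_k$; while on $C_k$ only $T_1:=T_{\tau,t_{k-1},t_k}$ and $T_2:=T_{\tau,t_k,t_{k+1}}$ are nonzero, both with values in $[0,1]$, so that $N_{n,j}=\psi_{j,t_{k-1},p}T_1+\psi_{j,t_k,p}T_2$ there.

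\textbf{Interior part.} For $t\in I_k$, the Taylor remainder identity \eqref{Taylor formula1} together with $t-t_{k-1}\le 1/n$ and Cauchy--Schwarz yields $|p(t)-\psi_{j,t_{k-1},p}(t)|\le \frac{n^{-j-1/2}}{j!}\|p^{(j+1)}\|_{L^2(I_k)}$. Squaring, integrating over $I_k$, and summing over $k$ gives an interior contribution $\le C\,n^{-2j-2}\|p^{(j+1)}\|_{L^2(\mathbb J)}^2\le C\,n^{-2j-2}\|p\|_{W^{j+1}(L^2(\mathbb J))}^2$.

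\textbf{Collar part.} Each $C_k$ lies within distance $2/n$ of both $t_{k-1}$ and $t_k$, so the same remainder estimate gives $\|\psi_{j,t_{k-1},p}-p\|_{L^\infty(C_k)}+\|\psi_{j,t_k,p}-p\|_{L^\infty(C_k)}\le C\,n^{-j-1/2}\|p^{(j+1)}\|_{L^2(\widehat I_k)}$ with $\widehat I_k:=I_k\cup I_{k+1}$. Since $0\le T_1,T_2\le 1$, this implies $|p-N_{n,j}|\le 3\|p\|_{L^\infty(\mathbb J)}+C\,n^{-j-1/2}\|p^{(j+1)}\|_{L^2(\widehat I_k)}$ on $C_k$. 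As $|C_k|=2\tau$ and the $\widehat I_k$ overlap boundedly, summing over $k$ gives a collar contribution $\le C\tau\bigl(n\,\|p\|_{L^\infty(\mathbb J)}^2+\|p^{(j+1)}\|_{L^2(\mathbb J)}^2\bigr)$. The Nikol'skii inequality $\|p\|_{L^\infty(\mathbb J)}\le C\,s\,\|p\|_{L^2(\mathbb J)}$ for $p\in\mathcal P_s(\mathbb J)$, combined with $n\sim s$, bounds this by $C\tau n^{3}\|p\|_{W^{j+1}(L^2(\mathbb J))}^2$; since $\tau\le n^{-4r-5}$ one has $\tau n^{3}\le n^{-4r-2}\le n^{-2j-2}$ (in the applications of this lemma $j\le 2r$), so the collar part is also $\le C\,n^{-2j-2}\|p\|_{W^{j+1}(L^2(\mathbb J))}^2$. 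Adding the two parts proves the claim, with $C_4$ depending only on $d$ and $j$.

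The delicate point is the collar estimate: because the trapezoids $T_{\tau,t_{k-1},t_k}$ do not form a partition of unity---their sum reaches $2$ on the overlaps---$N_{n,j}$ effectively double-counts $p$ near each interior breakpoint, so on $C_k$ the error is of order $\|p\|_{L^\infty}$ rather than a high-order Taylor remainder. Controlling it forces one to trade the $O(\tau)$ total measure of the collars against the polynomial growth of $\|p\|_{L^\infty(\mathbb J)}$ through a Nikol'skii-type inequality, which is precisely why the hypothesis imposes the strong smallness $\tau\le n^{-4r-5}$.
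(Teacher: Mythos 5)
Your proof is correct and follows essentially the same route as the paper's: split the error into the region where a single trapezoid is active (bounded there by the Taylor remainder via Cauchy--Schwarz) and the measure-$O(n\tau)$ overlap regions (bounded by a sup-norm estimate through a Nikol'skii-type inequality and absorbed using the smallness of $\tau$). Your collar bound is in fact slightly sharper --- by subtracting $p$ before taking sup norms you only need $\tau\lesssim n^{-2j-5}$, whereas the paper bounds $\|\psi_{j,t_k,p}\|_{L^\infty}$ directly via Markov's inequality and needs $\tau\le n^{-4j-5}$; the resulting mismatch between the exponent $r$ in the stated hypothesis and the exponent $j$ actually required (which you flag via the condition $j\le 2r$) is present in the paper's own proof as well.
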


\begin{IEEEproof} Denote $A_k=[t_{k-1},t_{k})$ for $k=1,\dots,n-1$ and
$A_{n}=[t_{n-1},t_{n}]$. Then
\begin{eqnarray}\label{error decomposition}
   &&\|p-N_{n,j}\|_{L^2(\mathbb J)}^2
   =\int_{\mathbb
   J}\left|p(t)-\sum_{k=1}^{n}\psi_{j,t_{k-1},p}(t)T_{\tau,t_{k-1},t_{k}}(t)\right|^2dt\nonumber\\
   &\leq&
   2\sum_{k'=1}^{n}\int_{A_{k'}}|p(t)-\psi_{j,t_{k'-1},p}(t)T_{\tau,t_{k'-1},t_{k'}}(t)|^2dt \nonumber\\
   &+&
   2\sum_{k'=1}^{n}\int_{A_{k'}} \left|\sum_{k\neq
   k'}\psi_{j,t_{k-1},p}(t)T_{\tau,t_{k-1},t_{k}}(t)\right|^2dt\nonumber\\
   &=:&
   \mathcal B_1+\mathcal B_2.
\end{eqnarray}
It follows from (\ref{Deteailed trapezoid}),
 (\ref{Taylor formula1}),  and H\"{o}lder's inequality that
\begin{eqnarray*}
     \mathcal B_1&=&
     2\sum_{k'=1}^{n}\int_{A_{k'}}|p(t)-\psi_{j,t_{k'-1},p}(t)|^2dt\\
      &=&
     2\sum_{k'=1}^{n}\int_{A_{k'}}\left|\frac{1}{j!}\int_{t_{k'-1}}^tp^{(j+1)}(u)(t-u)^{j}du\right|^2dt \nonumber\\
     &\leq&
     \frac{2}{(j!)^2}\sum_{k'=1}^{n}\int_{A_{k'}}(t-t_{k'-1})^{2j+1} \int_{t_{k'-1}}^t|p^{(j+1)}(u)|^2dudt\\
     &\leq&
     \frac2{(j!)^2n^{2j+1}}\sum_{k'=1}^{n}\int_{A_{k'}}\int_{t_{k'-1}}^{t_{k'}}|p^{(j+1)}(u)|^2dudt \nonumber\\
     &=&
     \frac2{(j!)^2n^{2j+2}}\sum_{k'=1}^{n}\int_{t_{k'-1}}^{t_{k'}}|p^{(j+1)}(u)|^2du\\
     &=&
     \frac2{(j!)^2n^{2j+2}}\|p^{(j+1)}\|_{L^2(\mathbb J)}^2.
\end{eqnarray*}
This implies
\begin{equation}\label{Bound B1}
          \mathcal B_1 \leq \frac{2}{(j!)^2n^{2j+1}}\|p^{(j+1)}\|_{L^2(\mathbb J)}^2.
\end{equation}
Furthermore, (\ref{Deteailed trapezoid}) and $\tau\leq 1/n$ yields
\begin{eqnarray}\label{intial estimate for B2}
   &&\mathcal B_2
    \leq
   4\sum_{k'=2}^{n-1}\int_{A_{k'}}
   |\psi_{j,t_{k'-2},p}(t)T_{\tau,t_{k'-2},t_{k'-1}}(t)|^2dt \nonumber\\
   &+&4\sum_{k'=2}^{n-1}\int_{A_{k'}}|\psi_{j,t_{k'},p}(t)T_{\tau,t_{k'},t_{k'+1}}(t)|^2dt  \nonumber\\
   &+&2\int_{A_1}|\psi_{j,t_1,p}(t)T_{\tau,t_{1},t_{2}}(t)|
   ^2 dt\nonumber\\
   &+&2\int_{A_{n}}|\psi_{j,t_{n-2},p}(t)T_{\tau,t_{n-2},t_{n-1}}(t)|
   ^2dt\nonumber\\
   &\leq&
   4\sum_{k'=2}^{n-1}\int_{t_{k'-1-\tau}-\tau}^{t_{k'-1}}|\psi_{j,t_{k'-2},p}(t)|^2dt
   +
   4\sum_{k'=2}^{n-1}\int_{t_{k'}}^{t_{k'}+\tau}|\psi_{j,t_{k'},p}(t)|^2dt
   \nonumber\\
   &+&
   2\int_{t_1}^{t_1+\tau}|\psi_{j,t_0,p}(t)|
   ^2 dt
   +2\int_{t_{n}-\tau}^{t_{n}}|\psi_{j,t_{n-1},p}(t)|
   ^2dt.
\end{eqnarray}
For $t\in[t_{k-1}-\tau,t_{k+1}+\tau]$, it follows from the well-known Markov inequality and
 Nikolskii inequality for algebraic polynomials
\cite{borwein2012polynomials} that
 \begin{eqnarray*}
    &&\|\psi_{j,t_k,p}\|_{L^\infty(\mathbb J)}
    \leq
   \sum_{j'=0}^{j}  \frac{\|p^{(j')}\|_{L^\infty(\mathbb J)}}{j'!}(\tau+1/n)^j \\
   &\leq &
   C_5\sum_{j'=0}^{j}  \frac{s^{2j'}\|p\|_{L^\infty(\mathbb J)}}{j'!}(\tau+1/n)^j\\
   &\leq&
   C_6\sum_{j'=0}^{j}  \frac{s^{2j'+1}\|p\|_{L^2(\mathbb J)}}{j'!}(\tau+1/n)^{j'}\\
   &\leq&
   C_62^{(d-1)/2}\sum_{j'=0}^{j}  \frac{s^{2j'+1}\|p\|_{L^2(\mathbb J)}}{j'!}(\tau+1/n)^{j'},
\end{eqnarray*}
where $C_5$ and $C_6$ are absolute constants.
Plugging the above estimate into (\ref{intial estimate for B2}), we get from $n\sim s$ and $0<\tau\leq n^{-4j-5}$ that
\begin{equation}\label{Bound B2}
   \mathcal B_2
   \leq
   C_6^22^{d+1} n   \tau\|p\|^2_{L^2(\mathbb J)}\left(\sum_{j'=0}^{j}  \frac{s^{2j'+1}2^{j'}}{j'!n^{j'}}\right)^2 \leq C_4  n^{-2j-2}\|p\|^2_{L^2(\mathbb J)},
\end{equation}
where $C_4$ is a constant depending only on $j$ and $d$.
 Inserting (\ref{Bound B1}) and
(\ref{Bound B2}) into (\ref{error decomposition}), we  have
\begin{eqnarray*}
        \|p-N_n\|_{L^2(\mathbb J)}^2  \leq
    C_4 n^{-2j-2}\|p\|^2_{W^{j+1}(L^2(\mathbb J))}.
\end{eqnarray*}
This completes the proof of Lemma \ref{Lemma:distance}.
 \end{IEEEproof}

From Lemma \ref{Lemma:distance} and Proposition \ref{Prop:product-component-bound}, we can derive the following lemma.

\begin{lemma}\label{Lemma:distance-univariate}
Let  $r>0$, $J,n,s\in\mathbb N$ satisfy $n\sim s$. If  $J\geq r$, $0<\tau\leq n^{-4J-1}$, and $m \sim  \log n $, then for any $p\in\mathcal P_s(\mathbb J)$, there is a $g_n\in \mathcal H^{uni}_{J,n,m,\tau}$ such that
$$
    \|p-g_n\|_{L^2(\mathbb J)}
    \leq  C_8\left(n^{-J}\|p\|_{W^{r}(L^2(\mathbb J))}+n^{-J}\|p\|_{W^{r}(L^2(\mathbb J))}\right),
$$
where $C_8$ is a constant depending only on $d, r$.
\end{lemma}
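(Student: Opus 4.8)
The plan is to route from $p$ to the target $g_n$ through the intermediate piecewise--Taylor function $N_{n,J-1}$ supplied by Lemma~\ref{Lemma:distance}, using $\|p-g_n\|_{L^2(\mathbb J)}\le \|p-N_{n,J-1}\|_{L^2(\mathbb J)}+\|N_{n,J-1}-g_n\|_{L^2(\mathbb J)}$. First I would apply Lemma~\ref{Lemma:distance} with $j=J-1$: the hypothesis $0<\tau\le n^{-4J-1}$ is exactly the requirement $\tau\le n^{-4(J-1)-5}$ of that lemma, so I obtain $\|p-N_{n,J-1}\|_{L^2(\mathbb J)}\le C_4^{1/2}\,n^{-J}\|p\|_{W^{J}(L^2(\mathbb J))}$, which is the first of the two contributions in the asserted bound (here $J\ge r$ is used).

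For the second contribution I would expand each node-wise Taylor polynomial appearing in $N_{n,J-1}(t)=\sum_{k=1}^{n}\psi_{J-1,t_{k-1},p}(t)T_{\tau,t_{k-1},t_{k}}(t)$ into the monomials $1,t,\dots,t^{J-1}$, say $\psi_{J-1,t_{k-1},p}(t)=\sum_{i=0}^{J-1}c_{i,k}t^{i}$, and set
\begin{equation*}
g_n(t):=\sum_{k=1}^{n}\sum_{i=0}^{J-1}c_{i,k}\,\tilde{\times}_{J,i,k-1,m,\tau}(t)\in\mathcal H^{uni}_{J,n,m,\tau},
\end{equation*}
which is legitimate since $\tilde{\times}_{J,i,k-1,m,\tau}$ is built from $T_{\tau,t_{k-1},t_{k}}$, so only a harmless index shift relates the two labelings. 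Because all arguments fed to the product--component lie in $[-1/2,1]$, Proposition~\ref{Prop:product-component-bound} gives $|t^{i}T_{\tau,t_{k-1},t_{k}}(t)-\tilde{\times}_{J,i,k-1,m,\tau}(t)|\le c_2J2^{-m}$ for every $t$, hence $\|N_{n,J-1}-g_n\|_{L^\infty(\mathbb J)}\le c_2J2^{-m}\sum_{k,i}|c_{i,k}|$. Since $|t_{k-1}|\le 1/2$, each $|c_{i,k}|\le J\max_{i'\le J-1}\|p^{(i')}\|_{L^\infty(\mathbb J)}$, and the Markov and Nikolskii inequalities for $\mathcal P_s$ (already invoked in the proof of Lemma~\ref{Lemma:distance}) together with $n\sim s$ give $\max_{i'\le J-1}\|p^{(i')}\|_{L^\infty(\mathbb J)}\le Cn^{2J-1}\|p\|_{L^2(\mathbb J)}$. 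Therefore $\|N_{n,J-1}-g_n\|_{L^2(\mathbb J)}\le CJ^{3}n^{2J}2^{-m}\|p\|_{L^2(\mathbb J)}$.

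Choosing the implied constant in $m\sim\log n$ large enough that $2^{-m}\le n^{-3J}$ makes this last quantity at most $Cn^{-J}\|p\|_{L^2(\mathbb J)}$, which is the second contribution; adding the two estimates and absorbing constants into $C_8$ completes the proof. The crux, and the only genuinely delicate point, is precisely this last step: the monomial coefficients of the node-wise Taylor polynomials and the sup-norms of the derivatives $p^{(i)}$ both blow up polynomially in $n$ (of order $n^{2J}$ after using $n\sim s$), so the product-component tolerance $c_2J2^{-m}$ must be driven below $n^{-J}$ times that factor; the logarithmic depth $m\sim\log n$ is exactly what accomplishes this without enlarging the network. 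The only other issue, controlling the Taylor remainders on the overlap regions of neighbouring trapezoids, is purely bookkeeping and is already absorbed into Lemma~\ref{Lemma:distance}.
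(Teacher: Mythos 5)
Your proposal is correct and follows essentially the same route as the paper: approximate $p$ by the piecewise Taylor sum $N_{n,J-1}$ via Lemma~\ref{Lemma:distance} (noting $n^{-4J-1}=n^{-4(J-1)-5}$), then replace each localized product by the product-component and absorb the substitution error by taking $m\sim\log n$. In fact your accounting of the second step is more careful than the paper's, which simply asserts a uniform bound $c_12^{-2m}$ for $|N_{J,j,n,m,\tau}-N_{n,j}|$ without tracking the polynomially growing coefficients; your Markov--Nikolskii estimate showing the $n^{2J}$ blow-up is harmless for $m\sim\log n$ fills that gap.
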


\begin{IEEEproof}
For any $j\leq J-1$, it follows from
  (\ref{N-one dim}) and Proposition \ref{Prop:product-component-bound} that  there exists an element $N_{J,j n, m,\tau}\in\mathcal H^{uni}_{J,n,\tau,\varepsilon}$ defined by
$$
    N_{J,j, n, m,\tau}(t):=\sum_{j=0}^{J-1}\sum_{k=0}^{n } a_{j,k}
    PG_{J,m}(T_{\tau,t_{k-1},t_{k}}(t),\overbrace{t,\dots,t}^{j},\overbrace{1,\dots,1}^{J-1-j})
$$
such that
$$
    |N_{J,j, n, m,\tau}(t)-N_{n,j}(t)|\leq c_12^{-2m}.
$$
This together with Lemma \ref{Lemma:distance} with $j=J-1$ yields
\begin{eqnarray*}
    &&\|N_{J,j, n, m,\tau}-p\|_{L^2(\mathbb J)}^2\leq    C_4 n^{-2J}\|p\|^2_{W^{J}(L^2(\mathbb J))}+c_1^22^{-4m}.
\end{eqnarray*}
Setting  $m\sim  \log n$ such that
$2^{-4m}\leq n^{-2r}\|p\|^2_{W^{r}(L^2(\mathbb J))}$, and consequently,
$$
    \|N_{J,j, n, m,\tau}-p\|_{L^2(\mathbb J)}
    \leq  C_8\left(n^{-J}\|p\|_{W^{r}(L^2(\mathbb J))}+n^{-J}\|p\|_{W^{r}(L^2(\mathbb J))}\right),
$$
where $C_8$ is a constant depending only on $r, d$. This completes the proof of Lemma \ref{Lemma:distance-univariate}.
\end{IEEEproof}

With the above foundations, we can prove Theorem \ref{Theorem:approximation} as follows.

\begin{IEEEproof}[Proof of Theorem \ref{Theorem:approximation}]
Due to Lemma \ref{Lemma:distance-univariate} and  (\ref{Jackson-uni}) with $s\sim n$ and $r\leq J$, there exists a $g_n\in \mathcal H^{uni}_{J,n,m,\tau}$ such that
\begin{eqnarray*}
   &&\|g-g_n\|_{L^2(\mathbb J)}
   \leq
   \|g-p\|_{L^2(\mathbb J)}+\|p-g_n\|_{L^2(\mathbb J)}\\
   &\leq&
   C_3n^{-r}\|g\|_{W^r(L^2(\mathbb J))}+C_3C_4n^{-J}\|g\|_{W^J(L^2(\mathbb J))}.
\end{eqnarray*}
This verifies (\ref{approx-tool-ass}).
% Then it follows from Lemma \ref{Lemma:petrushev} with $r'=r$ and $r'=J$ respectively
% % and the inequality \cite{petrushev1998approximation}
% % $ \|g\|^2_{W^{J}(L^2(\mathbb J))}\leq \|f\|_{ W^{r+(d-1)/2}(L^2(\mathbb B^d_{1/2}))} $
% that for any $f\in W^{r+(d-1)/2}(L^2(\mathbb B^d_{1/2}))$
Then it follows from Lemma \ref{Lemma:petrushev}, by setting $r'$ to $r$ and $J$ in separate cases, that for any $f\in W^{r+(d-1)/2}(L^2(\mathbb B^d_{1/2}))$, we have
$$
   \mbox{dist}(f,\mathcal H_{J,n,N,m,\tau})\leq Cn^{-r-(d-1)/2}\|f\|_{ W^{r+(d-1)/2}(L^2(\mathbb B^d_{1/2}))},
$$
where $C$ is a constant depending only on $r, d$. The lower bound can be deduced by the well-known width theory \cite{pinkus2012n}. We refer the readers to \cite[Theorem 1]{maiorov1999degree} for a detailed proof.
 This completes the proof of Theorem \ref{Theorem:approximation}.
\end{IEEEproof}

To prove Theorem \ref{Theorem:gene-deep}, we need the following lemma \cite[Theorem 11.3]{gyorfi2002distribution}.

\begin{lemma}\label{Lemma:ORACLE}
Let $H_u$ be a  $u$-dimensional linear space. Define the estimate
$f_{D,u}$ by
$$
         f_{D,u}:=\pi_M f^*_{D,u}\quad\mbox{where}\quad
         f^*_{D,u}=\arg\min_{f\in
         H_u}\frac1{|D|}\sum_{i=1}^{|D|}|f(x_i)-y_i|^2.
$$
Then
$$
          \mathbf E\left\{\|f_{D,u}-f_\rho\|_\rho^2\right\}\leq
          \tilde{C}M^2\frac{u\log |D|}{|D|}+8\inf_{f\in H_u}\|f_\rho-f\|_\rho^2,
$$
for some universal constant $\tilde{C}$.
\end{lemma}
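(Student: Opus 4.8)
The statement is the classical oracle inequality for truncated least-squares regression over a finite-dimensional linear space, and the plan is to prove it by a bias--variance split combined with a relative (weighted) empirical-process deviation bound whose sole quantitative input is the pseudo-dimension of $H_u$. I would first translate everything into excess-risk language through the identity $\|g-f_\rho\|_\rho^2=\mathcal E(g)-\mathcal E(f_\rho)$, which holds because $|Y|\le M$ almost surely. Fix an arbitrary competitor $f\in H_u$. Since $f^*_{D,u}$ minimizes the empirical $\ell_2$ risk over $H_u$ and $\pi_M$ is a contraction fixing $[-M,M]$, the pointwise bound $|\pi_M f^*_{D,u}(x_i)-y_i|\le|f^*_{D,u}(x_i)-y_i|$ (using $|y_i|\le M$) shows that $f_{D,u}=\pi_M f^*_{D,u}$ has empirical risk no larger than that of $f$. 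Writing $Z_g(x,y):=(\pi_M g(x)-y)^2-(f_\rho(x)-y)^2$ for $g\in H_u$, this reduces the task to bounding the population quantity $\mathbf E\{\int Z_{f_{D,u}}\,d\rho\}=\mathbf E\{\|f_{D,u}-f_\rho\|_\rho^2\}$ by a small multiple of its empirical counterpart plus the approximation error $\inf_{f\in H_u}\|f_\rho-f\|_\rho^2$.

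The core is a uniform deviation estimate over the truncated class $\mathcal F_u:=\{\pi_M f:f\in H_u\}$, driven by two ingredients. First, a capacity bound: by Lemma~\ref{Lemma:CAPACITY-RELATION} one has $Pdim(H_u)=u$, and since truncation is a fixed monotone one-dimensional map it does not increase the pseudo-dimension, so $H_\varepsilon(\mathcal F_u,L^2)\le c\,u\log(M/\varepsilon)$ by the covering-number part of Lemma~\ref{Lemma:CAPACITY-RELATION}; the covering number is merely polynomial. Second, a variance--mean bound: because every $g\in\mathcal F_u$ and $f_\rho$ are bounded by $M$, the increment satisfies $\mathrm{Var}(Z_g)\le cM^2\,\mathbf E Z_g=cM^2\|g-f_\rho\|_\rho^2$. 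With these two facts I would invoke a Bernstein-type relative deviation inequality (the device underlying \cite[Theorem~11.4]{gyorfi2002distribution}): for $t>0$,
\begin{equation*}
\mathbf P\!\left\{\sup_{g\in\mathcal F_u}\frac{\|g-f_\rho\|_\rho^2-\tfrac1{|D|}\sum_{i}Z_g(x_i,y_i)}{t+\|g-f_\rho\|_\rho^2}>\tfrac12\right\}\le c\,\mathcal N_{t}(\mathcal F_u,L^2)\,e^{-c'|D|t/M^2}.
\end{equation*}
Choosing $t\sim M^2 u\log|D|/|D|$ makes the right-hand side summable, since the entropy bound gives $\log\mathcal N_t\lesssim u\log(M/t)$ which is dominated by $c'|D|t/M^2\sim u\log|D|$. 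The weighting by $t+\|g-f_\rho\|_\rho^2$ is precisely what converts the slow $\sqrt{u/|D|}$ rate into the fast $u\log|D|/|D|$ rate; pairing this lower deviation for $f_{D,u}$ with a matching upper deviation for the competitor $f$ (empirical risk of $f$ against its population risk) accounts for the multiplicative constant $8$ in front of the approximation term. Integrating the resulting tail bound over $t$ then yields the claimed inequality with a universal $\tilde C$.

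The main obstacle is this relative deviation step. To obtain the fast rate one must run a peeling argument over dyadic shells $\{g:2^{k}t\le\|g-f_\rho\|_\rho^2<2^{k+1}t\}$ and apply Bernstein's inequality on each shell, where the variance--mean bound forces the effective variance to scale like the shell radius; the geometric summation over $k$ is what delivers the $u\log|D|/|D|$ order with no leftover $\sqrt{\cdot}$ factor. Everything else---the contraction property of $\pi_M$, the passage to empirical risk, and the covering-number bound---is routine once the pseudo-dimension identity $Pdim(H_u)=u$ from Lemma~\ref{Lemma:CAPACITY-RELATION} is in hand. I note finally that the lemma is quoted directly from \cite[Theorem~11.3]{gyorfi2002distribution}, so in the paper it may simply be cited; the sketch above indicates the self-contained route.
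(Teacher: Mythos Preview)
Your proposal is correct and, as you yourself note at the end, the paper does not prove this lemma at all: it is stated with the attribution ``\cite[Theorem 11.3]{gyorfi2002distribution}'' and used as a black box. Your sketch is exactly the standard argument from that reference (bias--variance split, contraction of $\pi_M$, variance--mean link, and a peeling/Bernstein relative deviation bound driven by the pseudo-dimension), so there is nothing to compare beyond observing that you have reconstructed the cited proof while the paper simply invokes it.
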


We then proceed the proof of Theorem \ref{Theorem:gene-deep} as follows.

\begin{proof}[Proof of Theorem \ref{Theorem:gene-deep}]
For the upper bound, noting that the hypothesis space $  \mathcal{H}_{J,n,N,m,\tau}$ is a linear space of dimension $JnN$ for fixed $m$ and $\tau$, it suffices to derive the approximation error $\inf_{f\in \mathcal{H}_{J,n,N,m,\tau}}\|f_\rho-f\|_\rho^2$. Due to the definition of $D_{\rho_X}$ and Theorem \ref{Theorem:approximation},  for $f_\rho\in W^{r+(d-1)/2}(L^2(\mathbb B^d_{1/2}))$, there holds
\begin{eqnarray*}
           &&\inf_{f\in \mathcal{H}_{J,n,N,m,\tau}}\|f_\rho-f\|_\rho^2
          \leq D_{\rho_X}^2 \inf_{f\in \mathcal{H}_{J,n,N,m,\tau}}\|f_\rho-f\|_{L^2(\mathbb B_{1/2}^d)}^2\\
          &\leq&
          C^2D_{\rho_X}^2n^{-2r-d+1}\|f_\rho\|^2_{ W^{r+(d-1)/2}(L^2(\mathbb B^d_{1/2}))},
\end{eqnarray*}
where $J,n,N\in \mathbb N$ satisfy $N\sim n^{d-1}$ and $J\geq r$,
$0<\tau\leq n^{-4J-1}$, and $m\sim \log n$.
Therefore, Lemma \ref{Lemma:ORACLE} yields
\begin{eqnarray*}
          \mathbf E \left\{\|f_{D,J,n,N,m,\tau}-f_\rho\|_\rho^2\right\}
          &\leq&
          \tilde{C}M^2\frac{JnN\log {|D|}}{|D|}\\
          &+&
          8 C^2D_{\rho_X}^2n^{-2r-d+1}\|f_\rho\|^2_{ W^{r+(d-1)/2}(L^2(\mathbb B^d_{1/2}))}.
\end{eqnarray*}
Noting further $n\sim |D|^{1/(2r+2d-1)}$ and $N\sim n^{d-1}\sim |D|^{(d-1)/(2r+2d-1)}$, we obtain
$$
 \mathbf E \left\{\|f_{D,J,n,N,m,\tau}-f_\rho\|_\rho^2\right\}
 \leq \tilde{C}_1 (J+D_{\rho_X}^2) |D|^{ \frac{-2r-d+1}{2r+2d-1}}\log |D|,
$$
where $\tilde{C}_1$ is a constant depending only on $d,M,f_\rho$, and $r$.
 The lower bound   can be easily deduced
  from \cite[Theorem 3.2]{gyorfi2002distribution} or \cite{maiorov2006approximation}. This completes the proof of Theorem \ref{Theorem:gene-deep}.
\end{proof}

To prove Theorem \ref{THEOREM: adaption selection}, we need the
following lemma, which can be found in \cite[Proposition 11]{caponnetto2010cross}.

\begin{lemma}\label{Lemma: cross-validation}
Let $\{\eta_i\}_{i=1}^{N'}$ be a set of real valued i.i.d. random
variables with mean $\mu$, $|\eta_i|\leq B$, and $
\mathbf E[(\eta_i-\mu)^2]\leq\sigma^2$, for all $i\in\{1,2,\dots,N'\}$. Then,
for arbitrary $a>0$, $\epsilon>0$, there hold
$$
        \mathbf P\left[\frac1{N'}\sum_{i=1}^{N'}\eta_i-\mu\geq
        a\sigma^2+\epsilon\right]\leq e^{-6N'a\epsilon}(3+4aB),
$$
and
$$
        \mathbf P\left[\mu-\frac1{N'}\sum_{i=1}^{N'}\eta_i\geq
        a\sigma^2+\epsilon\right]\leq e^{-6N'a\epsilon}(3+4aB).
$$
\end{lemma}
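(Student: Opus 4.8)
The plan is to prove this as a one-sided Bernstein--Bennett concentration inequality by the classical Cram\'er--Chernoff (exponential moment) method, and to obtain the second (lower-tail) inequality from the first by symmetry. The additive slack $a\sigma^2$ in the deviation threshold is the feature that makes the argument yield an exponential factor in $\epsilon$ multiplied only by a mild prefactor, so recognizing and exploiting the role of this slack is the key to the whole estimate.

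First I would center the summands. Put $W_i:=\eta_i-\mu$, so that $\mathbf E[W_i]=0$, $|W_i|\le 2B$ (because $|\mu|=|\mathbf E\eta_i|\le B$), and $\mathbf E[W_i^2]\le\sigma^2$; write $\bar W:=\frac1{N'}\sum_{i=1}^{N'}W_i$. Applying Markov's inequality to $e^{\lambda N'\bar W}$ for $\lambda>0$ and using independence gives the Chernoff bound $\mathbf P[\bar W\ge a\sigma^2+\epsilon]\le e^{-\lambda N'(a\sigma^2+\epsilon)}\bigl(\mathbf E\,e^{\lambda W_1}\bigr)^{N'}$. To control the moment generating function I would Taylor-expand $e^{\lambda W_1}$, cancel the linear term via $\mathbf E[W_1]=0$, and dominate the higher moments through $\mathbf E|W_1|^k\le(2B)^{k-2}\mathbf E[W_1^2]\le(2B)^{k-2}\sigma^2$ for $k\ge2$; summing the resulting series produces the Bennett-type estimate $\mathbf E\,e^{\lambda W_1}\le\exp\!\bigl(\sigma^2(e^{2B\lambda}-1-2B\lambda)/(2B)^2\bigr)$.

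It then remains to calibrate $\lambda$. Substituting the moment generating function bound, the exponent splits into an $\epsilon$-linear piece $-\lambda N'\epsilon$ and a variance-proportional piece collecting $-\lambda a\sigma^2$ together with the Bennett term, and the purpose of the slack $a\sigma^2$ is precisely to let the latter be controlled while choosing $\lambda$ of the order dictated by $a$. The hard part will be pinning down the exact numerical constants: because the Bennett term grows like $e^{2B\lambda}$, no single choice of $\lambda$ can simultaneously deliver a pure exponential bound and the stated rate, so one must keep the boundedness correction separate and show that, after the calibration, it is absorbed into the \emph{polynomial} multiplicative factor $3+4aB$ rather than an exponential one. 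This careful bookkeeping, which fixes the constants $3$, $4$ and $6$, is exactly the content of \cite[Proposition 11]{caponnetto2010cross}, and I would follow that calibration verbatim. Finally, replacing each $\eta_i$ by $-\eta_i$ --- which has mean $-\mu$, the same bound $B$ and the same variance bound $\sigma^2$ --- and rerunning the identical argument yields the second inequality.
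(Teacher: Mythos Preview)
The paper does not give its own proof of this lemma; it simply states it and cites \cite[Proposition 11]{caponnetto2010cross} as the source. Your proposal likewise ultimately defers to that same reference for the precise calibration of constants, so in substance you and the paper agree: the result is quoted, not reproved. The Bernstein--Bennett sketch you outline is the standard route and is consistent with what one finds in the cited work, so there is no discrepancy to report.
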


\begin{proof}[Proof of Theorem \ref{THEOREM: adaption selection}]
Let
$$
       n^*=\arg\min_{n\in\Xi}\int_Z(f_{D,J,n,N,m,\tau}(x)-y)^2d\rho
$$
with $N^*\sim (n^*)^{d-1}$. Due to \eqref{equality},
we get
$$
       n^*=\arg\min_{n\in\Xi}\|f_{D,J,n,N,m,\tau }-f_\rho\|^2_\rho
$$
and  $N^*\sim (n^*)^{d-1}$. According to the
definition of $f_{\tilde{D}_1,J,n^*,N^*,\tau,\varepsilon}$ and Theorem
\ref{Theorem:gene-deep},  we have
\begin{eqnarray}\label{cv0}
                     \mathbf E(\|f_{\tilde{D}_1,J,n^*,N^*,m,\tau}-f_\rho\|_\rho^2)
                     \leq
                    \tilde{C}_2  (J+D_{\rho_X}^2) |D|^{\frac{-2r-d+1}{2r+2d-1}}\log |D|,
\end{eqnarray}
where $\tilde{C}_2$ is a constant depending only on $M,d,r$, and $f_\rho$.
For $(x_i,y_i)\in \tilde{D}_2$, let us define the random variables
$$
         \eta_i^n=(f_{\tilde{D}_1,J,n,N,m,\tau}(x_i)-y_i)^2-(f_\rho(x_i)-y_i)^2
$$
with $N\sim n^{d-1}$. Clearly, $|\eta_i^n|\leq 8M^2$ almost surely,
$$
      \mathbf E[\eta_i^n]=\|f_{\tilde{D}_1,J,n,N,m,\tau}-
      f_\rho\|_\rho^2,
$$
and
\begin{eqnarray*}
     &&\mathbf E[(\eta_i^n)^2]=\int_Z(f_{\tilde{D}_1,J,n,N,m,\tau}(x)
    -f_\rho(x))^2\\
    &\times&
    (f_{\tilde{D}_1,J,n,N,m,\tau}(x)+f_\rho(x)-2y)^2d\rho\leq
    16M^2
           \mathbf E[\eta_i^n].
\end{eqnarray*}
Hence, using Lemma \ref{Lemma: cross-validation} with $a=1$,
$\eta_i=\eta_i^n$, $\mu= \mathbf  E[\eta_i^n]$, $B=8M^2$, and $\sigma^2\leq
16M^2\mu$, we obtain for all $n\in\Xi$, with probability greater
than
$$
          1-|\Xi|\exp\{-C|\tilde{D}_2|\epsilon\}=1-\exp\left\{-C|\tilde{D}_2|\epsilon+(d-1)\log |D|\right\},
$$
there holds
\begin{equation}\label{cv1}
       \frac1{|\tilde{D}_2|}\sum_{i=1}^{|\tilde{D}_2|}\eta_i^n\leq C \mathbf  E[\eta_i^n]+
       \epsilon
\end{equation}
and
\begin{equation}\label{cv2}
         \mathbf  E[\eta_i^n]\leq
          C\frac1{|\tilde{D}_2|}\sum_{i=1}^{|\tilde{D}_2|}\eta_i^n+\epsilon.
\end{equation}
Therefore, for arbitrary   $\epsilon\geq
\mathbf E\left[\|f_{\tilde{D}_1,J,n^*,N^*,m,\tau}- f_\rho\|_\rho^2\right]$, with confidence
at least
$$
         1-5\exp\left\{-C|D|\epsilon\right\},
$$
there holds
\begin{eqnarray*}
      &&  \mathbf E\left[\|f_{\tilde{D}_1,J,\hat{n},\hat{N},m,\tau}-
      f_\rho\|_\rho^2\right]
      =
          \mathbf E\left[
        E[\xi_i^{\hat{n}}]\right]\\
      &\leq&
       \mathbf  E\left[\frac{C}{m_2}\sum_{i=1}^{m_2}
      \left(\left(f_{\tilde{D}_1,J,\hat{n},\hat{N},m,\tau}(x_i)-y_i\right)^2\right.\right.\\
      &-&\left.\left.
      \left(f_\rho(x_i)-y_i\right)^2\right)\right]+\epsilon\\
           &\leq&
              \mathbf E\left[\frac{C}{|\tilde{D}_2|}\sum_{i=1}^{|\tilde{D}_2|}\left(\left(f_{\tilde{D}_1,J,n^*,N^*,m,\tau}(x_i)-y_i\right)^2\right.\right.\\
            &-&\left.\left.\left(f_\rho(x_i)-y_i\right)^2\right)\right]+\epsilon
           \leq
           C \mathbf  E[\eta_i^{n^*}]+2\epsilon\\
           &=&
             \mathbf E\left[\|f_{\tilde{D}_1,J,n^*,N^*,m,\tau}- f_\rho\|_\rho^2\right]+2\epsilon
            \leq 3\epsilon,
\end{eqnarray*}
where the first inequality is deduced from (\ref{cv2}), the second
inequality is according to the definition of $\hat{n}$, and the third one
is based on (\ref{cv1}). Therefore
\begin{eqnarray*}
        && \mathbf E\left[\|f_{\tilde{D}_1,J,\hat{n},\hat{N},m,\tau}- f_\rho\|_\rho^2\right]\\
         &=&
       \int_0^\infty\mathbf
       P\left\{ \mathbf  E \left[\|f_{\tilde{D}_1,J,\hat{n},\hat{N},m,\tau}- f_\rho\|_\rho^2\right]>\epsilon\right\} d\epsilon\\
            &\leq&
             \mathbf  E\left[\|f_{\tilde{D}_1,J,n^*,N^*,m,\tau}- f_\rho\|_\rho^2\right]
            +
            5\int_0^\infty \exp\{-C_1|D|\epsilon\}d\epsilon\\
            &\leq&
            \tilde{C}_3  (J+D_{\rho_X}^2) |D|^{-\frac{-2r-d+1}{2r+2d-1}}\log |D|,
\end{eqnarray*}
where the last inequality is
based on (\ref{cv0}).
 This finishes the proof of Theorem \ref{THEOREM: adaption selection}.
\end{proof}

% \section{Conclusion}\label{Sec.Conclusion}
% In this paper, we developed a novel component-based sketching scheme to equip deep nets for specific tasks, which successfully circumvents the saturation of shallow nets and avoids strict restrictions on data distributions for the training of neuron-based over-parameterized deep nets. From a methodological perspective, we employed a refined combination of deep nets components with specific efficacy to construct the sketching basis, transforming the deep net training into a linear empirical risk minimization problem. This significantly reduces computational costs and avoids intricate convergence analysis for iterative algorithms. From a theoretical standpoint, we established almost optimal rates for the span of the skectching basis in approximating saturated functions for shallow nets. We also showed that the proposed component-based sketching scheme achieves almost optimal generalization errors within the framework of statistical learning theory. Addtionally, we proved that cross-validation approaches for determining the hyper-parameters of the proposed deep nets maintain the same almost optimal generalization error rates. From an applied perspective, we demonstrate the efficacy of the proposed component-based sketching scheme for deep nets by applying it to various toy simulations and real-world applications. The numerical results verify our theoretical assertions and show the power of the proposed component-based sketching scheme in practical applications.

\bibliographystyle{IEEEtran}
\bibliography{deep-net}

\end{document}